\crefname{hypothesis}{Hypothesis}{Hypotheses}
\title{Alternating linear scheme in a Bayesian framework \\for low-rank tensor approximation}
\author{Clara Menzen\footnotemark[1] \and Manon Kok\footnotemark[1] \and Kim Batselier\thanks{Delft Center for Systems and Control, TU Delft, the Netherlands (\email{c.m.menzen@tudelft.nl}, \email{m.kok-1@tudelft.nl}, \email{k.batselier@tudelft.nl}}).}
\newcommand*{\addFileDependency}[1]{
  \typeout{(#1)}
  \@addtofilelist{#1}
  \IfFileExists{#1}{}{\typeout{No file #1.}}
}
\newcommand{\y}{\mathbf{y}}
\newcommand{\Yt}{\mathbfcal{Y}}
\newcommand{\g}{\mathbf{g}}
\newcommand{\G}{\mathbf{G}}
\newcommand{\Gt}{\mathbfcal{G}}
\newcommand{\U}{\mathbf{U}}
\newcommand{\Q}{\mathbf{Q}}
\newcommand{\q}{\mathbf{q}}
\newcommand{\R}{\mathbf{R}}
\newcommand{\e}{\mathbf{e}}
\newcommand{\m}{\mathbf{m}}
\newcommand{\x}{\mathbf{x}}
\newcommand{\h}{\mathbf{h}}
\newcommand{\cov}{\mathbf{P}}
\newcommand{\id}{\mathbf{I}}
\DeclareMathAlphabet\mathbfcal{OMS}{cmsy}{b}{n}
\definecolor{blue}{RGB}{3, 128, 149}
\definecolor{green}{RGB}{59, 151, 52}
\definecolor{red}{RGB}{248, 60, 93}
\definecolor{yellow}{RGB}{247, 179, 51}
\definecolor{lightred}{RGB}{253, 223, 223}
\definecolor{lightblue}{RGB}{222, 243, 253}
\definecolor{lightgreen}{RGB}{191, 216, 213}
\definecolor{lightyellow}{RGB}{246, 234, 197}
\pgfplotsset{compat=1.14}
\pgfplotsset{compat=newest}
\pgfplotsset{plot coordinates/math parser=false}
\newlength\fheight
\newlength\fwidth
\newlength\mylen
\def\@cite#1#2{{\normalfont[{\mdseries#1\if@tempswa , #2\fi}]}}
\begin{document}

\maketitle

\begin{abstract}
  Multiway data often naturally occurs in a tensorial format which can be approximately represented by a low-rank tensor decomposition. This is useful because complexity can be significantly reduced and the treatment of large-scale data sets can be facilitated. In this paper, we find a low-rank representation for a given tensor by solving a Bayesian inference problem. This is achieved by dividing the overall inference problem into sub-problems where we sequentially infer the posterior distribution of one tensor decomposition component at a time. This leads to a probabilistic interpretation of the well-known iterative algorithm alternating linear scheme (ALS). In this way, the consideration of measurement noise is enabled, as well as the incorporation of application-specific prior knowledge and the uncertainty quantification of the low-rank tensor estimate. To compute the low-rank tensor estimate from the posterior distributions of the tensor decomposition components, we present an algorithm that performs the unscented transform in tensor train format. 
\end{abstract}

\begin{keywords}
  Low-rank approximation, alternating linear scheme, Bayesian inference, tensor decomposition, tensor train.
\end{keywords}

\begin{AMS}
  15A69, 93E24, 15A23, 90C06, 62C10
\end{AMS}

\section{Introduction}
\label{sec:intro}
Low-rank approximations of multidimensional arrays, also called tensors, have become a central tool in solving large-scale problems. The numerous applications include machine learning (e.g.\ tensor completion \cite{Song2019,Zhao2016,Grasedyck2015}, kernel methods \cite{Signoretto2011a,Chen2019} and deep learning \cite{Cohen2016,Novikov2015}), signal processing \cite{Sidiropoulos2017,Caiafa2015}, probabilistic modeling \cite{Izmailov2018a,Zhao2013}, non-linear system identification \cite{Favier2012,Batselier2017} and solving linear systems \cite{Oseledets2012,Dolgov2014}. An extensive overview of applications can be found, e.g., in \cite{Cichocki2017}.\\

In many applications, it is possible to represent a tensor with a low-rank approximation $\Yt_\mathrm{lr}$, without losing the most meaningful information \cite{Cichocki2016}. In the presence of uncorrelated noise $\mathbfcal{E}$, however, the tensor representing the data $\Yt$, loses the low-rank structure. The data tensor $\Yt\in\mathbb{R}^{I_1\times I_2 \times \dots \times I_N}$ can be modeled as
\begin{equation}
\Yt=\Yt_\mathrm{lr}+\mathbfcal{E},\qquad \operatorname{vec}(\mathbfcal{E}) \sim \mathcal{N}(\mathbf{0},\sigma^2\id),
\label{eq:lrnoisy}
\end{equation}
with the vectorized noise being modeled as Gaussian with zero mean and a variance of $\sigma^2$ and $\id$ denoting the identity matrix, which in this case is of size $I_1I_2\dots I_N\times I_1I_2\dots I_N$. In this work, we solve a Bayesian inference problem to seek a low-rank approximation of an observed noisy tensor $\Yt$ that corresponds to the underlying tensor~$\Yt_\mathrm{lr}$. A low-rank approximation can be found by using a tensor decomposition (TD). Examples of TDs are the CANDECOMP/PARAFAC (CP) decomposition \cite{Carroll1970,Harshman1970}, the Tucker decomposition \cite{Tucker1966} and the tensor train (TT) decomposition \cite{Oseledets2011}. In general, TDs solve an optimization problem of the form
\begin{equation}
\min_\Gt ||\Yt-\Gt||,
\label{eq:optproblem}
\end{equation}
where $\Yt$ is the measured, noisy tensor and $\Gt$ is a low-rank tensor decomposition. There exist multiple methods to find a decomposition for a given tensor. The approach that we are looking at in this paper is the well-known iterative method alternating linear scheme (ALS). The ALS has been studied extensively and has successfully been applied to find low-rank tensor decompositions. The ALS for the CP decomposition is described in \cite{Kolda2009,Comon2009}, the Tucker decomposition is also treated in \cite{Kolda2009} and the ALS for the TT decomposition is studied in \cite{Rohwedder2012,Rohwedder2013}. The ALS optimizes the sought tensor on a manifold with fixed ranks \cite[p.\ 1136]{Rohwedder2013}. Imposing the low-rank rank constraint is therefore easy to implement by choosing the ranks in advance. \\

The CP, Tucker and TT decomposition are all multilinear functions of all the TD components. This means that by assuming all TD components except the $n$th to be known, the tensor becomes a linear expression in the $n$th component \cite[p.\ 4]{Comon2009}. In the ALS all TD components are updated sequentially by making use of the TD's multilinearity. Each update step requires to solve a linear least squares problem given by
\begin{equation}
\min_{\g_n} ||\y - \U_{\setminus n} \g_n ||_\text{F}, \label{eq:mingn}
\end{equation}
where $\y\in\mathbb{R}^{I_1I_2\dots I_N\times1}$ and $\g_n\in\mathbb{R}^{K\times1}$ denote the vectorization of $\Yt$ and $\Gt_n$, respectively, $K$ being the number of elements in the $n$th TD component. The matrix $\U_{\setminus n} \in \mathbb{R}^{J \times K}$ is a function off all TD components except the $n$th, where $J$ is the number of elements in $\y$, and $||\cdot||_\text{F}$ denotes the Frobenius norm. \\

A drawback of the ALS is that it does not explicitly model the measurement noise~$\mathbfcal{E}$, which in real-life applications is usually present. In this work, we model the noise by approaching the tensor decomposition in a Bayesian framework, treating all components as probability distributions. In this way, finding a low-rank TD approximation can be solved as a Bayesian inference problem: given the prior distributions  of the TD components $p({\g_i})$ and the measurements $\y$, the posterior distribution $p(\{\g_i\}\mid \y)$ can be found by applying Bayes' rule
\begin{align}
p\left(\{\g_i\}\mid \y\right)
= \frac{\overbrace{p(\y|\{\g_i\})}^{\text{likelihood}}\overbrace{p(\{\g_i\})}^{\text{prior}}}{\underbrace{p(\y)}_{\text{evidence}}},
\label{eq:Bayes}
\end{align}
where $\{\g_i\}$ denotes the collection of all TD components $\g_i,\;\text{for}\; i=1,\dots,N$. We assume that the likelihood and prior are Gaussian and, likewise in the ALS, we apply a block coordinate descent \cite[p.\ 230]{Nocedal2006}, leading to a tractable inference, where the posterior density of the low-rank tensor estimate can be computed from \cref{eq:Bayes}. \\

Solving the low-rank tensor approximation problem in a Bayesian way has the following benefits. The assumptions on the measurement noise $\mathbfcal{E}$ are considered and the uncertainty of each tensor decomposition component $\g_n$ is quantified. Furthermore, prior knowledge can be explicitly taken into account and the resulting low-rank tensor estimate comes with a measure of uncertainty. We illustrate the benefits with numerical experiments.\\

Our main contribution is to approach the low-rank tensor approximation problem from a Bayesian perspective, treating all TD components as Gaussian random variables. This results in a probabilistic ALS algorithm. We ensure numerical stability by incorporating the orthogonalization step, present in the ALS algorithm for the TT decomposition, into the probabilistic framework. In addition, we propose an algorithm to approximate the mean and covariance of the low-rank tensor estimate's posterior density with the unscented transform in tensor train format. Our open-source MATLAB implementation can be found on \url{https://gitlab.tudelft.nl/cmmenzen/bayesian-als}.

\subsection*{Related Work}
Our work is related to inferring low-rank tensor decompositions with Bayesian methods for noisy continuous-valued multidimensional observations. While most literature considers either the CP or Tucker decomposition, our paper mainly focuses on the TT decomposition, but is also applicable to CP and Tucker. Also, in contrast to our paper, the related work mainly treats tensors with missing values. The main difference to the existing literature, however, are the modeling choices. While the existing work proposes different methods to approximate the inference of the TD components, our work allows us to perform tractable inference. This is mainly because we use the ALS, a block-coordinate descent method, to infer the TD components. Also, we assume that all TD components are Gaussian random variables and that they are all independent. Thus, our method is preferable when these assumptions can be made for a given application.

In \cite{Rai2015}, \cite{Rai2014} and \cite{Xiong2010} inference is performed with Gibbs sampling, using Gaussian priors for the columns of the CP decomposition's factor matrices. Variational Bayes is applied in \cite{Zhao2015} and \cite{Zhao2016}. The recovery of orthogonal factor matrices, optimizing on the Stiefel manifold with variational inference is treated by \cite{Cheng2017}. The Bayesian treatment of a low-rank Tucker decomposition for continuous data has been studied using variational inference \cite{Chu2009,Zhao2015a} and using Gibbs sampling \cite{Hoff2016}. Furthermore, an infinite Tucker decomposition based on a $t$-process, which is a kernel-based non-parametric Bayesian generalization of the low-rank Tucker decomposition, is proposed by \cite{Xu2015}. The first literature about the probabilistic treatment of the tensor train decomposition using von-Mises-Fisher priors on the orthogonal cores and variational approximation with evidence lower bound is introduced by \cite{Hinrich2019}. Recently, \cite{Hinrich2020} published the probabilistic tensor decomposition toolbox for MATLAB, providing inference with variational Bayes and with Gibbs sampling.

\section{Tensor basics and notation}
An $N$-way tensor $\Yt \in \mathbb{R}^{I_1\times I_2 \times \dots \times I_N}$ is a generalization of a vector or a matrix to higher dimensions, where $N$ is often referred to as the order of the tensor. We denote tensors by calligraphic, boldface, capital letters (e.g.\ $\Yt$) and matrices, vectors and scalars by boldface capital (e.g.\ $\mathbf{Y}$), boldface lowercase (e.g.\ $\y$) and italic lower case (e.g.\ $y$) letters, respectively. To facilitate the description and computation of tensors, we use a graphical notation as depicted in \cref{fig:Nwaytensor}. The nodes represent a scalar, a vector, a matrix and an $N$-way tensor and edges correspond to a specific index. The number of edges is equal to how many indices need to be specified to identify one element in the object, e.g.\ row and column index for matrices. An identity matrix is generally denoted by $\id$. Its size is either specified in the context or as a subscript. \\

Often it is easier to avoid working with the tensors directly, but rather with a matricized or vectorized version of them. Therefore, we revise some useful definitions. In this context, a mode of a tensor refers to a dimension of the tensor.
\begin{definition} [\textbf{mode-$n$-unfolding} {\cite[p.\ 459-460]{Kolda2009}}]
\label{def:modenunf}
The transformation of an $N$-way tensor into a matrix with respect to a specific mode is called the mode-$n$ unfolding. It is denoted by 
\begin{equation*}
    \mathbf{Y}_{(n)} \in \mathbb{R}^{I_n \times I_1\dots I_{n-1}I_{n+1} \dots I_N}.
\end{equation*}
\end{definition}

The vectorization is a special case of the unfolding, denoted by the operator name vec() and defined as

\begin{equation*}
    \operatorname{vec}(\Yt)= \y \in \mathbb{R}^{I_1I_2\dots I_N \times 1}.    
\end{equation*}

Tensors can be multiplied with matrices defined as follows.
\begin{definition}[\textbf{$n$-mode product} {\cite[p.\ 460]{Kolda2009}}]
The $n$-mode product is defined as the multiplication of a tensor $\mathbfcal{X}\in\mathbb{R}^{I_1\times \dots \times I_n\times \dots \times I_N}$ with a matrix $\mathbf{A}\in \mathbb{R}^{J \times I_n}$ in mode $n$, written as
\begin{equation*}
\mathbfcal{X}\times_n \mathbf{A} \in \mathbb{R}^{I_1 \times \dots \times I_{n-1} \times J \times I_{n+1} \times \dots \times I_N}.
\end{equation*}
Element-wise, the ($i_1,i_2,..,i_{n-1},j,i_{n+1},...,i_N$)-th entry of the result can be computed as
\begin{equation*}
\sum_{i_n=1}^{I_n} \mathbfcal{X}(i_1, i_2, \dots ,i_N) \mathbf{A}(j, i_n).
\end{equation*}
\end{definition}

\begin{definition}[\textbf{Kronecker product} {\cite[p.\ 461]{Kolda2009}}]
The Kronecker product of matrices $\mathbf{A} \in \mathbb{R}^{I\times J}$ and $\mathbf{B} \in \mathbb{R}^{K\times L}$ is denoted by $\mathbf{A} \otimes \mathbf{B}$. The result is a matrix of size $(KI) \times (LJ)$ and is defined by
\begin{equation*}
\begin{aligned}
\mathbf{A} \otimes \mathbf{B} &=\left[\begin{array}{cccc}
a_{11} \mathbf{B} & a_{12} \mathbf{B} & \cdots & a_{1 J} \mathbf{B} \\
a_{21} \mathbf{B} & a_{22} \mathbf{B} & \cdots & a_{2 J} \mathbf{B} \\
\vdots & \vdots & \ddots & \vdots \\
a_{I 1} \mathbf{B} & a_{I 2} \mathbf{B} & \cdots & a_{I J} \mathbf{B}
\end{array}\right].
\end{aligned}
\end{equation*}
\end{definition}

\begin{definition}[\textbf{Kathri-Rao product}
{\cite[p.\ 462]{Kolda2009}}]
The Khatri–Rao product of matrices $\mathbf{A}\in \mathbb{R}^{I\times K}$ and $\mathbf{B} \in \mathbb{R}^{J\times K}$ is denoted by $\mathbf{A} \odot \mathbf{B}$. The result is a matrix of size $(JI) \times K$ defined by
\begin{equation*}
\mathbf{A} \odot \mathbf{B}=\left[\begin{array}{llll}
\mathbf{a}_1 \otimes \mathbf{b}_1 & \mathbf{a}_2 \otimes \mathbf{b}_2 & \cdots & \mathbf{a}_K \otimes \mathbf{b}_K
\end{array}\right].
\end{equation*}
\end{definition}
The visual depictions of two important matrix operations are shown in \cref{fig:contraction}. On the left, a product between matrices $\mathbf{A}\in \mathbb{R}^{I\times K}$ and $\mathbf{B}\in \mathbb{R}^{K\times J}$ is shown, where the summation over the middle index $K$, also called contraction, is represented as an edge that connects both nodes. On the right, an outer product between matrices $\mathbf{A}\in \mathbb{R}^{I_1\times I_2}$ and $\mathbf{B}\in \mathbb{R}^{J_1\times J_2}$ is shown, where the dotted lines represent a rank-1 connection. The resulting matrix is the Kronecker product $\mathbf{A} \otimes \mathbf{B}\in \mathbb{R}^{I_1J_1\times I_2J_2}$.\\

\begin{figure}
\centering
\begin{tikzpicture}
\draw  [line width=0.3mm](-3.4,2) ellipse (0.4 and 0.4);
\draw  [line width=0.3mm](-2.2,2) ellipse (0.4 and 0.4);
\draw  [line width=0.3mm](-1,2) ellipse (0.4 and 0.4);
\draw  [line width=0.3mm](0.5,2) ellipse (0.4 and 0.4);
\node at (-3.4,2) {$y$};
\node at (-2.2,2) {$\mathbf{y}$};
\node at (-1,2) {$\mathbf{Y}$};
\node at (0.5,2) {$\mathbfcal{Y}$};
\draw [line width=0.3mm](-1.1,1.62) -- (-1.1,1);
\draw [line width=0.3mm](-2.2,1.6) -- (-2.2,1);
\draw [line width=0.3mm](-0.9,1.62) -- (-0.9,1);
\draw [line width=0.3mm](0.33,1.63) -- (0.1,1) node (v1) {};
\draw (v1);
\draw [line width=0.3mm](0.6,1.6) -- (0.7,1);
\draw [line width=0.3mm](0.2,1.7) -- (-0.1,1.1);
\draw [line width=0.3mm](0.78,1.72) -- (1.18,1.22);
\node at (0.46,1.2) {...};
\node at (0.9,1.2) {...};
\node at (1.4,1.1) {$I_N$};
\node at (0.88,0.85) {$I_n$};
\node at (-0.2,1) {$I_1$};
\node at (0.1,0.8) {$I_2$};
\node at (3.5,1.9) {$\mathbfcal{Y}\in \mathbb{R}^{I_1\times I_2\times ...\times I_n\times ...\times I_N}$};
\end{tikzpicture}
\caption{Visual depictions of a scalar, a vector, a matrix and an $N$-way tensor, where the nodes represent the object and the edges correspond to a specific index. The number of edges is equal to how many indices need to be specified to identify one element in the object, e.g.\ row and column index for matrices.}
\label{fig:Nwaytensor}
\end{figure}
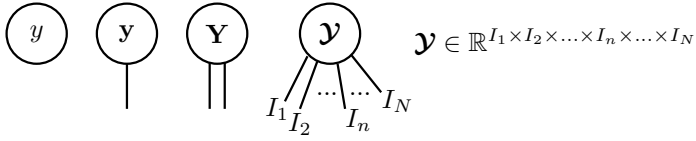

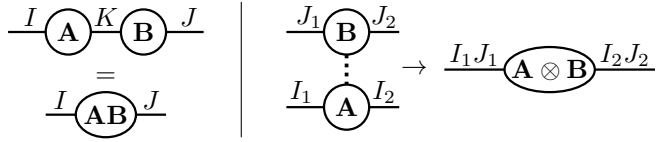
\begin{figure}
\centering
\begin{tikzpicture}

\draw [line width=0.3mm] (-2.9,2.7) ellipse (0.3 and 0.3);
\draw  [line width=0.3mm](-1.9,2.7) ellipse (0.3 and 0.3);
\draw [line width=0.3mm](-2.6,2.7) -- (-2.2,2.7);
\draw [line width=0.3mm](-3.2,2.7) -- (-3.7,2.7);
\draw [line width=0.3mm](-1.6,2.7) -- (-1.1,2.7);
\node at (-2.9,2.7) {$\mathbf{A}$};
\node at (-1.9,2.7) {$\mathbf{B}$};
\node at (-2.4,2.9) {$K$};

\draw  [line width=0.3mm](0.8,2.7) ellipse (0.3 and 0.3);
\draw  [line width=0.3mm](0.8,1.7) ellipse (0.3 and 0.3);
\draw [dotted, line width=0.5mm](0.8,2.4) -- (0.8,2);
\draw [line width=0.3mm](1.1,2.7) -- (1.5,2.7);
\node at (0.8,2.7) {$\mathbf{B}$};
\node at (0.8,1.7) {$\mathbf{A}$};

\draw [line width=0.3mm](0.5,2.7) -- (0,2.7);
\draw [line width=0.3mm](1.5,1.7) -- (1.1,1.7);
\draw [line width=0.3mm](0.5,1.7) -- (0,1.7);
\draw (-0.6,3.1) -- (-0.6,1.3);
\node at (1.7,2.2) {$\rightarrow$};
\draw  [line width=0.3mm](3.5,2.2) ellipse (0.6 and 0.3);
\node at (3.5,2.2) {$\mathbf{A}\otimes\mathbf{B}$};
\draw [line width=0.3mm](2.1,2.2) -- (2.9,2.2);
\draw [line width=0.3mm](4.1,2.2) -- (4.9,2.2);
\node at (0.3,2.9) {$J_1$};
\node at (1.3,2.9) {$J_2$};
\node at (0.2,1.9) {$I_1$};
\node at (1.3,1.9) {$I_2$};
\node at (2.5,2.4) {$I_1J_1$};
\node at (4.5,2.4) {$I_2J_2$};
\node at (-2.4,2.1) {=};
\draw  [line width=0.3mm](-2.4,1.6) ellipse (0.4 and 0.3);
\node at (-2.4,1.6) {$\mathbf{AB}$};
\draw [line width=0.3mm](-2,1.6) -- (-1.6,1.6);
\draw [line width=0.3mm](-2.8,1.6) -- (-3.2,1.6);
\node at (-3.4,2.9) {$I$};
\node at (-1.3,2.9) {$J$};
\node at (-3,1.8) {$I$};
\node at (-1.8,1.8) {$J$};
\end{tikzpicture}
\caption{Left: Visual depictions of an index contraction between matrices $\mathbf{A}$ and $\mathbf{B}$. Right: Visual depictions of an outer product between matrices $\mathbf{A}$ and $\mathbf{B}$. The dotted line represents a summation over a rank-1 one connection. The resulting matrix is computed as the Kronecker product~ $\mathbf{A}\otimes\mathbf{B}$.}
\label{fig:contraction}
\end{figure}

A tensor can be expressed as a function of simpler tensors that form a tensor decomposition. An extensive review about TDs can be found in \cite{Kolda2009}. The most notable are the CP decomposition, the Tucker decomposition, and the TT decomposition.

\begin{definition}[\textbf{CP decomposition} \cite{Carroll1970,Harshman1970}] 
\label{def:cpd}
The CP decomposition consists of a set of matrices $\G_i\in \mathbb{R}^{I_i \times R}$, $i=1,..,N$, called factor matrices and a weight vector $\boldsymbol\lambda\in\mathbb{R}^{R\times 1}$  that represent a given $N$-way tensor $\Yt$. Element-wise, the ($i_1,i_2,..,i_N$)-th entry of $\Yt$ can be computed as
\begin{equation*}
\sum_{r=1}^{R} \boldsymbol{\lambda}(r) \G_1(i_1, r) \cdots \G_N(i_N, r),
\end{equation*}
where $R$ denotes the rank of the decomposition.
\end{definition}

\begin{definition}[\textbf{Tucker decomposition} \cite{Tucker1966}]
\label{def:tucker} 
The Tucker decomposition consists of an $N$-way tensor $\mathbfcal{C}\in\mathbb{R}^{R_1\times\dots \times R_N}$, called core tensor, and a set of matrices $\G_i\in \mathbb{R}^{I_i \times R_i}$, $i=1,..,N$, called factor matrices, that represent a given $N$-way tensor $\Yt$. Element-wise, the ($i_1,i_2,..,i_N$)-th entry of $\Yt$ can be computed as
\begin{equation*}
\sum_{r_{1}=1}^{R_1} \cdots \sum_{r_N=1}^{R_{N}} \mathbfcal{C}(r_1, \ldots, r_N) \G_1(i_1, r_1) \cdots \G_N(i_N, r_N),
\end{equation*}
where $R_1,\dots R_N$ denote the ranks of the decomposition. The factor matrices can be orthogonal, such that the Frobenius norm of the entire tensor is contained in the core tensor.
\end{definition}

\begin{definition}[\textbf{The TT decomposition} \cite{Oseledets2011}]
\label{def:tt}
The tensor train decomposition consists of a set of three-way tensors $\Gt_i\in \mathbb{R}^{R_i\times I_i \times R_{i+1}}$, $i=1,..,N$ called TT-cores, that represent a given $N$-way tensor $\Yt$. Element-wise, the ($i_1,i_2,..,i_N$)-th entry of $\Yt$ can be computed as
\begin{equation*}
\sum_{r_1=1}^{R_1}\sum_{r_2=1}^{R_2} \cdots \sum_{r_{N+1}=1}^{R_{N+1}} \Gt_1(r_1,i_1,r_2)\Gt_2(r_2,i_2,r_3)\cdots\Gt_N(r_N,i_N,r_{N+1}),
\end{equation*}
where $R_1,\dots,R_{N+1}$ denote the ranks of the TT-cores and by definition $R_1= R_{N+1}=1$.
\end{definition}
If the tensor is only approximately represented by a TD, then the ranks determine the accuracy of the approximation.\\

As mentioned in \cref{sec:intro}, to formulate the linear least squares problem for one update of the ALS, the TD's property of multi-linearity is exploited and it is expressed as $\y = \U_{\setminus n}\g_n$, with $\U_{\setminus n} \in \mathbb{R}^{J \times K}$ and $\g_n\in \mathbb{R}^{K\times 1}$, where $J$ and $K$ are the number of elements of  $\Yt$ and $\Gt_n$, respectively. The following three examples describe how $\U_{\setminus n}$ is built for the CP decomposition, the Tucker decomposition, and the TT decomposition. 

\begin{example}
If a tensor is represented in terms of a CP decomposition, the matrix $\U_{\setminus n}$ can be written as
\begin{equation}
    \U_{\setminus n} = \left(\G_N \odot\dots\odot\G_{n+1}\odot\G_{n-1}\odot\dots\odot \G_1 \right)\otimes \id_{I_n}
    \label{eq:U_cp}
\end{equation}
Note that $\U_{\setminus n}$ is of size $I_nI_1\dots I_{n-1}I_{n+1}\dots I_N\times I_nR$, so the first dimension needs to be permuted in order to match $\y\in\mathbb{R}^{I_1I_2\dots I_N}$. The weight vector $\boldsymbol\lambda$ is absorbed into the factor matrix that is being updated. After each update, the columns of $\G_n$ are normalized and the norms are stored in $\boldsymbol\lambda$. The CP-ALS algorithm can be found in \cite[p.\ 471]{Kolda2009}.
\end{example}

\begin{example}[\textbf{Tucker decomposition}]
If a tensor is represented in terms of a Tucker decomposition, the matrix $\U_{\setminus n}$ can be written as
\begin{equation}
    \U_{\setminus n} = \left[\left( \G_N \otimes\dots\otimes\G_{n+1}\otimes\G_{n-1}\otimes\dots \G_1 \right) \mathbf{C}_{(n)}^\top\right]\otimes \id_{I_n}.
    \label{eq:U_tucker}
\end{equation}
Note that $\U_{\setminus n}$ is of size $I_nI_1\dots I_{n-1}I_{n+1}\dots I_N\times I_nR_n$, so the first dimension needs to be permuted in order to match $\y\in\mathbb{R}^{I_1I_2\dots I_N}$. After every update the core tensor is recomputed by solving
\begin{equation*}
    \y = (\G_N\otimes\dots\otimes\G_1)\operatorname{vec}(\mathbfcal{C}).
\end{equation*}
\end{example}

\begin{example}
\label{ex:ttd}
If the tensor is represented in terms of a TT decomposition, the matrix $\U_{\setminus n}$ can be written as 
\begin{equation}
\U_{\setminus n} = \Gt_{i>n}\otimes \id_{I_n} \otimes \Gt_{i<n}^\top \in\mathbb{R}^{I_1I_2\dots I_N\times R_nI_nR_{n+1}},
\label{eq:multilin}
\end{equation}
where $\Gt_{i<n}$ ($\Gt_{i>n}$) denotes a tensor obtained by contracting the TD components, left (right) of the $n$th core.
\end{example}   

From here on, we will focus on the tensor train decomposition. We, therefore, review some of the main concepts. A tensor train can be represented by a diagram with nodes as the TT-cores and the edges as the modes of the approximated tensor. Connected edges are the summation over the ranks between two cores (\cref{fig:TT_gen}). 
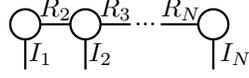
\begin{figure}
\centering
\begin{tikzpicture}
\draw  [line width=0.3mm](-3.1,2) ellipse (0.2 and 0.2);
\draw  [line width=0.3mm](-2.3,2) ellipse (0.2 and 0.2);
\draw  [line width=0.3mm](-0.6,2) ellipse (0.2 and 0.2);
\node at (-2.7,2.2) {$R_2$};
\node at (-2.9,1.6) {$I_1$};
\node at (-1.9,2.2) {$R_3$};
\node at (-2.1,1.6) {$I_2$};
\node at (-0.3,1.6) {$I_N$};
\node at (-1.03,2.2) {$R_{N}$};
\node at (-1.5,2) {...};
\draw [line width=0.3mm](-3.1,1.8) -- (-3.1,1.4);
\draw [line width=0.3mm](-2.3,1.8) -- (-2.3,1.4);
\draw [line width=0.3mm](-0.6,1.8) -- (-0.6,1.4);
\draw [line width=0.3mm](-0.8,2) -- (-1.3,2);
\draw [line width=0.3mm](-2.9,2) -- (-2.5,2);
\draw [line width=0.3mm](-2.1,2) -- (-1.7,2);
\end{tikzpicture}
\caption{Visual depiction of a tensor train decomposition with $N$ TT-cores.}
\label{fig:TT_gen}
\end{figure}
To introduce a notion of orthonormality for TT-cores, a special case of \cref{def:modenunf} is used, creating unfoldings of the TT-cores defined as follows.

\begin{definition}[\textbf{Left- and right-unfolding} {\cite[p.\ A689]{Rohwedder2012}}]
 The left-unfolding $\G^\mathrm{L}_n$ and right-unfolding $\G^\mathrm{R}_n$ of a TT-core $\Gt_n$ are the unfoldings of a core with respect to the first and last mode, respectively (\cref{fig_lr_unf}). Please note that the definition by \cite{Rohwedder2012} of the right-unfolding is the transposed version of this definition.
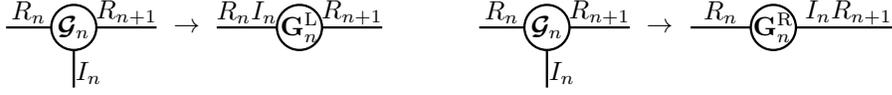
\begin{figure}
\centering
\begin{tikzpicture}
\draw  [line width=0.3mm](-2.6,3.6) ellipse (0.3 and 0.3);
\draw  [line width=0.3mm](3.69,3.6) ellipse (0.3 and 0.3);
\draw  [line width=0.3mm](0.4,3.6) ellipse (0.3 and 0.3);
\draw  [line width=0.3mm](6.7,3.6) ellipse (0.3 and 0.3);
\draw [line width=0.3mm](-2.9,3.6) -- (-3.5,3.6);
\draw [line width=0.3mm](-2.3,3.6) -- (-1.5,3.6);
\draw [line width=0.3mm](-2.6,3.3) -- (-2.6,2.8);
\draw [line width=0.3mm](3.39,3.6) -- (2.79,3.6);
\draw [line width=0.3mm](3.99,3.6) -- (4.79,3.6);
\draw [line width=0.3mm](3.69,3.3) -- (3.69,2.8);
\draw [line width=0.3mm](0.7,3.6) -- (1.5,3.6);
\draw [line width=0.3mm](0.1,3.6) -- (-0.7,3.6);
\draw [line width=0.3mm](6.4,3.6) -- (5.6,3.6);
\draw [line width=0.3mm](7,3.6) -- (8.3,3.6);
\node at (-2.6,3.6) {$\mathbfcal{G}_n$};
\node at (-3.2,3.8) {$R_n$};
\node at (-1.9,3.8) {$R_{n+1}$};
\node at (-2.4,3) {$I_n$};
\node at (3.69,3.6) {$\mathbfcal{G}_n$};
\node at (3.09,3.8) {$R_n$};
\node at (4.39,3.8) {$R_{n+1}$};
\node at (3.89,3) {$I_n$};
\node at (0.4,3.6) {$\mathbf{G}^\text{L}_n$};
\node at (6.7,3.6) {$\mathbf{G}^\text{R}_n$};
\node at (-0.3,3.8) {$R_n I_n$};
\node at (1.1,3.8) {$R_{n+1}$};
\node at (6,3.8) {$R_n$};
\node at (7.7,3.8) {$I_nR_{n+1}$};
\node at (-1.1,3.6) {$\rightarrow$};
\node at (5.19,3.6) {$\rightarrow$};
\end{tikzpicture}
\caption{Left: Visual depiction of a left-unfolding of a TT-core. Right: Right-unfolding of a TT-core.}
\label{fig_lr_unf}
\end{figure}
\end{definition}

\begin{definition}[\textbf{Left-orthogonal and right-orthogonal} {\cite[p.\ A689]{Rohwedder2012}}]
A TT-core $\Gt_n$ is called left-orthogonal, if the left-unfolding $\G^\mathrm{L}_n$ satisfies
\begin{equation*}
\left(\G_n^\mathrm{L}\right)^\top\G_n^\mathrm{L} = \id_{R_{n+1}}.
\end{equation*}
Analogously, a TT-core $\Gt_n$ is called right-orthogonal, if the right-unfolding $\G^\mathrm{R}_n$ satisfies
\begin{equation*}
\G_n^\mathrm{R} \left(\G_n^\mathrm{R}\right)^\top = \id_{R_n}.
\end{equation*}
\end{definition}

\begin{definition}[\textbf{site-$n$-mixed-canonical form} {\cite[p.\ 113]{Schollwock2011}}]
A tensor train is in site-$n$-mixed-canonical form if the TT-cores $\{\Gt_i\}_{i<n}$ are left-orthogonal and the TT-cores $\{\Gt_i\}_{i>n}$ are right-orthogonal. The $n$th TT-core is not orthogonal and it can be easily shown that
\begin{equation*}
    ||\Yt||_\mathrm{F}=||\Gt_n||_\mathrm{F}.
\end{equation*}
\end{definition}
\Cref{fig:TT_can} depicts different site-$n$-mixed-canonical forms for an exemplary three-way tensor train. On the left (right) figure, the Frobenius norm is contained in the first (last) core and all other cores are right- (left-) orthogonal, represented by the diagonal in the node.\\
\begin{figure}
\centering
\begin{tikzpicture}
\draw  [line width=0.3mm](-3.1,2) ellipse (0.2 and 0.2);
\draw  [line width=0.3mm](-2.5,2) ellipse (0.2 and 0.2);
\node (v1) at (-3.02,2) {};
\node (v2) at (-2.58,2) {};
\draw  [line width=0.3mm](v1) edge (v2);
\draw  [line width=0.3mm](-1.9,2) ellipse (0.2 and 0.2);
\node (v3) at (-2.42,2) {};
\node (v4) at (-1.98,2) {};
\draw [line width=0.3mm] (v3) edge (v4);
\node (v5) at (-3.1,1.92) {};
\node (v6) at (-3.1,1.4) {};
\node (v7) at (-2.5,1.92) {};
\node (v8) at (-2.5,1.4) {};
\node (v9) at (-1.9,1.92) {};
\node (v10) at (-1.9,1.4) {};
\draw  [line width=0.3mm](v5) edge (v6);
\draw  [line width=0.3mm](v7) edge (v8);
\draw  [line width=0.3mm](v9) edge (v10);
\node (v11) at (-2.24,1.74) {};
\node (v12) at (-2.76,2.26) {};
\draw  [line width=0.3mm](v11) edge (v12);
\node (v13) at (-1.64,1.74) {};
\node (v14) at (-2.16,2.26) {};
\draw  [line width=0.3mm](v13) edge (v14);
\end{tikzpicture}
\begin{tikzpicture}
\draw  [line width=0.3mm](-3.1,2) ellipse (0.2 and 0.2);
\draw  [line width=0.3mm](-2.5,2) ellipse (0.2 and 0.2);
\node (v1) at (-3.02,2) {};
\node (v2) at (-2.58,2) {};
\draw  [line width=0.3mm](v1) edge (v2);
\draw [line width=0.3mm] (-1.9,2) ellipse (0.2 and 0.2);
\node (v3) at (-2.42,2) {};
\node (v4) at (-1.98,2) {};
\draw [line width=0.3mm] (v3) edge (v4);
\node (v5) at (-3.1,1.92) {};
\node (v6) at (-3.1,1.4) {};
\node (v7) at (-2.5,1.92) {};
\node (v8) at (-2.5,1.4) {};
\node (v9) at (-1.9,1.92) {};
\node (v10) at (-1.9,1.4) {};
\draw  [line width=0.3mm](v5) edge (v6);
\draw [line width=0.3mm] (v7) edge (v8);
\draw  [line width=0.3mm](v9) edge (v10);
\node (v11) at (-3.37,1.74) {};
\node (v12) at (-2.83,2.27) {};
\draw  [line width=0.3mm](v11) edge (v12);
\node (v13) at (-1.64,1.74) {};
\node (v14) at (-2.16,2.26) {};
\draw [line width=0.3mm] (v13) edge (v14);
\end{tikzpicture}
\begin{tikzpicture}
\draw  [line width=0.3mm](-3.1,2) ellipse (0.2 and 0.2);
\draw  [line width=0.3mm](-2.5,2) ellipse (0.2 and 0.2);
\node (v1) at (-3.01,2) {};
\node (v2) at (-2.59,2) {};
\draw [line width=0.3mm] (v1) edge (v2);
\draw  [line width=0.3mm](-1.9,2) ellipse (0.2 and 0.2);
\node (v3) at (-2.41,2) {};
\node (v4) at (-1.99,2) {};
\draw [line width=0.3mm] (v3) edge (v4);
\node (v5) at (-3.1,1.91) {};
\node (v6) at (-3.1,1.4) {};
\node (v7) at (-2.5,1.91) {};
\node (v8) at (-2.5,1.4) {};
\node (v9) at (-1.9,1.91) {};
\node (v10) at (-1.9,1.4) {};
\draw [line width=0.3mm] (v5) edge (v6);
\draw [line width=0.3mm] (v7) edge (v8);
\draw  [line width=0.3mm](v9) edge (v10);
\node (v11) at (-3.37,1.74) {};
\node (v12) at (-2.83,2.27) {};
\draw [line width=0.3mm] (v11) edge (v12);
\node (v13) at (-2.76,1.73) {};
\node (v14) at (-2.23,2.26) {};
\draw [line width=0.3mm] (v13) edge (v14);
\end{tikzpicture}
\caption{Visual depiction of tensor trains with three TT-cores in site-$n$-mixed-canonical form. Left: Norm in the first core and other cores left-orthogonal. Middle: Norm in the second core, first and last cores are left- and right-orthogonal, respectively. Right: Norm in the last core and other cores right-orthogonal.}
\label{fig:TT_can}
\end{figure}
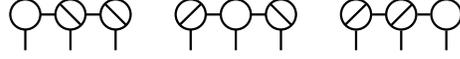
     
A special case of the TT decomposition format is the tensor train matrix (\cref{fig:TTm}), which represents a large matrix in TT format. Tensor train matrices arise in the context of the unscented transform in \cref{sec:UT}.
\begin{definition}[\textbf{Tensor train matrix} \cite{Oseledets2010}] \label{def:TTm} A tensor train matrix (TTm) consists of a set of four-way tensors $\Gt_i\in\mathbb{R}^{R_i\times I_i \times J_i \times R_{i+1}}$, $i=1,\dots,N$ with $R_1=R_{N+1}=1$ that represents a matrix $\mathbf{A}\in\mathbb{R}^{I \times J}$. The row and column indices are split into multiple row indices $I = I_1,\dots,I_N$ and column indices $J = J_1,\dots,J_N$, respectively and the matrix is transformed into a $2N$-way tensor $\Yt_\mathbf{A}\in \mathbb{R}^{I_1\times J_1 \times \dots \times I_N \times J_N}$. Element-wise, the ($i_1,j_1,i_2,j_2,\dots,i_N,j_N$)-th entry of $\Yt_\mathbf{A}$ is computed as
\begin{equation*}
    \sum_{r_1=1}^{R_1} \sum_{r_2=1}^{R_2} \dots \sum_{r_{N+1}=1}^{R_{N+1}} \Gt_1(r_1,i_1,j_1,r_2)\Gt_2(r_2,i_2,j_2,r_3)\cdots\Gt_N(r_N,i_N,j_N,r_{N+1}).
\end{equation*}
\end{definition}
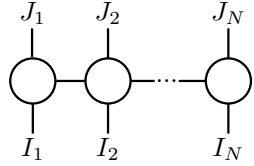
\begin{figure}
    \centering
    \begin{tikzpicture}
\draw [line width=0.3mm] (4.6,2.8) ellipse (0.3 and 0.3);
\draw [line width=0.3mm] (5.6,2.8) ellipse (0.3 and 0.3);
\draw [line width=0.3mm] (7.2,2.8) ellipse (0.3 and 0.3);
\draw [line width=0.3mm](4.91,2.8) -- (5.29,2.8);
\draw [line width=0.3mm](6.54,2.8) -- (6.9,2.8);
\draw [line width=0.3mm](4.6,2.5) -- (4.6,2.1);
\draw [line width=0.3mm](4.59,3.5) -- (4.59,3.12);

\draw [line width=0.3mm](5.6,2.5) -- (5.6,2.1);
\draw [line width=0.3mm](5.6,3.5) -- (5.6,3.1);
\draw [line width=0.3mm](7.2,2.5) -- (7.2,2.1);
\draw [line width=0.3mm](7.2,3.5) -- (7.2,3.1);
\draw [line width=0.3mm](5.91,2.8) -- (6.22,2.8);
\node at (4.6,1.9) {$I_1$};
\node at (4.6,3.7) {$J_1$};
\node at (5.6,1.9) {$I_2$};
\node at (5.6,3.7) {$J_2$};
\node at (6.38,2.8) {...};
\node at (7.2,1.9) {$I_N$};
\node at (7.2,3.7) {$J_N$};
\end{tikzpicture}
    \caption{Visual depiction of a tensor train matrix. The row indices $I_1,\dots, I_N$ point downwards, and the column indices $J_1,\dots, J_N$ point upwards.}
    \label{fig:TTm}
\end{figure}
A TTm arises e.g.\ from an outer product between two vectors $\mathbf{a}$ and $\mathbf{b}$, which corresponds to computing the product of one vector with the transpose of the other. If vector $\mathbf{a}$ is represented by a TT with cores $\mathbfcal{A}_1,\dots,\mathbfcal{A}_N$, the resulting TTm is achieved by summing over a rank-1 connection between one of the TT-cores, e.g.\ the first, and vector $\mathbf{b}$ (\cref{fig:ttmops} top). This result is a special case of the general TTm, where only one of the TT-cores has a double index. This means that only the row index is very large and therefore split into multiple indices, while the column index is not split. If both vectors in the outer product are represented by tensor trains with cores $\mathbfcal{A}_1,\dots,\mathbfcal{A}_N$ and $\mathbfcal{B}_1,\dots,\mathbfcal{B}_N$, respectively, then each core is summed over a rank-1 connection with the core of the other TT's transpose (\cref{fig:ttmops} middle). All cores have then a row and column indices. The product of a matrix $\mathbf{C}$ in TTm format with cores $\mathbfcal{C}_1,\dots,\mathbfcal{C}_N$ with a vector $\mathbf{b}$ is computed by summing over the column index of one TTm-core, e.g.\ the first, and the row index of the vector (\cref{fig:ttmops} bottom).

\begin{figure}
    \centering
    \begin{tikzpicture}
\draw [line width=0.3mm] (-1.3,5.1) rectangle (-1.1,4.3);
\draw  [line width=0.3mm](-0.9,5.1) rectangle (-0.1,4.9);
\draw  [line width=0.3mm](0.6,5.1) rectangle (1.4,4.3);
\node at (0.3,4.7) {=};
\node at (-2.89,4.9) {\small Outer product of };
\node at (-2.79,4.5) {TT and vector};
\draw  [line width=0.3mm](2.4,5.42) ellipse (0.25 and 0.25);
\draw [line width=0.3mm](2.4,5.86) -- (2.4,5.66);
\draw [line width=0.3mm](2.64,4.6) -- (2.94,4.6);
\draw [line width=0.3mm](3.42,4.6) -- (3.62,4.6);
\draw [line width=0.3mm](4.02,4.6) -- (4.22,4.6);
\draw [line width=0.3mm](2.4,4.35) -- (2.4,4.15);
\draw [line width=0.3mm](3.19,4.36) -- (3.19,4.16);
\draw [line width=0.3mm](4.49,4.36) -- (4.49,4.16);
\draw  [line width=0.3mm](2.4,4.6) ellipse (0.25 and 0.25);
\draw  [line width=0.3mm](3.19,4.6) ellipse (0.25 and 0.25);
\draw  [line width=0.3mm](4.49,4.6) ellipse (0.27 and 0.25);
\node  at (3.82,4.6) {$...$};
\draw [densely dotted,line width=0.5mm](2.4,5.15) -- (2.4,4.84);
\node at (-1.2,5.3) {$\mathbf{a}$};
\node at (-0.5,5.35) {$\mathbf{b}^\top$};
\draw  [line width=0.3mm](-1.3,3.3) rectangle (-1.1,2.5);
\draw  [line width=0.3mm](-0.9,3.3) rectangle (-0.1,3.1);
\node at (0.3,2.9) {=};
\draw  [line width=0.3mm](0.6,3.3) rectangle (1.4,2.5);
\node at (-2.79,3.14) {\small Outer product };
\node at (-2.79,2.74) {of two TTs};
\draw [densely dotted,line width=0.5mm](3.17,3.16) -- (3.17,2.82);
\draw [densely dotted,line width=0.5mm](2.4,3.16) -- (2.4,2.82);
\draw [densely dotted,line width=0.5mm](4.5,3.16) -- (4.5,2.82);
\draw [line width=0.3mm](2.4,3.66) -- (2.4,3.86);
\draw [line width=0.3mm](3.17,3.66) -- (3.17,3.86);
\draw [line width=0.3mm](4.5,3.66) -- (4.5,3.86);
\draw  [line width=0.3mm](2.4,3.4) ellipse (0.25 and 0.25);
\draw  [line width=0.3mm](3.17,3.4) ellipse (0.25 and 0.25);
\draw  [line width=0.3mm](4.5,3.4) ellipse (0.27 and 0.25);
\draw [line width=0.3mm](2.64,3.4) -- (2.94,3.4);
\draw [line width=0.3mm](3.42,3.4) -- (3.62,3.4);
\draw [line width=0.3mm](4.22,3.4) -- (3.98,3.4);
\node at (3.82,3.4) {$...$};
\node at (3.82,2.61) {$...$};
\draw  [line width=0.3mm](2.4,2.6) ellipse (0.25 and 0.25);
\draw  [line width=0.3mm](3.17,2.6) ellipse (0.25 and 0.25);
\draw  [line width=0.3mm](4.5,2.6) ellipse (0.27 and 0.25);
\draw [line width=0.3mm](2.4,2.37) -- (2.4,2.17);
\draw [line width=0.3mm](3.17,2.37) -- (3.17,2.17);
\draw [line width=0.3mm](4.5,2.37) -- (4.5,2.17);
\node at (-1.2,3.5) {$\mathbf{a}$};
\node at (-0.2,1.9) {$\mathbf{b}$};
\node at (-2.79,1.41) {\small Product of };
\node at (-2.89,1.01) {TTm and vector};
\draw  [line width=0.3mm](0.6,1.7) rectangle (0.8,0.9);
\draw  [line width=0.3mm](-1.3,1.7) rectangle (-0.5,0.9);
\draw  [line width=0.3mm](-0.3,1.7) rectangle (-0.1,0.9);
\node at (0.3,1.3) {=};
\draw  [line width=0.3mm](2.4,0.93) ellipse (0.25 and 0.25);
\draw  [line width=0.3mm](3.18,0.93) ellipse (0.25 and 0.25);
\draw  [line width=0.3mm](4.5,0.93) ellipse (0.27 and 0.25);
\draw  [line width=0.3mm](2.4,1.65) ellipse (0.25 and 0.25);
\draw [line width = 0.3mm] (2.4,1.41) -- (2.4,1.18);
\draw [line width=0.3mm](2.64,0.93) -- (2.94,0.93);
\draw [line width=0.3mm](3.42,0.93) -- (3.62,0.93);
\draw [line width=0.3mm](2.4,0.69) -- (2.4,0.49);
\draw [line width=0.3mm](3.18,0.69) -- (3.18,0.49);
\draw [line width=0.3mm](4.5,0.69) -- (4.5,0.49);
\draw [line width=0.3mm](3.98,0.93) -- (4.22,0.93);

\draw [line width=0.3mm](2.64,2.6) -- (2.94,2.6);
\draw [line width=0.3mm](3.42,2.6) -- (3.62,2.6);
\draw [line width=0.3mm](3.98,2.6) -- (4.22,2.6);
\node at (3.82,0.93) {$...$};
\node at (-0.9,1.9) {$\mathbf{C}$};
\node at (-0.5,3.54) {$\mathbf{b}^\top$};

\node at (2.4,5.42) {\small$\mathbf{b}^\top$};
\node at (2.4,4.6) {\small $\mathbfcal{A}_1$};
\node at (3.19,4.6) {\small $\mathbfcal{A}_2$};
\node at (4.51,4.6) {\small $\mathbfcal{A}_N$};
\node at (2.4,3.4) {\small $\mathbfcal{B}_1$};
\node at (3.17,3.4) {\small $\mathbfcal{B}_2$};
\node at (4.51,3.4) {\small $\mathbfcal{B}_N$};
\node at (2.4,2.6) {\small $\mathbfcal{A}_1$};
\node at (3.17,2.6) {\small $\mathbfcal{A}_2$};
\node at (4.52,2.6) {\small $\mathbfcal{A}_N$};
\node at (2.4,1.65) {\small $\mathbf{b}$};
\node at (2.4,0.93) {\small $\mathbfcal{C}_1$};
\node at (3.18,0.93) {\small $\mathbfcal{C}_2$};
\node at (4.5,0.93) {\small $\mathbfcal{C}_N$};
\end{tikzpicture}
    \caption{Operations with matrices and vectors. Top: Outer product between a vector $\mathbf{a}$, represented by a TT with cores $\mathbfcal{A}_1,\dots,\mathbfcal{A}_N$, and a vector $\mathbf{b}$. A rank-1 connection (dotted line) is summed over between the first TT-core and vector $\mathbf{b}^\top$. Middle: Outer product between two vectors $\mathbf{a}$ and $\mathbf{b}$ represented by tensor trains with cores $\mathbfcal{A}_1,\dots,\mathbfcal{A}_N$ and $\mathbfcal{B}_1,\dots,\mathbfcal{B}_N$, respectively. A rank-1 connection is summed over between each core of the TTs. Bottom: The product between a matrix $\mathbf{C}$ in TTm format with cores $\mathbfcal{C}_1,\dots,\mathbfcal{C}_N$ and a vector $\mathbf{b}$. The column index of the first TTm-core is summed over with the row index of the vector $\mathbf{b}$.}
    \label{fig:ttmops}
\end{figure}
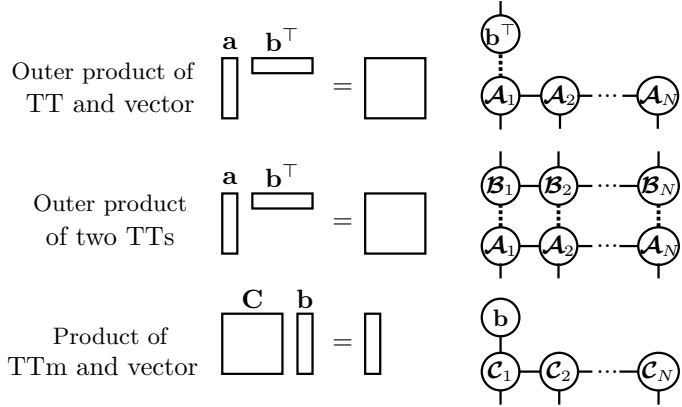
\section{Bayesian inference for low-rank tensor approximation}
\label{sec:BayALS}
In this section, we present a method to find a low-rank tensor decomposition using a similar strategy as in the ALS by solving a Bayesian inference problem. In this context, the vectorization of each TD component is treated as a Gaussian random variable, expressed in terms of a mean and a covariance. Generally, we denote a Gaussian probability distribution as $\mathcal{N}(\m,\cov)$, where $\m$ is the mean and $\cov$ is the covariance. This section is organized as follows. First, we define the prior for the inference problem. Before computing the joint posterior distribution, we look at a simpler inference problem, stated in \cref{lem:post}, where the posterior distribution of only one TD component is computed. Then, \cref{thm:jointpost} describes the computation of the joint posterior by applying a block coordinate descent method and simplifying the inference problem to iteratively applying \cref{lem:post}. Finally, our resulting \cref{alg:noqr} is applied in an example.\\

To initialize the Bayesian inference problem, a multi-variate Gaussian prior is assigned to every TD component 
\begin{equation*}
p\left(\g_i\right)= \mathcal{N}\left(\m_i^0,\cov_i^0\right),\qquad i=1,\dots, N,
\end{equation*} 
where $\m_i^0$ and $\cov_i^0$ are the prior mean and covariance matrix, respectively.
The TD components $\g_i\in\mathbb{R}^{R_iI_iR_{i+1}\times1}$ are assumed to be statistically independent. Therefore, the joint prior distribution is given by
\begin{equation*}
p(\{\g_i\})=
\mathcal{N}\left(
\begin{bmatrix}
\m_1^0\\
\m_2^0\\
\vdots\\
\m_N^0
\end{bmatrix},
\begin{bmatrix}
\cov_1^0 & 0 & \dots & 0 \\
0 & \cov_2^0 & \ddots & \vdots \\
\vdots & \ddots & \ddots & 0 \\
0 & \dots & 0 & \cov_N^0
\end{bmatrix}
\right),
\end{equation*}
where $\{\g_i\}$ denotes the priors of all TD components. Because of the statistical independence, the joint prior distribution and the prior on one TD component conditioned on the other TD components, can be written as
\begin{align}
p(\{\g_i\})&=p(\g_1)p(\g_2)\dots p(\g_N) \;\; \mathrm{and} \label{eq:allpriorsind}\\
p(\g_n\mid\{\g_i\}_{i\neq n})&=p(\g_n), \label{eq:onepriorind}
\end{align}
respectively, where $\{\g_i\}_{i\neq n}$ denotes the collection of all TD components except \newline the~$n$th. \\

The joint posterior distribution $p(\{\g_i\}\mid \y)$ is found by applying Bayes' rule. However, before solving this inference problem and inspired by a result described in \cite[p.\ 29]{Sarkka2010}, we first look at the simpler problem to find the posterior distribution of one component, given the measurement and the other components.

\begin{lemma}
Let the prior distribution $p(\g_n)=\mathcal{N}(\m_n^0,\cov_n^0)$ and the likelihood $p(\y\mid\{\g_i\})=\mathcal{N}(\m_{\y},\sigma^2\id)$ be Gaussian, where $\m_{\y}=\U_{\setminus n}\g_n$. Further, let all TD components be statistically independent and let the TD be multilinear. Then, the posterior distribution $p\left(\g_n\mid\{\g_i\}_{i\neq n},\y\right)=\mathcal{N}(\m_n^+,\cov_n^+)$ of the $n$th component given the measurements and the other components is also Gaussian with mean $\m_n^+$ and covariance $\cov_n^+$ 
\begin{align}
\m_n^+&=\left[(\cov_n^0)^{-1}+\frac{\U_{\setminus n}^\top \U_{\setminus n}}{\sigma^2}\right]^{-1}\left[\frac{\U_{\setminus n}^\top \y}{\sigma^2}+(\cov_n^0)^{-1} \m_n^0\right]\label{eq:post_m}\\
\cov_n^+&=\left[(\cov_n^0)^{-1}+\frac{\U_{\setminus n}^\top \U_{\setminus n}}{\sigma^2}\right]^{-1}. \label{eq:post_P}
\end{align}
\label{lem:post}
\end{lemma}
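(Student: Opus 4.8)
The plan is to recognize that, once we condition on all TD components except the $n$th, the multilinearity of the decomposition turns the model into a standard linear--Gaussian system $\y = \U_{\setminus n}\g_n + \mathbfcal{E}$ with a fixed, known regression matrix $\U_{\setminus n}$, so the lemma reduces to the textbook computation of the posterior of a Gaussian prior under a linear Gaussian observation. First I would invoke Bayes' rule for the single component,
\begin{equation*}
p\left(\g_n\mid\{\g_i\}_{i\neq n},\y\right) \propto p\left(\y\mid\{\g_i\}\right)\, p\left(\g_n\mid\{\g_i\}_{i\neq n}\right),
\end{equation*}
dropping the evidence as a $\g_n$-independent normalizing constant. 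The statistical independence assumption \cref{eq:onepriorind} lets me replace the conditional prior $p(\g_n\mid\{\g_i\}_{i\neq n})$ by the marginal prior $p(\g_n)=\mathcal{N}(\m_n^0,\cov_n^0)$, and the hypothesis $\m_{\y}=\U_{\setminus n}\g_n$ makes the likelihood $\mathcal{N}(\U_{\setminus n}\g_n,\sigma^2\id)$ an explicit Gaussian in $\g_n$.

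The second step is to show that the product of these two Gaussian densities is again Gaussian and to extract its parameters. Taking $-2\log$ of the right-hand side and discarding terms independent of $\g_n$ gives the quadratic form
\begin{equation*}
\frac{1}{\sigma^2}\left(\y-\U_{\setminus n}\g_n\right)^\top\left(\y-\U_{\setminus n}\g_n\right) + \left(\g_n-\m_n^0\right)^\top(\cov_n^0)^{-1}\left(\g_n-\m_n^0\right).
\end{equation*}
I would expand this, collect the terms that are quadratic in $\g_n$, and read off the inverse posterior covariance as the coefficient matrix $(\cov_n^0)^{-1}+\sigma^{-2}\U_{\setminus n}^\top\U_{\setminus n}$, which immediately yields \cref{eq:post_P}. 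Collecting the terms linear in $\g_n$ gives the vector $\sigma^{-2}\U_{\setminus n}^\top\y+(\cov_n^0)^{-1}\m_n^0$, and completing the square identifies the posterior mean as $\m_n^+=\cov_n^+\left[\sigma^{-2}\U_{\setminus n}^\top\y+(\cov_n^0)^{-1}\m_n^0\right]$, which is exactly \cref{eq:post_m}.

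Since every step preserves the Gaussian form, no approximation is involved and the claim follows. There is no deep obstacle here; the only point requiring care is the bookkeeping in the completion of the square, namely confirming that the cross terms combine correctly so that the quadratic and linear coefficients match \cref{eq:post_P,eq:post_m}. Alternatively, one could bypass the explicit algebra by citing the standard affine Gaussian conditioning result (as in \cite{Sarkka2010}); I would nonetheless prefer to carry out the completion of the square explicitly, so that the precision-form expressions in \cref{eq:post_m,eq:post_P} emerge directly rather than in the equivalent Kalman-gain form.
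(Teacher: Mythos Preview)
Your proposal is correct and follows essentially the same route as the paper: apply Bayes' rule for the single component, use the independence assumption \cref{eq:onepriorind} to reduce the conditional prior to $p(\g_n)$, and then invoke the standard linear--Gaussian conjugacy result. The only difference is cosmetic: the paper simply cites \cite{Sarkka2010} for the resulting posterior mean and covariance, whereas you additionally spell out the completion-of-the-square computation.
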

\begin{proof}
The posterior distribution of one TD component conditioned on the other TD components and the measurements $p\left(\g_n\mid\y,\{\g_i\}_{i\neq n}\right)$ can be found by applying Bayes' rule. Assuming that all components are statistically independent \cref{eq:onepriorind} leads to 
\begin{align}
p\left(\g_n\mid\y,\{\g_i\}_{i\neq n}\right)
= \frac{p(\y\mid\{\g_i\})p(\g_n)}{p(\y\mid \{\g_i\}_{i\neq n})}.
\label{eq:posterior}
\end{align}
Since the likelihood $p(\y\mid\{\g_i\})$ and prior $p(\g_n)$ are Gaussian, also the posterior will be Gaussian \cite[p.\ 28-29,\; 209-210]{Sarkka2010} with mean \cref{eq:post_m} and covariance \cref{eq:post_P}.
\end{proof}
\begin{corollary}
\label{cor:normaleq}
For $\lim \cov_n^{0} \rightarrow \infty $, \cref{eq:post_m} reduces to the normal equations of the least squares problem and therefore the update equation of the conventional ALS
\begin{equation}
\m_n^+=\left( \U_{\setminus n}^\top \U_{\setminus n} \right)^{-1}\U_{\setminus n}^\top \y.
\label{eq:normaleq}
\end{equation}
\end{corollary}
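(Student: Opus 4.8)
The plan is to take the matrix limit $(\cov_n^0)^{-1} \to \mathbf{0}$ directly in the expression for the posterior mean $\m_n^+$ given in \cref{eq:post_m} and to verify that the two appearances of $\sigma^2$ cancel. First I would make the meaning of ``$\lim \cov_n^0 \to \infty$'' precise: an infinitely large prior covariance corresponds to vanishing precision, so the relevant limit is $(\cov_n^0)^{-1} \to \mathbf{0}$, the zero matrix. This is the standard formalization of an uninformative (flat) prior, under which one expects the Bayesian posterior mean to collapse onto the maximum-likelihood, i.e.\ least-squares, estimate.

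Next I would substitute this limit into both bracketed factors of \cref{eq:post_m}. In the first bracket the precision term $(\cov_n^0)^{-1}$ drops out, leaving $\sigma^{-2}\,\U_{\setminus n}^\top \U_{\setminus n}$; inverting gives $\sigma^2 \left(\U_{\setminus n}^\top \U_{\setminus n}\right)^{-1}$. In the second bracket the prior contribution $(\cov_n^0)^{-1}\m_n^0$ also vanishes, leaving $\sigma^{-2}\,\U_{\setminus n}^\top \y$. Multiplying the two surviving pieces, the factor $\sigma^2$ from the inverted precision cancels the factor $\sigma^{-2}$ from the data term, and one is left with exactly $\left(\U_{\setminus n}^\top \U_{\setminus n}\right)^{-1}\U_{\setminus n}^\top \y$, which is \cref{eq:normaleq}.

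The only point requiring care is that passing to the limit commutes with the matrix inversion, which relies on $\U_{\setminus n}^\top \U_{\setminus n}$ being invertible. I would note that this holds precisely when $\U_{\setminus n}$ has full column rank, the same condition under which the normal equations are themselves well-posed; under this assumption the map $\mathbf{X} \mapsto \left(\mathbf{X} + \sigma^{-2}\,\U_{\setminus n}^\top \U_{\setminus n}\right)^{-1}$ is continuous at $\mathbf{X}=\mathbf{0}$, so the limit may indeed be taken inside the inverse. I expect this continuity and invertibility bookkeeping to be the only genuine obstacle, as the remaining algebra is an immediate cancellation.
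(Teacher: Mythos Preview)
Your argument is correct and is exactly the computation the paper has in mind; the paper does not spell out a proof at all, treating the corollary as immediate from \cref{eq:post_m}. Your extra care about interpreting $\lim \cov_n^0 \to \infty$ as $(\cov_n^0)^{-1}\to\mathbf{0}$ and about the invertibility of $\U_{\setminus n}^\top\U_{\setminus n}$ goes slightly beyond what the paper states, but is the right way to make the claim rigorous.
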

\cref{cor:normaleq} describes the case where there is no useful prior information available for the $n$th TD component. Thus, the certainty on the prior mean is zero, and $\lim \cov_n^{0} \rightarrow \infty$.\\

Now, we can use \cref{lem:post} to find the joint posterior distribution of all TD components as described in the following theorem.

\begin{theorem}
\label{thm:jointpost}
Let $p\left(\{\mathbf{g}_i\}\mid\mathbf{y}\right)$ be the posterior joint distribution of all TD components given $\y$. Further, let the prior distribution $p(\g_n)=\mathcal{N}(\m_n^0,\cov_n^0)$ of any component as well as the likelihood $p(\y\mid\{\g_i\})=\mathcal{N}(\m_{\y},\sigma^2\id)$ be Gaussian, where the mean $\m_{\y}$ is a nonlinear function of all the TD components. Further, let all TD components be statistically independent and let the TD be multilinear. Then, by applying block coordinate descent to find the posterior density, every step of the block coordinate descent corresponds to applying \cref{lem:post}.
\end{theorem}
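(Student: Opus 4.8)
The plan is to begin from the joint posterior given by Bayes' rule, identify the blocks of the block coordinate descent with the individual TD components $\g_1,\dots,\g_N$, and show that the $n$th update step is precisely the conditional inference problem already solved by \cref{lem:post}. The entire argument hinges on using multilinearity to convert the joint \emph{nonlinear} dependence of the likelihood mean $\m_{\y}$ on all components into a conditional \emph{linear} dependence on the single component being updated, which is exactly the linear-Gaussian setting required by the lemma.

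First I would write the joint posterior via \cref{eq:Bayes} and note that, since the evidence $p(\y)$ does not depend on $\{\g_i\}$, maximizing the joint posterior is equivalent to minimizing $-\log p(\{\g_i\}\mid\y)$. Block coordinate descent cycles through the blocks $\g_1,\dots,\g_N$; at the $n$th step it fixes $\{\g_i\}_{i\neq n}$ at their current estimates and updates only $\g_n$. I would argue that this subproblem is exactly the conditional inference problem of \cref{eq:posterior}, namely computing $p(\g_n\mid\{\g_i\}_{i\neq n},\y)$, and that the point estimate propagated to the next step, the minimizer of $-\log p$ over $\g_n$, is the mean of this Gaussian conditional. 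Statistical independence of the priors, \cref{eq:onepriorind}, then reduces the prior factor from $p(\{\g_i\})$ to $p(\g_n)=\Ndis(\m_n^0,\cov_n^0)$, matching the prior hypothesis of \cref{lem:post}.

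Next I would invoke multilinearity of the TD. Although $\m_{\y}$ is nonlinear in all components jointly, fixing every component except the $n$th makes $\m_{\y}$ linear in $\g_n$, concretely $\m_{\y}=\U_{\setminus n}\g_n$, where $\U_{\setminus n}$ is assembled from the fixed components as in \cref{eq:multilin} for the TT decomposition, with the analogous expressions for CP and Tucker. This is exactly the linear-Gaussian likelihood $\Ndis(\U_{\setminus n}\g_n,\sigma^2\id)$ required by \cref{lem:post}. With a Gaussian prior $p(\g_n)$ and a Gaussian likelihood whose mean is linear in $\g_n$, \cref{lem:post} applies verbatim and returns the Gaussian conditional posterior $\Ndis(\m_n^+,\cov_n^+)$ with mean \cref{eq:post_m} and covariance \cref{eq:post_P}. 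Hence each block coordinate descent step coincides with one application of \cref{lem:post}, and cycling over $n=1,\dots,N$ yields the claimed probabilistic ALS.

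The main obstacle is the careful identification carried out in the second paragraph: making rigorous that the deterministic block coordinate descent update on $-\log p$ over the block $\g_n$ is the same object as the Bayesian conditional posterior of $\g_n$, and that multilinearity is precisely what guarantees this conditional is linear-Gaussian so that the hypotheses of \cref{lem:post} are met. Once this correspondence is established, the remainder is a direct substitution into \cref{lem:post}, since the prior and likelihood factors conditioned on $\{\g_i\}_{i\neq n}$ already have the exact form the lemma assumes.
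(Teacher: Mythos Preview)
Your proposal is correct and follows essentially the same approach as the paper: write the joint posterior via Bayes' rule using the prior independence, apply block coordinate descent by conditioning on all but the $n$th component, use multilinearity to linearize $\m_{\y}=\U_{\setminus n}\g_n$, and recognize the resulting subproblem as exactly the setting of \cref{lem:post}. Your write-up is somewhat more explicit than the paper's (in particular your remark that the block update, the minimizer of $-\log p$ over $\g_n$, coincides with the mean of the Gaussian conditional), but the logical skeleton is identical.
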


\begin{proof}
Bayes' rule and statistical independence \cref{eq:allpriorsind} gives
\begin{align}
p\left(\{\g_i\}\mid\y\right) =  \frac{p(\y\mid\{\g_i\})p(\g_1)p(\g_2)\dots p(\g_N)}{p(\y)}.
\label{eq:posterior_gen}
\end{align}
As in the conventional ALS, a block coordinate descent method is applied by conditioning the posterior distribution of one TD component on all the others. In this way, the TD components can be computed sequentially with \cref{eq:posterior}. In addition, due to the multilinearity of the TD the mean of the likelihood becomes a linear function of the~$n$th TD component, $\m_\y = \U_{\setminus n}\g_n$. Thus, every TD update corresponds to applying \cref{lem:post}.
\end{proof}

With \cref{cor:normaleq}, \cref{thm:jointpost} gives a Bayesian interpretation of the ALS by deriving its update equation from the TD components defined as probability distributions.
The following example shows how the distributions change with every update.
\begin{example}[\textbf{Distribution updates for a TD with three components}]
Assume we would like to apply \cref{thm:jointpost} to find a TD with three components. First, the three TD components are initialized with a prior distribution. Then, the distributions are updated sequentially by computing the posterior with Bayes' rule, as shown in \cref{fig:ex1}. After updating the three TD components, the updates are repeated until a stopping criterion is met.  
\begin{figure}
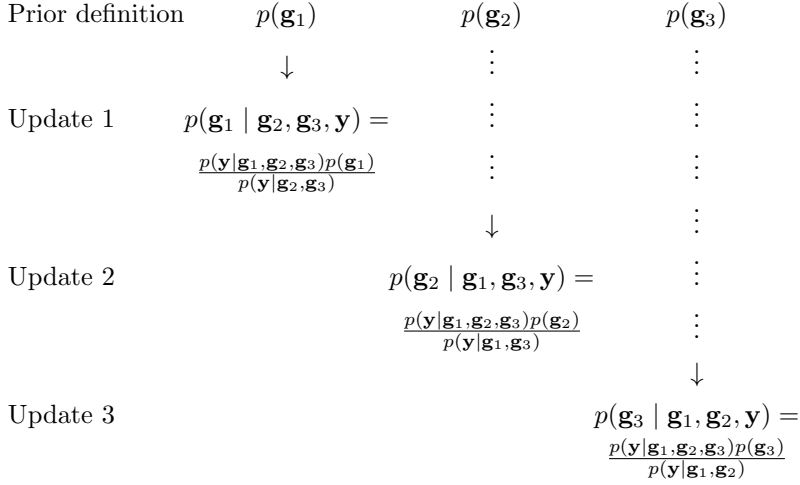

    \centering
    \renewcommand\arraystretch{1.33}
    \[
    \begin{array}{l @{{}{}} c @{{}{}} c @{{}{}} c}
    \text{Prior definition} & p(\g_1) & p(\g_2) & p(\g_3) \\
    & \downarrow & \vdots & \vdots  \\
    \text{Update 1} & p(\g_1\mid\g_2,\g_3,\y)= & \vdots & \vdots\\
    & \frac{p(\y\mid\g_1,\g_2,\g_3)p(\g_1)}{p(\y\mid\g_2,\g_3)} & \vdots & \vdots\\
     & & \downarrow & \vdots \\
    \text{Update 2} & & p(\g_2\mid\g_1,\g_3,\y)= & \vdots\\
     & & \frac{p(\y\mid\g_1,\g_2,\g_3)p(\g_2)}{p(\y\mid\g_1,\g_3)} & \vdots \\
     & & & \downarrow \\
    \text{Update 3} & & & p(\g_3\mid\g_1,\g_2,\y)=\\
     & & & \frac{p(\y\mid\g_1,\g_2,\g_3)p(\g_3)}{p(\y\mid\g_1,\g_2)}
    \end{array}
    \]
    \caption{Distribution updates for example with three TD components.}
    \label{fig:ex1}
\end{figure}

\end{example}

\Cref{alg:noqr} summarizes the steps of the ALS in a Bayesian framework. The mean and covariance of each TD component are sequentially updated, followed by the computation of $\U_{\setminus n}$ which is computed from $\{\g_i\}_{i\neq n}$. The stopping criterion is defined by the user, e.g.\ as a maximum number of iterations or the convergence of the residuals between the measurement and estimate, as used in the convectional ALS. It is also possible to consider the convergence of the TT-core's covariance matrices as a stopping criterion since these are additionally computed in the ALS in the Bayesian framework. Another possibility is to look at the convergence of the numerator of Bayes' rule. The computational cost and storage requirements are given in \cref{tab:complexity_1}. For the complexity analysis, we use the following notation. The largest rank or the CP-rank is denoted by $R$ and the largest dimension of the approximated tensor is denoted by $I$. In comparison, the conventional ALS has the same computational cost for every TD component update, and a total storage requirement of $\mathcal{O}(NRI)$ for CP, $\mathcal{O}(NRI+R^N)$ for Tucker, and $\mathcal{O}(NR^2I)$ for TTs. The ALS in a Bayesian framework has an additional term in the storage requirements, because it computes the covariance matrix for every TD component.\\
\begin{table}[]
    \centering
    \caption{Computational cost per update and overall storage requirements for \cref{alg:noqr}.}
    \begin{tabular}{c|c|c}
        TD & computational cost & storage \\ \hline 
        CP & $\mathcal{O}(R^3I^3)$ & $\mathcal{O}(NRI+NR^2I^2)$ \\
        Tucker & $\mathcal{O}(R^3I^3)$ & $\mathcal{O}(NRI+R^N+NR^2I^2+R^{2N})$ \\
        TT & $\mathcal{O}(R^6I^3)$ & $\mathcal{O}(NR^2I+NR^4I^2)$
    \end{tabular}
    \label{tab:complexity_1}
\end{table}

Our method also opens up the possibility to recursively estimating the mean and covariance of the TD components. In case a new noisy measurement $\y$ of the same underlying tensor becomes available, \cref{alg:noqr} can applied repeatedly with the output mean and covariance from the previous execution as the prior for the new execution.
\begin{algorithm}
\caption{ALS in a Bayesian framework}
\label{alg:noqr}
\begin{algorithmic}[1]
\REQUIRE{Prior mean $\{\m_i^0\}$ and covariance $\{\cov_i^0\}$, $i=1,\dots N$, measurement $\y$ and noise variance $\sigma^2$.}
\ENSURE{Posterior mean $\{\m_i^+\}$ and covariance $\{\cov_i^+\}$, $i=1,\dots N$.}
\STATE{Compute $\U_{\setminus 1}$ with \cref{eq:U_cp} for CP, \cref{eq:U_tucker} for Tucker or \cref{eq:multilin} for TT with $\{\m_i^0\}_{i\neq1}$.}
\WHILE{Stopping criterion is not true}
\setstretch{1.4}
\FOR{$n=1,...,N$}
\STATE{$\cov_n^+ \leftarrow \left[(\cov_n^0)^{-1}+\frac{\U_{\setminus n}^\top \U_{\setminus n}}{\sigma^{2}}\right]^{-1}$}
\STATE{$\m_n^+ \leftarrow \cov_n^+\left[\frac{\U_{\setminus n}^\top \y} {\sigma^2}+(\cov_n^{0})^{-1} \m_n^0\right]$}

\STATE{Recompute $\U_{\setminus n+1}$ with~$\m_n^+$, where $N+1=1$.}
\ENDFOR
\ENDWHILE
\end{algorithmic}
\end{algorithm}

\section{Orthogonalization step in Bayesian framework for a TT}
Every iteration of \cref{alg:noqr} requires the inversion 
\begin{equation}
    \left[(\cov_n^0)^{-1}+\frac{\U_{\setminus n}^\top \U_{\setminus n}} {\sigma^2}\right]^{-1}, \;\; \text{corresponding to }\;\; \left[\U_{\setminus n}^\top \U_{\setminus n}\right]^{-1}
\end{equation}
in the conventional ALS update. To avoid the propagation of numerical errors and ensure numerical stability, some ALS algorithms, e.g., the one for the TT decomposition, include an orthogonalization step after every update. In this way, the condition number of each sub-problem can not become worse than the one of the overall problem \cite[p.\ A701]{Rohwedder2012}. In this section, we present how we integrate the orthogonalization procedure into the ALS in a Bayesian framework for a TT decomposition in site-$n$-mixed-canonical form. The same can also be applied to a Tucker decomposition with orthogonal factor matrices. \\

We first describe, how the orthogonalization step is performed in the conventional ALS and then how we integrate it into the ALS in a Bayesian framework. Here, we differentiate between the prior distributions of each TT-core and the initial guess for each TT-core, which initializes the conventional ALS. In the conventional ALS with orthogonalization step, the initial TT is transformed into the site-$1$-mixed-canonical form, where the Frobenius norm of the first TT-core corresponds to the Frobenius norm of the entire tensor train. The update is always performed on the core that contains the Frobenius norm. The procedure, therefore, requires transformations that separate the Frobenius norm from the updated TT-core and moves it to the next TT-core to be updated. \\

The initial TT is transformed into site-$1$-mixed-canonical form, by orthogonalizing the $N$th up to the $2$nd TT-core as illustrated in \cref{fig:intositek} for a TT with three cores.
\begin{figure}
    \centering
    \begin{tikzpicture}
\definecolor{gray}{gray}{0.9}
\draw  (-4.2,3.2) ellipse (0.2 and 0.2);
\draw  (-3.6,3.2) ellipse (0.2 and 0.2);
\draw  (-2.6,3.2) ellipse (0.2 and 0.2);
\draw  (-0.4,3.2) ellipse (0.2 and 0.2);
\draw  [fill=gray](-1,3.2) ellipse (0.2 and 0.2);
\draw  (-1.6,3.2) ellipse (0.2 and 0.2);
\draw  (2.2,3.2) ellipse (0.2 and 0.2);
\draw  (1.6,3.2) ellipse (0.2 and 0.2);
\draw  (0.6,3.2) ellipse (0.2 and 0.2);
\draw  (4.4,3.2) ellipse (0.2 and 0.2);
\draw  (3.8,3.2) ellipse (0.2 and 0.2);
\draw [fill=gray]  (3.2,3.2) ellipse (0.2 and 0.2);
\draw  [fill=gray](-5.2,3.2) ellipse (0.2 and 0.2);
\draw  (-5.8,3.2) ellipse (0.2 and 0.2);
\draw  (-6.4,3.2) ellipse (0.2 and 0.2);
\draw (-4,3.2) -- (-3.8,3.2);
\draw (-3.4,3.2) -- (-2.8,3.2);
\draw (-4.2,3) -- (-4.2,2.8);
\draw (-3.6,3) -- (-3.6,2.8);
\draw (-2.6,3) -- (-2.6,2.8);
\draw (-1.6,3) -- (-1.6,2.8);
\draw (-1,3) -- (-1,2.8);
\draw (-0.4,3) -- (-0.4,2.8);
\draw (0.6,3) -- (0.6,2.8);
\draw (1.6,3) -- (1.6,2.8);
\draw (2.2,3) -- (2.2,2.8);
\draw (3.2,3) -- (3.2,2.8);
\draw (3.8,3) -- (3.8,2.8);
\draw (4.4,3) -- (4.4,2.8);
\draw (-6.4,3) -- (-6.4,2.8);
\draw (-5.8,3) -- (-5.8,2.8);
\draw (-5.2,3) -- (-5.2,2.8);
\draw (-1.4,3.2) -- (-1.2,3.2);
\draw (-0.8,3.2) -- (-0.6,3.2);
\draw (0.8,3.2) -- (1.4,3.2);
\draw (1.8,3.2) -- (2,3.2);
\draw (3.4,3.2) -- (3.6,3.2);
\draw (4,3.2) -- (4.2,3.2);
\draw (-6.2,3.2) -- (-6,3.2);
\draw (-5.6,3.2) -- (-5.4,3.2);
\draw  [fill=black](-3.1,3.2) ellipse (0.1 and 0.1);
\draw (-2.75,3.35) -- (-2.45,3.05);
\draw (-0.53,3.35) -- (-0.23,3.05);
\draw (1.46,3.35) -- (1.76,3.05);
\draw (2.07,3.35) -- (2.37,3.05);
\draw (3.67,3.35) -- (3.97,3.05);
\draw (4.27,3.35) -- (4.57,3.05);
\draw [fill=black] (1.1,3.2) ellipse (0.1 and 0.1);
\draw [->](-4.9,3.2) -- (-4.5,3.2);
\draw [->](-2.3,3.2) -- (-1.9,3.2);
\draw [->](-0.1,3.2) -- (0.3,3.2);
\draw [->](2.5,3.2) -- (2.9,3.2);
\end{tikzpicture}
    \caption{Visual depiction of a TT transformation into site-1-mixed-canonical form.}
    \label{fig:intositek}
\end{figure}
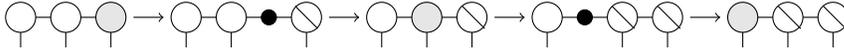
To move the Frobenius norm from the $n$th TT-core to the $(n-1)$th, the $n$th TT-core is orthogonalized by applying the thin $\mathbf{QR}$-decomposition on
\begin{equation}
    \left(\G_n^\mathrm{R}\right)^\top=\Q^\mathrm{R}_n\R^\mathrm{R}_n.
    \label{eq:qrstep}
\end{equation} 
Then, $\G_n^\mathrm{R}$ is replaced by 
\begin{equation}
   \G_n^\mathrm{R}\leftarrow \left(\Q^\mathrm{R}_n\right)^\top
   \label{eq:orthocore}
\end{equation} 
and the non-orthogonal part, illustrated by the small circle in \cref{fig:intositek}, is absorbed into the $(n-1)$th core with
\begin{equation}
    \mathbfcal{G}_{n-1} \leftarrow \mathbfcal{G}_{n-1} \times_3 \R^\mathrm{R}_n.
    \label{eq:absorb}
\end{equation} 
\Cref{eq:qrstep,eq:orthocore,eq:absorb} are applied to the $N$th until the $2$nd TT-core, leading to the TT in site-1-mixed-canonical form. Then, the first core is updated, followed by a transformation to move the Frobenius norm to the second core, and so on. Since the Frobenius norm moves to the right, the orthogonalization step consists of applying the thin $\mathbf{QR}$-decomposition on the left-unfolding
\begin{equation}
    \G_n^\mathrm{L}=\Q^\mathrm{L}_n\R^\mathrm{L}_n.
    \label{eq:qrstep_L}
\end{equation}
The $n$th and $(n+1)$th core are replaced by
\begin{equation}
    \G_n^\mathrm{L}\leftarrow \Q^\mathrm{L}_n \;\;\; \text{and} \;\;\; \mathbfcal{G}_{n+1} \leftarrow \mathbfcal{G}_{n+1} \times_1 \R^\mathrm{L}_n,
    \label{eq:ortho_absorb}
\end{equation}
respectively. After the $N$th core is updated, the updating scheme goes backwards, using again \cref{eq:qrstep,eq:orthocore,eq:absorb} for the orthogonalization step. When the Frobenius norm is absorbed back into the first core, one back and forth sweep of the ALS algorithm is concluded.\\

In the following, we describe how the transformation steps affect the distributions representing the TT-cores in the ALS in a Bayesian framework. The transformation of the random variables can be derived from \cref{eq:qrstep,eq:orthocore,eq:absorb,eq:qrstep_L,eq:ortho_absorb}. When the Frobenius norm is moved to the left, the mean of the $n$th core becomes 
\begin{equation*}
    \m_n \leftarrow \operatorname{vec}\left(\left(\Q^\mathrm{R}_n\right)^\top\right),
\end{equation*}
where $\left(\Q^\mathrm{R}_n\right)^\top$ is computed from \cref{eq:orthocore}. To obtain the transformed covariance of the $n$th TT-core, \cref{eq:qrstep} is rewritten as
\begin{equation*}
    \left(\Q^\mathrm{R}_n\right)^\top = \left(\R^\mathrm{R}_n\right)^{-\top}\G_n^\mathrm{R}.
\end{equation*}
Now, the right-hand side, is vectorized by summing over a rank-1 connection between $\left(\R^\mathrm{R}_n\right)^{-\top}$ and an identity matrix of size $I_nR_{n+1}\times I_nR_{n+1}$ that has a connected edge with $\G_n^\mathrm{R}$ as depicted in \cref{fig:qr_rl}. This leads to a transformation term
\begin{equation}
    \id \otimes \left(\R^\mathrm{R}_n\right)^{-\top}  \label{eq:trafo1}
\end{equation}
that orthogonalizes $\g_n$. The diagram in \cref{fig:covorth} shows how this transformation is applied to the covariance matrix. The transformation term and its transpose are multiplied on the left and right side of $\cov_n$, respectively, resulting in 
\begin{align*}
    \cov_n \leftarrow & \;\;\left(\id \otimes \left(\R^\mathrm{R}_n\right)^{-\top}\right) \cov_n \; \left(\id \otimes \left(\R^\mathrm{R}_n\right)^{-\top}\right)^\top\\
    =& \;\; \left(\id \otimes \left(\R^\mathrm{R}_n\right)^{-\top}\right) \cov_n \; \left(\id \otimes \left(\R^\mathrm{R}_n\right)^{-1}\right).
\end{align*}

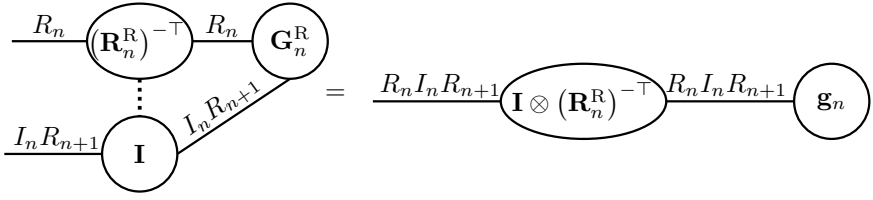
\begin{figure}
\centering
\begin{tikzpicture}
\draw  [line width=0.3mm](-2.5,4.6) ellipse (0.7 and 0.5);
\draw  [line width=0.3mm](-0.5,4.6) ellipse (0.5 and 0.5);
\draw  [line width=0.3mm](-2.5,3.1) ellipse (0.5 and 0.5);
\draw [line width=0.3mm](-0.5,4.1) -- (-2,3.1);
\draw [line width=0.3mm](-1,4.6) -- (-1.8,4.6);
\draw [line width=0.3mm](-3.2,4.6) -- (-4.2,4.6);
\draw [dotted,line width=0.5mm] (-2.5,4.1) -- (-2.5,3.6);
\node at (0.1,3.9) {=};
\draw  [line width=0.3mm](3.4,3.8) ellipse (1.1 and 0.5);
\draw  [line width=0.3mm](6.7,3.8) ellipse (0.5 and 0.5);
\draw [line width=0.3mm](4.5,3.8) -- (6.2,3.8);
\draw [line width=0.3mm](2.3,3.8) -- (0.6,3.8);
\node at (-1.4,4.8) {$R_n$};
\node at (-3.7,4.8) {$R_n$};
\node at (-2.54,4.6) {$\left(\mathbf{R}^\mathrm{R}_n\right)^{-\top}$};
\node at (-2.5,3.1) {$\mathbf{I}$};
\node at (-0.5,4.6) {$\mathbf{G}^\mathrm{R}_n$};
\node [rotate=36] at (-1.41,3.74) {$I_nR_{n+1}$};
\node at (3.39,3.8) {$\mathbf{I}\otimes\left(\mathbf{R}^\mathrm{R}_n\right)^{-\top}$};
\node at (6.7,3.8) {$\mathbf{g}_n$};
\node at (1.5,4) {$R_nI_nR_{n+1}$};
\node at (5.3,4) {$R_nI_nR_{n+1}$};
\draw [line width=0.3mm](-3,3.1) -- (-4.3,3.1);
\node at (-3.6,3.3) {$I_nR_{n+1}$};
\end{tikzpicture}
\caption{Visual depiction of how the non-orthogonal part is separated from the TD component's mean.}
\label{fig:qr_rl}
\end{figure}

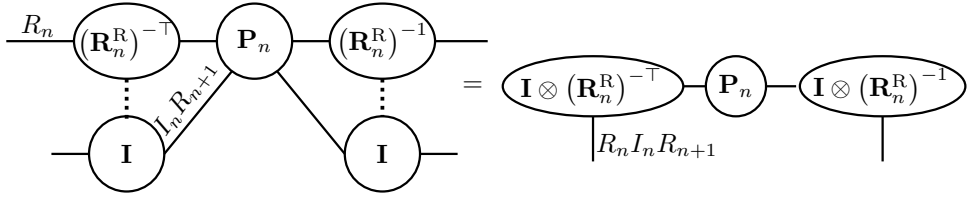
\begin{figure}
\centering
\begin{tikzpicture}
\draw  [line width=0.3mm](-1.4,3) ellipse (0.5 and 0.5);
\draw [line width=0.3mm] (0.3,3) ellipse (0.7 and 0.5);
\draw  [line width=0.3mm](-3.1,3) ellipse (0.7 and 0.5);
\draw [line width=0.3mm](-2.4,3) -- (-1.9,3);
\draw[line width=0.3mm] (-0.9,3) -- (-0.4,3);
\draw [line width=0.3mm](-3.8,3) -- (-4.7,3);
\draw [line width=0.3mm](1,3) -- (1.7,3);
\draw [line width=0.3mm](-1.7,2.6) -- (-2.6,1.5);
\draw [line width=0.3mm](-1.1,2.6) -- (-0.2,1.5);
\node at (-3.1,3) {$\left(\mathbf{R}_n^\mathrm{R}\right)^{-\top}$};
\node at (-1.4,3) {$\mathbf{P}_n$};
\node at (0.3,3) {$\left(\mathbf{R}_n^\mathrm{R}\right)^{-1}$};
\draw [line width=0.3mm] (-3.1,1.5) ellipse (0.5 and 0.5);
\draw [line width=0.3mm] (0.3,1.5) ellipse (0.5 and 0.5);
\draw [line width=0.3mm](-3.6,1.5) -- (-4.1,1.5);
\draw[line width=0.3mm] (0.8,1.5) -- (1.3,1.5);
\draw [dotted,line width=0.5mm] (-3.1,2.5) -- (-3.1,1.95);
\draw [dotted,line width=0.5mm] (0.3,2.5) -- (0.3,2);
\node at (-3.1,1.5) {$\mathbf{I}$};
\node at (0.3,1.5) {$\mathbf{I}$};
\node at (-4.28,3.2) {$R_n$};
\node [rotate=55] at (-2.31,2.24) {$I_nR_{n+1}$};
\node at (1.5,2.4) {=};
\node at (5,2.4) {$\mathbf{P}_n$};
\draw   [line width=0.3mm](5,2.4) ellipse (0.4 and 0.4);
\draw   [line width=0.3mm](3.1,2.4) ellipse (1.2 and 0.4);
\draw   [line width=0.3mm](6.95,2.4) ellipse (1.1 and 0.4);
\node at (3.08,2.39) {$\mathbf{I} \otimes \left(\mathbf{R}_n^\mathrm{R}\right)^{-\top}$};
\draw [line width=0.3mm] (4.3,2.4) -- (4.6,2.4);
\draw [line width=0.3mm] (5.4,2.4) -- (5.8,2.4);
\node at (6.93,2.39) {$\mathbf{I} \otimes \left(\mathbf{R}_n^\mathrm{R}\right)^{-1} $};
\draw  [line width=0.3mm](3.1,2) -- (3.1,1.4);
\draw  [line width=0.3mm](6.95,2) -- (6.95,1.4);
\node at (3.95,1.6) {$R_nI_nR_{n+1}$};
\end{tikzpicture}
\caption{Visual depiction of how the covariance matrix is transformed in the orthogonalization step.}
\label{fig:covorth}
\end{figure}

The transformations of the $(n-1)$th core to absorb the Frobenius norm, can be derived from \cref{eq:absorb} in a similar way as explained above, resulting in a transformation term 
\begin{equation}
\mathbf{R}_n^\mathrm{R} \otimes \id.    \label{eq:trafo2}
\end{equation}
When the Frobenius norm is moved to the right during the orthogonalization step, the transformations for the updated core and the next core to be updated become
\begin{equation}
    \left( \mathbf{R}_n^\mathrm{L} \right)^{-\top} \otimes \id \quad \mathrm{and} \quad  \id \otimes \mathbf{R}_n^\mathrm{L}, \label{eq:trafo3}
\end{equation}
respectively.
It can be easily shown that the transformations for the orthogonalization step, do not affect the statistical independence of the joint distribution of the random variables, since the transformations are performed on each variable individually. The following example shows the updating for the transformation scheme of the random variables that represent an exemplary three core TT.

\begin{example}[\textbf{Distribution updates and orthogonalization transformations for a TT with three cores}]
Assume we would like to apply \cref{thm:jointpost} to find a TT with three cores and keep the TD in site-$n$-mixed- canonical form. The three TT-cores are initialized with a prior distribution and transformed such that the corresponding TT is in site-1-mixed-canonical form. The random variables that represent the orthogonal cores are denoted by $\q_i,\;i=1,2,3$ and the random variable representing the TT-core that contains the Frobenius norm is denoted by $\x_i,\;i=1,2,3$. After the random variables are transformed into site-$1$-mixed-canonical form using \eqref{eq:trafo1} and \eqref{eq:trafo2}, the first core is updated followed by moving the Frobenius norm to the second core. Then the second core is updated and the Frobenius norm is moved to the last. When this half-sweep, as shown below, is completed using the transformations from \eqref{eq:trafo3}, the same procedure is repeated in the opposite direction, requiring again \eqref{eq:trafo1} and \eqref{eq:trafo2}. The example is depicted in \cref{fig:ex2}.
\begin{figure}
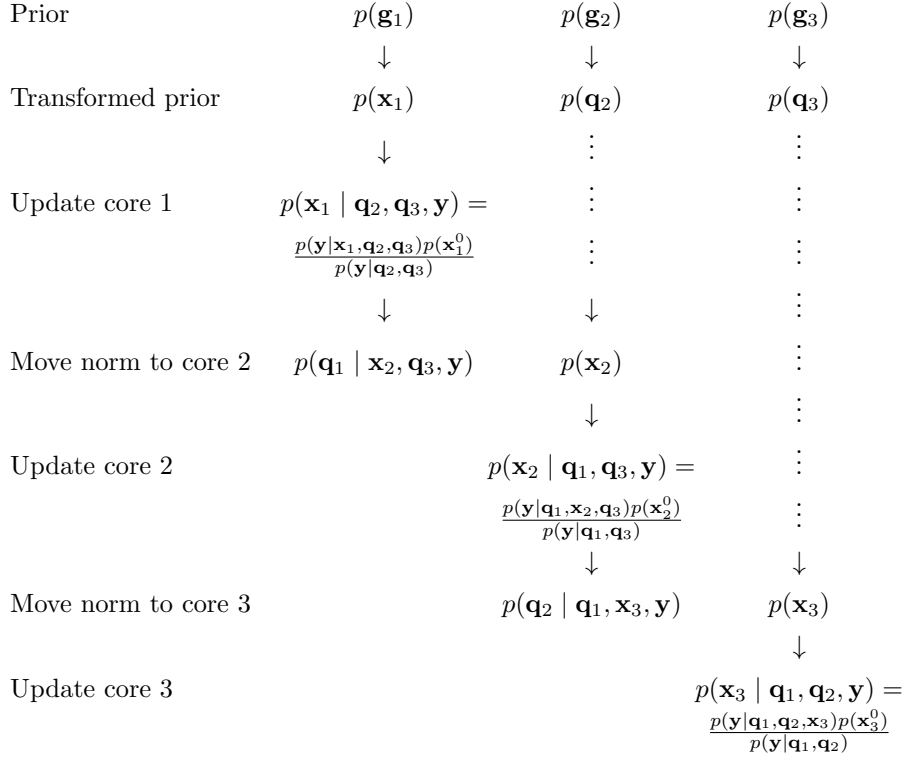

    \centering
    \renewcommand\arraystretch{1.33}
    \settowidth\mylen{300}
    \[
    \begin{array}{l @{{}{}} c @{{}{}} c @{{}{}} c}
    \text{Prior} & p(\g_1) & p(\g_2) & p(\g_3) \\
    & \downarrow & \downarrow  & \downarrow   \\
    \text{Transformed prior} & p(\x_1) & p(\q_2) & p(\q_3) \\
    & \downarrow & \vdots & \vdots  \\
    \text{Update core 1} & p(\x_1\mid\q_2,\q_3,\y)= & \vdots & \vdots\\
    & \frac{p(\y\mid\x_1,\q_2,\q_3)p(\x_1^0)}{p(\y\mid\q_2,\q_3)} & \vdots & \vdots\\
    & \downarrow  & \downarrow & \vdots \\
    \text{Move norm to core 2\;\;\;\;} & p(\q_1\mid\x_2,\q_3,\y) &  p(\x_2) & \vdots \\
     & & \downarrow & \vdots \\
    \text{Update core 2} & & p(\x_2\mid\q_1,\q_3,\y)= & \vdots\\
     & & \frac{p(\y\mid\q_1,\x_2,\q_3)p(\x_2^0)}{p(\y\mid\q_1,\q_3)} & \vdots \\
     & & \downarrow & \downarrow \\
    \text{Move norm to core 3} & & p(\q_2\mid\q_1,\x_3,\y) & p(\x_3)  \\
     & & & \downarrow \\
    \text{Update core 3} & & & p(\x_3\mid\q_1,\q_2,\y)=\\
     & & & \frac{p(\y\mid\q_1,\q_2,\x_3)p(\x_3^0)}{p(\y\mid\q_1,\q_2)}
    \end{array}
    \]
    \caption{Distribution updates with orthogonalization step for example with three TT-cores.}
    \label{fig:ex2}
\end{figure}
    
\end{example}

The ALS in a Bayesian framework with orthogonalization step has another difference compared to the one without orthogonalization. The update equations for the mean and covariance, \cref{eq:post_m} and \cref{eq:post_P}, are affected by the TT decomposition being in site-$n$-mixed-canonical form: the matrix $\U_{\setminus n}$ becomes orthogonal and the update equations simplify to
\begin{equation*}
\m^+_n=\underbrace{\left[(\cov_n^{0})^{-1}+\frac{\id}{\sigma^{2}}\right]^{-1}}_{\cov^+_n}\left[\frac{\U_{\setminus n}^\top \y}{\sigma^{2}}+(\cov_n^{0})^{-1} \m_n^{0}\right]. \label{eq:mPortho}
\end{equation*}
In this case $\U_{\setminus n}^\top \y$ corresponds to the update of the conventional ALS \eqref{eq:normaleq}, due to the orthogonality of $\U_{\setminus n}$.\\

\Cref{alg:ortho} summarizes the ALS in a Bayesian framework with orthogonalization step for a TT decomposition. The computational cost of one update in \cref{alg:ortho} is $\mathcal{O}(R^6I^3)$ for the inversion and $\mathcal{O}(R^3I^2)$ for the thin $\mathbf{QR}$-factorization and the storage requirement is $\mathcal{O}(R^2I+R^4I^2)$, where $R$ is the largest TT-rank and $I$ is the largest dimension of the approximated tensor. The only difference compared to the conventional ALS in terms of complexity, is the additionally required storage for the covariance matrices. Thus, the number of elements of one TD component, depending on the ranks, will be the limiting factor for the computational complexity. 

\begin{algorithm}[H]
\caption{ALS in Bayesian framework with orthogonalization step}
\label{alg:ortho}
\begin{algorithmic}[1]
\REQUIRE{Prior mean $\{\m_i^0\}$ and covariance $\{\cov^0_i\}$, $i=1,\dots N$, measurement $\y$ and noise variance $\sigma^2$.}
\ENSURE{Posterior mean $\{\m_i^+\}$ and covariance $\{\cov_i^+\}$, $i=1,\dots N$.}
\STATE{Transform random variables such that the corresponding TT decomposition is in site-$1$-mixed-canonical form.}
\setstretch{1.5}
\STATE{Compute $\U_{\setminus 1}$ with \cref{eq:multilin} for TT with $\{\m_i^0\}_{i\neq1}$.}
\STATE{Set $\{\m_i\}:= \{\m_i^0\}$, $\{\cov_i\}:= \{\cov_i^0\}$.}
\WHILE{stopping criterion is not true}
\FOR{$n=1,\dots,N,N-1,\dots 2$}
\STATE{$\cov_n^+ \leftarrow \left[(\cov_n^{0})^{-1}+\frac{\id}{\sigma^{2}}\right]^{-1}$}
\STATE{$\m_n^+ \leftarrow \cov_n^+ \left[ \frac{\U_{\setminus n}^\top \y}{\sigma^2} + \left( \cov_n^0 \right)^{-1}\m_n^0 \right]$}
\IF{next core is to the right}
\STATE{$\m_n^+ \leftarrow \operatorname{vec}(\Q_n^\mathrm{L})$, with $\Q_n^\mathrm{L}$ from thin $\mathbf{QR}$-factorization of $\G_n^\mathrm{L}$}
\STATE{$\cov_n^+ \leftarrow \left( \left( \mathbf{R}_n^\mathrm{L} \right)^{-\top} \otimes \id \right) \; \cov_n^+ \; \left( \left( \mathbf{R}_n^\mathrm{L} \right)^{-1} \otimes \id \right)$}
\STATE{$\m_{n+1}\leftarrow(\id\otimes \R^\mathrm{L}_n)\m_{n+1}$}
\STATE{$\cov_{n+1}\leftarrow (\id\otimes \R^\mathrm{L}_n)\; \cov_{n+1} \; (\id\otimes \left(\R^\mathrm{L}_n\right)^\top)$}
\STATE{Recompute $\U_{\setminus n+1}$ with $\m_n^+$.}
\ELSIF{next core is on the left}
\STATE{$\m_n^+ \leftarrow \operatorname{vec}\left(\left(\Q^\mathrm{R}_n\right)^\top\right)$, with $\Q_n^\mathrm{R}$ from thin $\mathbf{QR}$-factorization of $\G_n^\mathrm{R}$}
\STATE{$\cov_n^+ \leftarrow \left(\id\otimes\left(\R_n^\mathrm{R}\right)^{-\top}\right) \; \cov_n^+ \; \left(\id\otimes\left(\R_n^\mathrm{R}\right)^{-1}\right)$}
\STATE{$\m_{n-1}\leftarrow(\R_n^\mathrm{R}\otimes \id)\m_{n-1}$}
\STATE{$\cov_{n-1} \leftarrow(\R_n^\mathrm{R}\otimes \id)\; \cov_{n-1} \; \left(\left(\R_n^\mathrm{R}\right)^\top\otimes \id\right)$}
\STATE{Recompute $\U_{\setminus n-1}$ with $\m_n^+$.}
\ENDIF
\STATE{$\m_n\leftarrow \m_n^+$, $\cov_n\leftarrow \cov_n^+$}
\STATE{Apply the transformations of the lines 7-10 or 12-15 to $\m_n^0,\cov_n^0$.}
\ENDFOR
\ENDWHILE
\end{algorithmic}
\end{algorithm}

\section{Unscented transform in TT format}
\label{sec:UT}
In \cref{alg:noqr} and \cref{alg:ortho} we compute the posterior distributions of the TT-cores $p(\g_n \mid \{\g_i\}_{i\neq n}, \y)$. However, we are interested in computing the distribution of the low-rank tensor estimate $\Gt$, which is computed with a non-linear function dependent on the posterior distributions and is, therefore, not Gaussian. The unscented transform (UT) \cite{Julier2004} can approximate the mean $\m_\textrm{UT}$ and covariance $\cov_\textrm{UT}$ of the sought distribution. In this section, we show how we can perform the UT in TT format. In this way, the direct computation of the potentially large covariance matrix can be avoided.\\

Generally, the UT approximates the mean and covariance of a distribution that is a non-linear function of a known distribution $\h \sim \mathcal{N}(\m,\cov)$ with mean $\m\in\mathbb{R}^{M\times 1}$ and covariance $\cov\in\mathbb{R}^{M\times M}$ \cite[p.\ 81-84]{Sarkka2010}.
Firstly, $2M+1$ sigma points are formed with
\begin{align}
    \x^{(0)} &=\m, \label{eq:sigmapoints1}\\
    \x^{(i)} &=\m+\sqrt{M+\lambda}\;\left[\sqrt{\cov}\right]_{i}, \quad i=1, \ldots, M, \label{eq:sigmapoints2}\\
    \x^{(i+M)} &=\m-\sqrt{M+\lambda}\; \left[\sqrt{\cov} \right]_{i},\quad i=1, \ldots, M,
\label{eq:sigmapoints3}
\end{align}
where the square root of the covariance matrix $\sqrt{\cov}$ corresponds to the Cholesky factor, such that $\sqrt{\cov}\sqrt{\cov}^\top=\cov$, where $\left[\sqrt{\cov}\right]_{i}$ is the $i$-th column of that matrix. The scaling parameter $\lambda$ is defined as $\lambda=\alpha^2(M+\kappa)-M$, where $\alpha$ and $\kappa$ determine the spread of the sigma points around the mean. Secondly, the sigma points are propagated through the non-linearity and thirdly, the approximated mean $\m_\mathrm{UT}$ and covariance $\cov_\mathrm{UT}$ are computed as
\begin{align}
    \m_\mathrm{UT} &= \sum_{i=0}^{2 M} w_i^\m \mathbfcal{S}^{(i)}, \label{eq:UT_mean}\\
     \cov_\mathrm{UT}&=\sum_{i=0}^{2 M} w_i^\cov\left(\mathbfcal{S}^{(i)}-\m_\mathrm{UT}\right)\left(\mathbfcal{S}^{(i)}-\m_\mathrm{UT}\right)^\top,
\label{eq:covUT}
\end{align}
where $\mathbfcal{S}^{(i)}$ are the transformed sigma points. The scalars $w_i^\m$ and $w_i^\cov$ denote weighting factors, defined as
\begin{align*}
w_0^\m &=\frac{\lambda}{M+\lambda},\quad w_0^\cov =w_0^\m+\left(1-\alpha^{2}+\beta\right),\\
w_i^\m = w^\m &= w_i^\cov = w^\cov = \frac{1}{2(M+\lambda)}, \quad i=1, \ldots, 2 M. \label{eq:weights}
\end{align*}
Literature suggests $\alpha=0.001$, $\kappa=3-M$ \cite[p.\ 229]{Haykin2001} and for Gaussian distributions $\beta=2$ \cite[p.\ 229]{Sarkka2010}.\\

In order to use the UT in TT format, the known distribution $\h\sim \mathcal{N}(\m,\cov)$ is computed from the cores' mean and covariance as
\begin{align}
\h \sim \mathcal{N}( \mathbf{m},\cov) = \mathcal{N}\left(
\begin{bmatrix}
\m_1\\
\m_2\\
\vdots\\
\m_N
\end{bmatrix},
\begin{bmatrix}
\cov_1 & 0 & \dots & 0 \\
0 & \cov_2 & \ddots & \vdots \\
\vdots & \ddots & \ddots & 0 \\
0 & \dots & 0 &\cov_N
\end{bmatrix}
\right).\label{eq:g}
\end{align}
The mean, consisting of the stacked vectorized cores, is of size $M\times 1$ with
\begin{equation*}
    M=\sum_{n=1}^NR_nI_nR_{n+1}, \quad R_1=R_{N+1}=1. 
\end{equation*}

The covariance matrix of size $M\times M$ is block diagonal, since we assume the TT-cores to be statistically independent. The non-linear function for the UT in TT format is defined as
\begin{equation}
f_\mathrm{T}:\; \mathbb{R}^{M\times1} \rightarrow \mathbb{R}^{I_1\times I_2 \times ... \times I_N}\;\; \text{given by}\;\; \x \mapsto \mathbfcal{G},
\label{eq:UT_fT}
\end{equation}
where $\x$ is a vector of size $M\times1$. The transformation of a vector into a tensor and is depicted in \cref{fig:UT_nonlinfun}.\\

\begin{figure}
\centering
\begin{tikzpicture}

\draw  [line width=0.3mm](-4,1.7) ellipse (0.4 and 0.4);
\draw  [line width=0.3mm](-2.8,1.7) ellipse (0.4 and 0.4);
\draw [line width=0.3mm] (-0.7,1.7) ellipse (0.4 and 0.4);
\draw [line width=0.3mm](-3.6,1.7) -- (-3.2,1.7);
\draw [line width=0.3mm](-1.5,1.7) -- (-1.1,1.7);
\draw [line width=0.3mm](-4,1.3) -- (-4,0.8);
\draw [line width=0.3mm](-2.8,1.3) -- (-2.8,0.8);
\draw [line width=0.3mm] (-0.7,1.3) -- (-0.7,0.8);
\node at (-4,1.7) {$\mathbfcal{G}_1$};
\node at (-2.8,1.7) {$\mathbfcal{G}_2$};
\node at (-0.7,1.7) {$\mathbfcal{G}_N$};
\draw [line width=0.3mm] (-6.6,1.7) ellipse (0.4 and 0.4);
\node at (-6.6,1.7) {$\mathbf{x}$};
\draw[line width=0.3mm] (-6.6,1.3) -- (-6.6,0.8);
\node at (0,1.6) {=};
\draw [line width=0.3mm] (0.7,1.7) ellipse (0.4 and 0.4);
\node at (0.7,1.7) {$\mathbfcal{G}$};
\draw [line width=0.3mm](0.37,1.46) -- (0.37,0.8);
\draw [line width=0.3mm](0.62,1.3) -- (0.62,0.8);
\draw [line width=0.3mm](1.04,0.8) -- (1.04,1.45);
\draw[line width=0.3mm] [->] (-5.8,1.7) -- (-4.8,1.7);
\node at (-5.3,1.92) {$f_\mathrm{T}$};
\node at (-6.6,0.5) {$M$};
\node at (-3.8,1) {$I_1$};
\node at (-2.6,1) {$I_2$};
\node at (-0.43,1) {$I_N$};
\node at (0.34,0.6) {$I_1$};
\node at (0.62,0.6) {$I_2$};
\node at (1.05,0.6) {$I_N$};
\node at (-3.4,2) {$R_2$};
\node at (-1.3,2) {$R_N$};
\draw [line width=0.3mm](-2.4,1.7) -- (-2,1.7);
\node at (-2.2,2) {$R_3$};
\node at (-1.72,1.7) {$\dots$};
\node at (0.84,1) {$...$};
\end{tikzpicture}
\caption{Visual depiction of the non-linear transformation from vector $\x$ into a TT with cores $\Gt_i$, $i=1,\dots,N$ that represents tensor $\Gt$.}
\label{fig:UT_nonlinfun}
\end{figure}
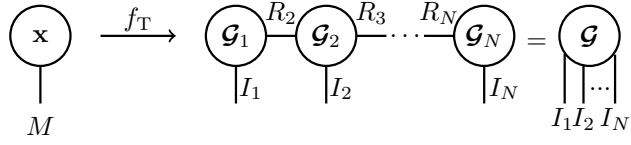

The formation and propagation of the sigma points works as follows. The first sigma point $\x^{(0)}$ is the mean $\m$ from \cref{eq:g} and propagated through the non-linearity, it corresponds to the TT represented by the distributions determined by \cref{alg:noqr}. To facilitate later steps, the remaining sigma points from \cref{eq:sigmapoints2} and \cref{eq:sigmapoints3} are organized into two matrices, according to

\begin{align}
    \mathbf{A}_+ &= \sum_{i=1}^M \x^{(i)}\e_i^\top
    \label{eq:sigmamatrix1}\\
    \mathbf{A}_- &= \sum_{i=1}^M \x^{(M+i)}\e_i^\top,
    \label{eq:sigmamatrix2}
\end{align}
where $\mathbf{e}_i$ denotes a vector with zeros everywhere except a 1 at location $i$. In this way, the propagation through the non-linearity of all sigma points then becomes a propagation of every summand $\x^{(i)}\e_i^\top$ and $\x^{(M+i)}\e_i^\top$, respectively. The propagation is achieved by forming TT-cores from $\x^{(i)}$ and $\x^{(M+i)}$ and summing over a rank-1 connection between the first core and the vector $\e_i^\top$ as shown in \cref{fig:UT_propagation}.

\begin{figure}
\centering
\begin{tikzpicture}
\draw [line width=0.3mm] (-3.4,2) ellipse (0.3 and 0.3);
\draw [line width=0.3mm] (-2.5,2) ellipse (0.3 and 0.3);
\draw [line width=0.3mm] (-1.6,2) ellipse (0.3 and 0.3);
\draw [line width=0.3mm] (-3.4,2.9) ellipse (0.3 and 0.3);
\draw [line width=0.3mm](-3.1,2) -- (-2.8,2);
\draw [line width=0.3mm](-2.2,2) -- (-1.9,2);
\draw [line width=0.3mm](-3.4,1.7) -- (-3.4,1.3);
\draw [line width=0.3mm](-2.5,1.7) -- (-2.5,1.3);
\draw [line width=0.3mm](-1.6,1.7) -- (-1.6,1.3);
\draw [line width=0.5mm,dotted](-3.4,2.3) -- (-3.4,2.6);
\draw [line width=0.3mm](-3.4,3.2) -- (-3.4,3.5);
\node at (-3.2,3.4) {$M$};
\node at (-3.4,2.9) {$\mathbf{e}_j^\top$};
\node at (-7.2,2.4) {$\mathbf{x}^{(i)}\mathbf{e}_j^\top=
\begin{bmatrix}  
0 \cdots 0 \;\;\;\;\;\; 0\cdots 0 \\
\vdots\;\;\;\;\;\;\;\;  \;\;\;\;\;\;\vdots\\
\mathbf{x}^{(i)}\\
\vdots\;\;\;\;\;\;\;\;  \;\;\;\;\;\;\vdots\\
0 \cdots 0 \;\;\;\;\;\; 0\cdots 0
\end{bmatrix}$};
\node at (-4.5,2.4) {$\rightarrow$};
\draw  [line width=0.3mm](-6.5,2.7) -- (-6.5,3.5);
\draw  [line width=0.3mm](-6.5,1.3) -- (-6.5,2.2);
\node at (0.8,2.67) {$i=1\dots 2M+1$};
\node at (0.39,2.17) {$j=1\dots M$};
\end{tikzpicture}
\caption{Propagation of each sigma point as column of matrix $ \x^{(i)}\e_i^\top$ through the non-linearity.}
\label{fig:UT_propagation}
\end{figure}
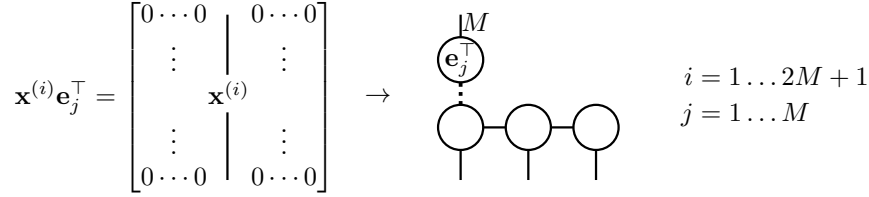

Then, all summands of $\mathbf{A}_+ \in \mathbb{R}^{M\times M}$ and $\mathbf{A}_- \in \mathbb{R}^{M\times M}$, are summed together, respectively, by stacking the cores according to \cite[p.\ 2308]{Oseledets2012}, as illustrated in \cref{fig:UT_mean} (left). The summation causes the ranks of the TTm to increase and a rounding procedure \cite[p.\ 2301-2305]{Oseledets2011} needs to be applied to reduce the ranks back to the required precision. Finally, the vector containing the weights $\mathbf{w^m}=w^\m\mathbf{1}_M$ is absorbed into the first core (\cref{fig:UT_mean}, right).\\

\begin{figure}
\centering
\begin{tikzpicture}
\draw  [line width=0.3mm](-1.5,-0.3) ellipse (0.3 and 0.3);
\draw  [line width=0.3mm](-0.6,-0.3) ellipse (0.3 and 0.3);
\draw  [line width=0.3mm](0.3,-0.3) ellipse (0.3 and 0.3);
\draw [line width=0.3mm](-1.2,-0.3) -- (-0.9,-0.3);
\draw [line width=0.3mm](-0.3,-0.3) -- (0,-0.3);
\draw [line width=0.3mm](-1.5,-0.6) -- (-1.5,-1);
\draw [line width=0.3mm](-0.6,-0.6) -- (-0.6,-1);
\draw [line width=0.3mm](0.3,-0.6) -- (0.3,-1);
\node at (-1.3,-0.9) {$I_1$};
\node at (-0.4,-0.9) {$I_2$};
\node at (0.5,-0.9) {$I_3$};
\draw  [line width=0.3mm](-1.5,0.6) ellipse (0.3 and 0.3);
\node at (-1.1,1.1) {$M$};
\draw [dotted,line width=0.5mm](-1.5,0.3) -- (-1.5,0);
\draw [line width=0.3mm](-1.5,0.9) -- (-1.5,1.2);
\node at (-1.5,0.6) {$\mathbf{e}_j^\top$};
\node at (-2.4,-0.2) {$\sum_{j=1}^{M}$};
\draw [line width=0.3mm] (1.9,-0.3) ellipse (0.3 and 0.3);
\draw [line width=0.3mm] (2.8,-0.3) ellipse (0.3 and 0.3);
\draw [line width=0.3mm] (3.7,-0.3) ellipse (0.3 and 0.3);
\draw  [line width=0.3mm](2.2,-0.3) -- (2.5,-0.3);
\draw  [line width=0.3mm](3.1,-0.3) -- (3.4,-0.3);
\draw  [line width=0.3mm](1.9,-0.6) -- (1.9,-1);
\draw  [line width=0.3mm](2.8,-0.6) -- (2.8,-1);
\draw  [line width=0.3mm](3.7,-0.6) -- (3.7,-1);
\draw  [line width=0.3mm](1.9,0) -- (1.9,0.3);
\node at (2.2,0.2) {$M$};
\node at (1.3,-0.3) {=};
\draw   [line width=0.3mm](5.1,-0.3) ellipse (0.3 and 0.3);
\draw   [line width=0.3mm](6,-0.3) ellipse (0.3 and 0.3);
\draw   [line width=0.3mm](6.9,-0.3) ellipse (0.3 and 0.3);
\draw  [line width=0.3mm](5.4,-0.3) -- (5.7,-0.3);
\draw  [line width=0.3mm](6.3,-0.3) -- (6.6,-0.3);
\draw  [line width=0.3mm](5.1,-0.6) -- (5.1,-1);
\draw  [line width=0.3mm](6,-0.6) -- (6,-1);
\draw  [line width=0.3mm](6.9,-0.6) -- (6.9,-1);
\draw  [line width=0.3mm](5.1,0) -- (5.1,0.4);
\node at (5.4,0.2) {$M$};
\node at (5.3,-0.9) {$I_1$};
\node at (6.2,-0.9) {$I_2$};
\node at (7.1,-0.9) {$I_3$};
\draw  [line width=0.3mm] (5.1,0.7) ellipse (0.3 and 0.3);
\node at (5.1,0.7) {$\mathbf{w}^\mathbf{m}$};

\draw (4.4,1.3) -- (4.4,-1);
\end{tikzpicture}
\caption{Visual depiction of \cref{eq:UT_mean} as sum over sigma points (left) and absorption of the weight vector $\mathbf{w^m}$ into the first core (right).}
\label{fig:UT_mean}
\end{figure}
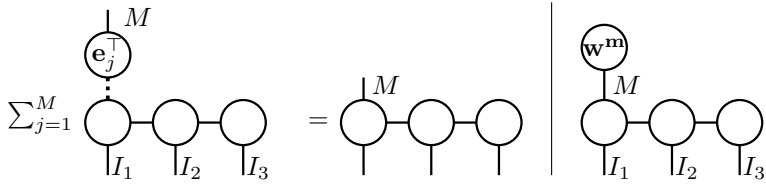

The computation of the covariance matrix from \cref{eq:covUT} is divided into two steps. Firstly, $\m_\mathrm{UT}$ is subtracted from the sum over the sigma points (\cref{fig:UT_mean}, left). This is achieved by creating a matrix, where $\m_\mathrm{UT}$ is stacked $M$ times next to each other. The visual depiction of this operation equals to the one in \cref{fig:UT_propagation} with the difference that the multiplied vector is $\mathbf{1}_M^\top$. Secondly, the result from the first step absorbs the weights into the first core, as depicted in \cref{fig:UT_op}. The resulting covariance matrix $\cov_\mathrm{UT}$ in the three-core example is a TT matrix that corresponds to a matrix of size $I_1I_2I_3\times I_1I_2I_3$.\\

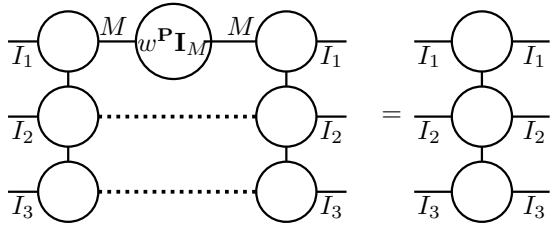
\begin{figure}
\centering
\begin{tikzpicture}
\draw [line width=0.3mm] (-6,3.7) ellipse (0.4 and 0.4);
\draw [line width=0.3mm] (-6,4.7) ellipse (0.4 and 0.4);
\draw [line width=0.3mm] (-6,2.7) ellipse (0.4 and 0.4);
\draw [line width=0.3mm] (-3.1,4.7) ellipse (0.4 and 0.4);
\draw [line width=0.3mm] (-3.1,3.7) ellipse (0.4 and 0.4);
\draw [line width=0.3mm] (-3.1,2.7) ellipse (0.4 and 0.4);
\draw [line width=0.3mm] (-4.6,4.7) ellipse (0.5 and 0.5);
\draw [line width=0.3mm](-3.1,4.3) -- (-3.1,4.1);
\draw [line width=0.3mm](-3.1,3.3) -- (-3.1,3.1);
\draw [line width=0.3mm](-6,4.3) -- (-6,4.1);
\draw [line width=0.3mm](-6,3.3) -- (-6,3.1);
\draw [line width=0.3mm](-6.4,4.7) -- (-6.8,4.7);
\draw [line width=0.3mm](-6.4,3.7) -- (-6.8,3.7);
\draw [line width=0.3mm](-6.4,2.7) -- (-6.8,2.7);
\node at (-6.6,4.5) {$I_1$};
\node at (-6.6,3.5) {$I_2$};
\node at (-6.6,2.5) {$I_3$};
\draw [line width=0.3mm](-4.2,4.7) -- (-3.5,4.7);
\node at (-3.7,4.9) {$M$};
\draw [line width=0.3mm](-2.7,3.7) -- (-2.3,3.7);
\draw [line width=0.3mm](-2.7,2.7) -- (-2.3,2.7);
\draw [line width=0.3mm](-2.7,4.7) -- (-2.3,4.7);
\node at (-2.5,4.5) {$I_1$};
\node at (-2.5,3.5) {$I_2$};
\node at (-2.5,2.5) {$I_3$};
\draw [line width=0.3mm](-5.6,4.7) -- (-5.1,4.7);
\node at (-5.4,4.9) {$M$};
\node at (-4.6,4.7) {$w^\mathbf{P}\mathbf{I}_M$};
\node at (-1.7,3.7) {=};
\draw  [line width=0.3mm](-0.5,4.7) ellipse (0.4 and 0.4);
\draw  [line width=0.3mm](-0.5,3.7) ellipse (0.4 and 0.4);
\draw  [line width=0.3mm](-0.5,2.7) ellipse (0.4 and 0.4);
\draw [line width=0.3mm](-0.5,4.3) -- (-0.5,4.1);
\draw [line width=0.3mm](-0.5,3.3) -- (-0.5,3.1);
\draw [line width=0.3mm](-0.1,4.7) -- (0.4,4.7);
\draw [line width=0.3mm](-0.1,3.7) -- (0.4,3.7);
\draw [line width=0.3mm](-0.1,2.7) -- (0.4,2.7);
\draw [line width=0.3mm](-0.9,4.7) -- (-1.4,4.7);
\draw [line width=0.3mm](-0.9,3.7) -- (-1.4,3.7);
\draw [line width=0.3mm](-0.9,2.7) -- (-1.4,2.7);
\node at (-1.2,4.5) {$I_1$};
\node at (-1.2,3.5) {$I_2$};
\node at (-1.2,2.5) {$I_3$};
\node at (0.2,4.5) {$I_1$};
\node at (0.2,3.5) {$I_2$};
\node at (0.2,2.5) {$I_3$};
\draw [line width=0.5mm,dotted](-5.6,3.7) -- (-3.5,3.7);
\draw [line width=0.5mm,dotted](-5.6,2.7) -- (-3.5,2.7);
\end{tikzpicture}
\caption{Visual depiction of \cref{eq:covUT} with $w^\cov\id_M$ as a diagonal matrix containing the weight factors on the diagonal.}
\label{fig:UT_op}
\end{figure}
The computation of the approximate mean and covariance with the UT in TT format is summarized in \cref{alg:UTalg}. The computational cost depends on the ranks of the TD, since $M$ is a function of the ranks. The bottleneck of \cref{alg:UTalg} is the rounding procedure necessary after performing summations in TT format. It has a cost of $\mathcal{O}(R^3IN)$.

\begin{algorithm}
\caption{Approximation of the low-rank tensor estimate's mean and covariance with the unscented transform in TT format.}
\label{alg:UTalg}
\begin{algorithmic}[1]
\REQUIRE{The mean and covariances of each TT-core $\{\m_i,\cov_i\}$, $i=1,\dots,N$ computed with the ALS in a Bayesian framework.}
\ENSURE{The approximated mean $\m_\mathrm{UT}$ and covariance $\cov_\mathrm{UT}$ in TT format of the low-rank tensor estimate's distribution.}
\STATE{Compute sigma point $\x^{(0)}$ with \cref{eq:sigmapoints1}.}
\STATE{Compute remaining sigma points with \cref{eq:sigmapoints2,eq:sigmapoints3} and organize them into groups according to \cref{eq:sigmamatrix1,eq:sigmamatrix2}.}
\STATE{Propagate sigma points through \cref{eq:UT_fT}, where groups from step 2 are propagated as shown in \cref{fig:UT_nonlinfun}.}
\STATE{Estimate the mean $\m_\mathrm{UT}$ with \cref{eq:UT_mean} as shown in \cref{fig:UT_mean}.}
\STATE{Estimate the covariance $\cov_\mathrm{UT}$ with \cref{eq:covUT} as shown in \cref{fig:UT_op}.}
\end{algorithmic}
\end{algorithm}

\section{Numerical experiments}
In this section, we present the numerical examples that test the algorithms. All experiments with exception of the last were performed with MATLAB R2020b on a Dell computer with processor Intel(R) Core(TM) i7-8650U CPU @ 1.90GHz 2.11 GHz and 8GB of RAM. The last experiment is performed on a Lenovo computer with processor Intel(R) Core(TM) i7-10700KF CPU @ 3.80GHz 3.79 GHz and 16GB of RAM. The implementation of the experiments can be found on \url{https://gitlab.tudelft.nl/cmmenzen/bayesian-als}.\\

The first three experiments are performed with a random TT, $\mathbfcal{G}$, that represents the ground truth and has the cores
\begin{equation*}
  \Gt_{1,\mathrm{truth}}\in\mathbb{R}^{1\times5\times3}, \quad \Gt_{2,\mathrm{truth}}\in\mathbb{R}^{3\times5\times3}\;\; \text{and}\;\; \Gt_{3,\mathrm{truth}}\in\mathbb{R}^{3\times5\times1}. 
\end{equation*} 
The entries of each TT-core are drawn from a standard normal distribution. After computing the tensor $\Yt_\mathrm{truth}\in\mathbb{R}^{5\times5\times5}$ from the TT-cores and vectorizing it, a noisy sample $\y$ is formed with
\begin{equation}
    \y=\y_\mathrm{truth}+\boldsymbol{\epsilon} \qquad \boldsymbol{\epsilon}\sim\mathcal{N}(\boldsymbol{0},\sigma^2\id),
    \label{eq:noisyinstances}
\end{equation}
where $\y_\mathrm{truth}$ denotes the vectorized ground truth and $\boldsymbol{\epsilon}$ is a realization of random noise. The noisy samples of the same underlying tensor formed with \cref{eq:noisyinstances} are uncorrelated. The covariance of the measurement noise is influenced by fixing the signal-to-noise ratio
\begin{equation*}
    \mathrm{SNR_{dB}} = 10 \log_{10} \frac{\|\y\|^2}{\|\boldsymbol{\epsilon}\|^2}.
\end{equation*}
If not stated otherwise, the signal-to-noise ratio is set to zero dB. Some experiments use multiple noisy samples $\y$, computed from \cref{eq:noisyinstances}. In this case, the estimate is recursively updated. Initially, the prior TT is inputted to \cref{alg:noqr} together with a sample $\y$. After the execution of \cref{alg:noqr}, the output mean and covariance is used as a prior for the next execution together with a new sample $\y$. This recursive updating is very suitable for the ALS in a Bayesian framework, because it can deal with prior knowledge on the TD components. For the conventional ALS, the estimate from an execution of the algorithm that computes a TT with the ALS is used as an initial TT for the next execution.  

\subsection{Convergence analysis of maximization problem} 
\label{sec:convergence}
In the ALS in a Bayesian framework, we solve the optimization problem given by \cref{eq:optproblem}. In this context, we define the relative error between the low-rank estimate $\g$ and the ground truth $\y_\mathrm{truth}$ as
\begin{equation*}
    \varepsilon_\mathrm{truth} = \frac{\|\y_\mathrm{truth}-\g\|}{\|\y_\mathrm{truth}\|}
\end{equation*}
and the relative error between the low-rank estimate $\g$ and the noisy sample $\y$ as
\begin{equation*}
    \varepsilon_\mathrm{meas} = \frac{\|\y-\g\|}{\|\y\|}.
\end{equation*}
In the first experiment, we look at the errors defined above in order to analyze the convergence of \cref{alg:noqr}. In addition, we look at the evolution of the log likelihood times the prior, since from \cref{thm:jointpost} it follows that the numerator of the logarithm of \cref{eq:Bayes} needs to be maximized to compute the posterior of all TD components. \\

In this experiment, only one noisy sample $\y$ is used. The prior mean is initialized randomly and the covariance for each core is set to $200^2\id$, meaning a low certainty on the prior mean. The experiment is performed 100 times with the same TT, $\mathbfcal{G}$, but with different priors. Then, the mean of the 100 results is plotted with a region of twice the standard deviation. \Cref{fig:convergence} (left) shows how both relative errors decrease rapidly and converge after approximately 5 iterations in \cref{alg:noqr}. \Cref{fig:convergence} (right) shows how the product of log likelihood and prior increases during the first approximately 6 iterations, converging to a fixed value. Both subfigures of \cref{fig:convergence} also show how the region of twice the standard deviation from the 100 trials, becomes smaller with an increasing number of iterations. Hence, it can be concluded that \cref{alg:noqr} converges and therefore also the optimization problem.

\begin{figure}
    \centering
%
%
\definecolor{mycolor2}{rgb}{0,0,0}%
\definecolor{mycolor3}{rgb}{0.7 0.7 0.7}%
\definecolor{mycolor1}{rgb}{0.8 0.8 0.8}%
\begin{tikzpicture}

\begin{axis}[%
width=1.7in,
height=1.5in,
at={(0.758in,0.481in)},
scale only axis,
xmin=1,
xmax=20,
xlabel style={font=\color{white!15!black}},
xlabel={Iterations},
ymin=0.4,
ymax=1.2,
ylabel style={font=\color{white!15!black}},
ylabel={Relative error},
axis background/.style={fill=white},
axis x line*=bottom,
axis y line*=left,
legend style={legend cell align=left, align=left, draw=white!15!black}
]

\addplot[area legend, draw=none, fill=mycolor1, forget plot]
table[row sep=crcr] {%
x	y\\
1	1.10370966338837\\
2	0.927405894670975\\
3	0.86907326932338\\
4	0.851088340563966\\
5	0.843386832956193\\
6	0.83827121180184\\
7	0.83451943159716\\
8	0.831932066225677\\
9	0.829953976545478\\
10	0.828309107146338\\
11	0.827033833795604\\
12	0.826275514129881\\
13	0.826078822469235\\
14	0.825959599980788\\
15	0.825810353486496\\
16	0.825512898182099\\
17	0.825220716332982\\
18	0.825061705147904\\
19	0.824967461262897\\
20	0.824892010859262\\
20	0.817315496216783\\
19	0.817097752708477\\
18	0.816840779022254\\
17	0.816490968294639\\
16	0.815976835567575\\
15	0.815434598676041\\
14	0.815018760781016\\
13	0.814600503876279\\
12	0.814058685114188\\
11	0.812821896517655\\
10	0.810819522810542\\
9	0.808074042417551\\
8	0.804464959443574\\
7	0.79958282694542\\
6	0.792906555140509\\
5	0.784200383785283\\
4	0.771714284293232\\
3	0.748033910934126\\
2	0.69740165481406\\
1	0.676702649523528\\
}--cycle;
\addplot [color=mycolor2, line width=1.0pt]
  table[row sep=crcr]{%
1	0.890206156455949\\
2	0.812403774742517\\
3	0.808553590128753\\
4	0.811401312428599\\
5	0.813793608370738\\
6	0.815588883471175\\
7	0.81705112927129\\
8	0.818198512834626\\
9	0.819014009481514\\
10	0.81956431497844\\
11	0.819927865156629\\
12	0.820167099622035\\
13	0.820339663172757\\
14	0.820489180380902\\
15	0.820622476081268\\
16	0.820744866874837\\
17	0.82085584231381\\
18	0.820951242085079\\
19	0.821032606985687\\
20	0.821103753538023\\
};
\addlegendentry{$\varepsilon_\mathrm{truth}$}

\addplot[area legend, draw=none, fill=mycolor3, forget plot]
table[row sep=crcr] {%
x	y\\
1	0.80009378048444\\
2	0.586284186936996\\
3	0.517428504544162\\
4	0.487593506027529\\
5	0.468072236034699\\
6	0.454407573846288\\
7	0.445283110440066\\
8	0.439082335078049\\
9	0.434812911710348\\
10	0.432404034895834\\
11	0.431011307980397\\
12	0.429397626736794\\
13	0.427529716133122\\
14	0.426531016134975\\
15	0.425641667424169\\
16	0.424722538323039\\
17	0.424039699636544\\
18	0.423753410988007\\
19	0.423687022502292\\
20	0.423664650059984\\
20	0.423465092803918\\
19	0.423457463716731\\
18	0.423410416035752\\
17	0.423153341378632\\
16	0.422515642390413\\
15	0.421655548973938\\
14	0.420839180470171\\
13	0.419952367110611\\
12	0.418268249826378\\
11	0.416904381820289\\
10	0.415933725359231\\
9	0.414329159909985\\
8	0.41156515452325\\
7	0.408077824956246\\
6	0.403874904840275\\
5	0.398997603733503\\
4	0.395213167157693\\
3	0.396170659924925\\
2	0.405787499947355\\
1	0.497101875828161\\
}--cycle;
\addplot [color=mycolor2, dashed, line width=1.0pt]
  table[row sep=crcr]{%
1	0.648597828156301\\
2	0.496035843442175\\
3	0.456799582234543\\
4	0.441403336592611\\
5	0.433534919884101\\
6	0.429141239343282\\
7	0.426680467698156\\
8	0.425323744800649\\
9	0.424571035810167\\
10	0.424168880127532\\
11	0.423957844900343\\
12	0.423832938281586\\
13	0.423741041621867\\
14	0.423685098302573\\
15	0.423648608199053\\
16	0.423619090356726\\
17	0.423596520507588\\
18	0.423581913511879\\
19	0.423572243109511\\
20	0.423564871431951\\
};
\addlegendentry{$\varepsilon_\mathrm{meas}$}

\end{axis}
\end{tikzpicture}%
%
%
\definecolor{mycolor2}{rgb}{0 0 0}%
\definecolor{mycolor1}{rgb}{0.8,0.8,0.8}%
\begin{tikzpicture}

\begin{axis}[%
width=1.7in,
height=1.5in,
at={(0.758in,0.481in)},
scale only axis,
xmin=1,
xmax=20,
xlabel style={font=\color{white!15!black}},
xlabel={Iterations},
ymin=-1370,
ymax=-1352,
ylabel style={font=\color{white!15!black}},
ylabel={$\log \mathrm{Likelihood } \cdot \mathrm{ Prior}$},
axis background/.style={fill=white},
axis x line*=bottom,
axis y line*=left,
legend style={legend cell align=left, align=left, draw=white!15!black}
]

\addplot[area legend, draw=none, fill=mycolor1]
table[row sep=crcr] {%
x	y\\
1	-1356.23815463544\\
2	-1354.13940552114\\
3	-1354.0369577568\\
4	-1353.98843482358\\
5	-1354.05418023885\\
6	-1354.14653913299\\
7	-1354.22295032678\\
8	-1354.28460107797\\
9	-1354.33031164905\\
10	-1354.35150929058\\
11	-1354.36111136741\\
12	-1354.37855193202\\
13	-1354.39605484385\\
14	-1354.40144102587\\
15	-1354.40568939751\\
16	-1354.40850314598\\
17	-1354.40852313814\\
18	-1354.4069505602\\
19	-1354.40505608938\\
20	-1354.40316317362\\
20	-1354.70264275136\\
19	-1354.71464335315\\
18	-1354.72795294627\\
17	-1354.74320251425\\
16	-1354.7620355132\\
15	-1354.78587845913\\
14	-1354.81369832854\\
13	-1354.84624466752\\
12	-1354.89594507504\\
11	-1354.95094879168\\
10	-1355.00682914936\\
9	-1355.08944943336\\
8	-1355.22127374708\\
7	-1355.40852720843\\
6	-1355.67867051953\\
5	-1356.08086310414\\
4	-1356.6610472595\\
3	-1357.56588705418\\
2	-1359.86686918479\\
1	-1368.3856649943\\
}--cycle;

\addplot [color=mycolor2, line width=1.0pt]
  table[row sep=crcr]{%
1	-1362.31190981487\\
2	-1357.00313735296\\
3	-1355.80142240549\\
4	-1355.32474104154\\
5	-1355.0675216715\\
6	-1354.91260482626\\
7	-1354.8157387676\\
8	-1354.75293741253\\
9	-1354.70988054121\\
10	-1354.67916921997\\
11	-1354.65603007955\\
12	-1354.63724850353\\
13	-1354.62114975569\\
14	-1354.6075696772\\
15	-1354.59578392832\\
16	-1354.58526932959\\
17	-1354.57586282619\\
18	-1354.56745175323\\
19	-1354.55984972126\\
20	-1354.55290296249\\
};

\end{axis}
\end{tikzpicture}%
    \caption{Left: Evolution of the relative errors during 20 iterations in \cref{alg:noqr}. Right: Evolution of log likelihood times the prior during 20 iterations in \cref{alg:noqr}.}
    \label{fig:convergence}
\end{figure}
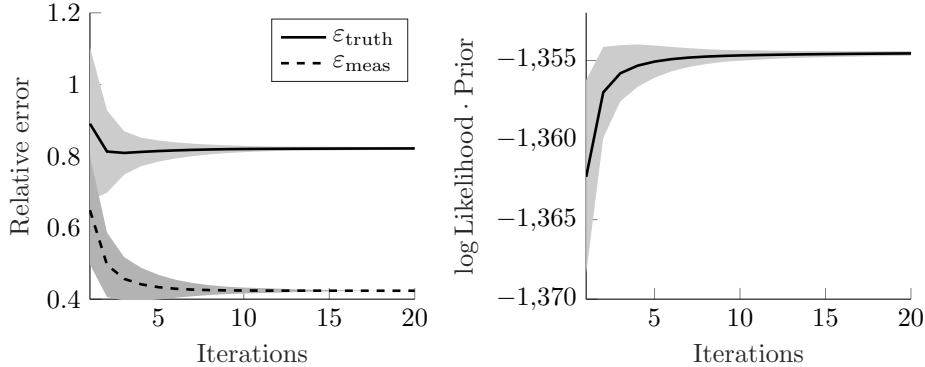


\subsection{Analysis of covariance matrices}
In the second experiment, we look at how the covariance matrix of each core changes throughout the iterations in \cref{alg:ortho}. We also examine the covariance matrix of the low-rank tensor estimate, computed with the unscented transform in TT format. The experiment is performed 100 times with the same TT, $\mathbfcal{G}$, but with different priors, as in \cref{sec:convergence}. Then, the mean of the 100 results is plotted with a region of twice the standard deviation. \Cref{fig:Pn} shows the trace and Frobenius norm of the covariance matrix of the core that will be updated next, after the norm is moved to this core. Both the trace and Frobenius norm of each core's covariance matrix decrease and converge to a fixed value. For the first and third core, the values are smaller than for the second, because the second core has a larger number of elements. The convergence behavior is also shown in \cref{fig:P}, where the trace and Frobenius norm of the covariance matrix of the low-rank tensor estimate converge quickly to a fixed value. The decreasing and converging values of the trace (\cref{fig:convergence} top) and the Frobenius norm (\cref{fig:convergence} bottom) indicate that the uncertainty of the mean decreases and then remains constant with an increasing number of iterations. In the next experiments, we will use the information of the covariance matrices to visualize a confidence interval for our estimate. 

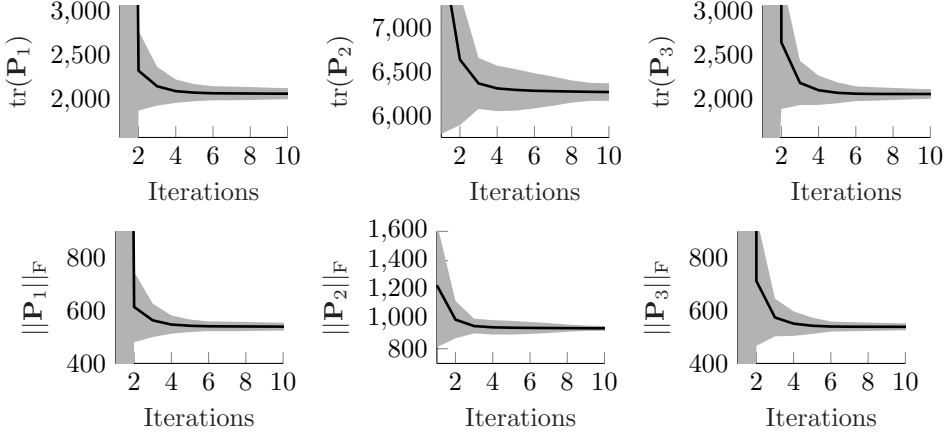
\begin{figure}
    \centering
    \setlength\fwidth{0.18\textwidth}
%
%
\definecolor{mycolor2}{rgb}{0 0 0}%
\definecolor{mycolor1}{rgb}{0.7 0.7 0.7}%
\begin{tikzpicture}

\begin{axis}[%
width=0.951\fwidth,
height=0.75\fwidth,
at={(0\fwidth,0\fwidth)},
scale only axis,
xmin=1,
xmax=10,
xlabel style={font=\color{white!13!black}},
xlabel={Iterations},
ymin=1590.70621648517,
ymax=3090.70621648517,
ylabel style={font=\color{white!13!black}},
ylabel={$\mathrm{tr}(\mathbf{P}_1)$},
axis background/.style={fill=white},
axis x line*=bottom,
axis y line*=left,
legend style={legend cell align=left, align=left, draw=white!15!black}
]

\addplot[area legend, draw=none, fill=mycolor1]
table[row sep=crcr] {%
x	y\\
1	76798.8863935678\\
2	2802.19392510574\\
3	2392.41397190747\\
4	2249.10821675521\\
5	2196.58846645939\\
6	2172.48563418929\\
7	2168.67452772808\\
8	2163.21579070996\\
9	2155.97095090073\\
10	2150.39248563863\\
11	2146.30689723514\\
12	2144.58547598525\\
13	2144.72694294915\\
14	2142.34375741339\\
15	2140.82278357662\\
16	2138.50822169083\\
17	2139.20385315164\\
18	2137.53973040452\\
19	2138.68700988403\\
20	2137.63066866389\\
20	2029.65370257834\\
19	2028.79549624577\\
18	2030.63992521783\\
17	2029.27400461228\\
16	2030.83776392434\\
15	2029.2371031735\\
14	2029.19645952335\\
13	2027.95945114683\\
12	2028.73557620191\\
11	2027.7851018805\\
10	2024.9308901066\\
9	2021.08061029756\\
8	2016.12933541114\\
7	2013.00891470231\\
6	2012.34940582844\\
5	2002.19061417495\\
4	1983.71009151126\\
3	1951.74206416024\\
2	1896.10788949522\\
1	-23172.5015280163\\
}--cycle;

\addplot [color=mycolor2, line width=1.0pt]
  table[row sep=crcr]{%
1.96355649951556	3240.70621648517\\
2	2349.15090730048\\
3	2172.07801803386\\
4	2116.40915413323\\
5	2099.38954031717\\
6	2092.41752000886\\
7	2090.8417212152\\
8	2089.67256306055\\
9	2088.52578059915\\
10	2087.66168787262\\
11	2087.04599955782\\
12	2086.66052609358\\
13	2086.34319704799\\
14	2085.77010846837\\
15	2085.02994337506\\
16	2084.67299280758\\
17	2084.23892888196\\
18	2084.08982781118\\
19	2083.7412530649\\
20	2083.64218562111\\
};

\end{axis}
\end{tikzpicture}%
%
%
\definecolor{mycolor2}{rgb}{0 0 0}%
\definecolor{mycolor1}{rgb}{0.7 0.7 0.7}%
\begin{tikzpicture}

\begin{axis}[%
width=0.951\fwidth,
height=0.75\fwidth,
at={(0\fwidth,0\fwidth)},
scale only axis,
xmin=1,
xmax=10,
xlabel style={font=\color{white!13!black}},
xlabel={Iterations},
ymin=5783.17260711981,
ymax=7283.17260711981,
ylabel style={font=\color{white!13!black}},
ylabel={$\mathrm{tr}(\mathbf{P}_2)$},
axis background/.style={fill=white},
axis x line*=bottom,
axis y line*=left,
legend style={legend cell align=left, align=left, draw=white!15!black}
]

\addplot[area legend, draw=none, fill=mycolor1]
table[row sep=crcr] {%
x	y\\
1	10087.9219527652\\
2	7412.19894375499\\
3	6686.81698488517\\
4	6598.8572853575\\
5	6558.81142854173\\
6	6513.25733745271\\
7	6475.02729476663\\
8	6431.41790438586\\
9	6403.67544522148\\
10	6398.65556191281\\
11	6401.85418278003\\
12	6404.58620397328\\
13	6405.29304464641\\
14	6407.15604282258\\
15	6403.91024689278\\
16	6405.76572121994\\
17	6402.11962369008\\
18	6404.26405043244\\
19	6400.52364832772\\
20	6399.97754491833\\
20	6190.8674557443\\
19	6190.41200455714\\
18	6186.16028985751\\
17	6189.11072430303\\
16	6185.06404348093\\
15	6187.8499488142\\
14	6184.74327812357\\
13	6188.10110967026\\
12	6189.97833350976\\
11	6194.26855586356\\
10	6200.03913779564\\
9	6199.48906096311\\
8	6178.80200623961\\
7	6143.07552820965\\
6	6113.52538840666\\
5	6087.73146835306\\
4	6081.99080944117\\
3	6107.44330279721\\
2	5923.79223345328\\
1	5822.32684549314\\
}--cycle;

\addplot [color=mycolor2, line width=1.0pt]
  table[row sep=crcr]{%
1.40551636148757	7433.17260711981\\
2	6667.99558860414\\
3	6397.13014384119\\
4	6340.42404739934\\
5	6323.27144844739\\
6	6313.39136292969\\
7	6309.05141148814\\
8	6305.10995531273\\
9	6301.58225309229\\
10	6299.34734985423\\
11	6298.0613693218\\
12	6297.28226874152\\
13	6296.69707715833\\
14	6295.94966047308\\
15	6295.88009785349\\
16	6295.41488235044\\
17	6295.61517399656\\
18	6295.21217014498\\
19	6295.46782644243\\
20	6295.42250033131\\
};

\end{axis}
\end{tikzpicture}%
%
%
\definecolor{mycolor2}{rgb}{0 0 0}%
\definecolor{mycolor1}{rgb}{0.7 0.7 0.7}%
\begin{tikzpicture}

\begin{axis}[%
width=0.951\fwidth,
height=0.75\fwidth,
at={(0\fwidth,0\fwidth)},
scale only axis,
xmin=1,
xmax=10,
xlabel style={font=\color{white!13!black}},
xlabel={Iterations},
ymin=1591.97289046178,
ymax=3091.97289046178,
ylabel style={font=\color{white!13!black}},
ylabel={$\mathrm{tr}(\mathbf{P}_3)$},
axis background/.style={fill=white},
axis x line*=bottom,
axis y line*=left,
legend style={legend cell align=left, align=left, draw=white!15!black}
]

\addplot[area legend, draw=none, fill=mycolor1]
table[row sep=crcr] {%
x	y\\
1	10000\\
2	3429.78039075948\\
3	2463.81810555114\\
4	2294.75108974286\\
5	2218.19610945778\\
6	2173.00722944592\\
7	2163.15581635104\\
8	2154.76260652751\\
9	2146.39677715455\\
10	2140.15707517753\\
11	2136.43464725\\
12	2134.85600909592\\
13	2134.73732275473\\
14	2132.36606264616\\
15	2131.92521129643\\
16	2129.64049179772\\
17	2130.82518472201\\
18	2128.96274748336\\
19	2130.47410504258\\
20	2129.96895651433\\
20	2037.27814444784\\
19	2036.91699197054\\
18	2039.19254113396\\
17	2037.1900364113\\
16	2039.24736941047\\
15	2037.16107063453\\
14	2038.0782037884\\
13	2036.04737106524\\
12	2036.09441714344\\
11	2034.62561371528\\
10	2031.08087856724\\
9	2025.12159571934\\
8	2017.22871502576\\
7	2009.93145040341\\
6	2004.94486880647\\
5	1980.49965098294\\
4	1961.78472416061\\
3	1960.14551709733\\
2	1915.29032530212\\
1	-3000\\
}--cycle;

\addplot [color=mycolor2, line width=1.0pt]
  table[row sep=crcr]{%
1.99999635461427	3241.97289046645\\
2	2672.5353580308\\
3	2211.98181132423\\
4	2128.26790695173\\
5	2099.34788022036\\
6	2088.9760491262\\
7	2086.54363337723\\
8	2085.99566077664\\
9	2085.75918643694\\
10	2085.61897687238\\
11	2085.53013048264\\
12	2085.47521311968\\
13	2085.39234690999\\
14	2085.22213321728\\
15	2084.54314096548\\
16	2084.4439306041\\
17	2084.00761056665\\
18	2084.07764430866\\
19	2083.69554850656\\
20	2083.62355048108\\
};

\end{axis}
\end{tikzpicture}%
%
%
\definecolor{mycolor2}{rgb}{0 0 0}%
\definecolor{mycolor1}{rgb}{0.7 0.7 0.7}%
\begin{tikzpicture}

\begin{axis}[%
width=0.951\fwidth,
height=0.75\fwidth,
at={(0\fwidth,0\fwidth)},
scale only axis,
xmin=1,
xmax=10,
xlabel style={font=\color{white!13!black}},
xlabel={Iterations},
ymin=400,
ymax=900,
ylabel style={font=\color{white!13!black}},
ylabel={$||\mathbf{P}_1||_\mathrm{F}$},
axis background/.style={fill=white},
axis x line*=bottom,
axis y line*=left,
legend style={legend cell align=left, align=left, draw=white!15!black}
]

\addplot[area legend, draw=none, fill=mycolor1]
table[row sep=crcr] {%
x	y\\
1	31360.5169263042\\
2	747.09388056007\\
3	626.227359151724\\
4	581.620931236552\\
5	566.199816520752\\
6	558.895702591155\\
7	558.142698638076\\
8	557.016859915676\\
9	555.346896162247\\
10	554.107678210794\\
11	553.160995420822\\
12	552.796079039127\\
13	552.958470951754\\
14	552.330754227146\\
15	551.764251601093\\
16	551.215708524746\\
17	551.274132046157\\
18	550.935609880855\\
19	551.105296226113\\
20	550.88748696349\\
20	525.332824560325\\
19	525.176204576116\\
18	525.506720146787\\
17	525.264278767709\\
16	525.531264310241\\
15	525.18824295966\\
14	525.00372903246\\
13	524.687212333756\\
12	525.011885517074\\
11	524.863318896185\\
10	524.272583252051\\
9	523.533726483725\\
8	522.52792679715\\
7	522.06207729749\\
6	522.173851868168\\
5	518.695431118133\\
4	512.53494775216\\
3	499.641130846195\\
2	480.352037591919\\
1	-11617.0674720107\\
}--cycle;

\addplot [color=mycolor2, line width=1.0pt]
  table[row sep=crcr]{%
1.8837657346603	1689.81999310138\\
2	613.722959075995\\
3	562.93424499896\\
4	547.077939494356\\
5	542.447623819442\\
6	540.534777229662\\
7	540.102387967783\\
9	539.440311322986\\
10	539.190130731423\\
11	539.012157158503\\
12	538.903982278101\\
13	538.822841642755\\
14	538.667241629803\\
15	538.476247280376\\
17	538.269205406933\\
18	538.221165013822\\
19	538.140750401114\\
20	538.110155761908\\
};

\end{axis}
\end{tikzpicture}%
%
%
\definecolor{mycolor2}{rgb}{0 0 0}%
\definecolor{mycolor1}{rgb}{0.7 0.7 0.7}%
\begin{tikzpicture}

\begin{axis}[%
width=0.951\fwidth,
height=0.75\fwidth,
at={(0\fwidth,0\fwidth)},
scale only axis,
xmin=1,
xmax=10,
xlabel style={font=\color{white!13!black}},
xlabel={Iterations},
ymin=700,
ymax=1600,
ylabel style={font=\color{white!13!black}},
ylabel={$||\mathbf{P}_2||_\mathrm{F}$},
axis background/.style={fill=white},
axis x line*=bottom,
axis y line*=left,
legend style={legend cell align=left, align=left, draw=white!15!black}
]

\addplot[area legend, draw=none, fill=mycolor1]
table[row sep=crcr] {%
x	y\\
1	1654.15801016324\\
2	1125.25442572812\\
3	1004.80683694189\\
4	995.141301617852\\
5	990.090874390746\\
6	982.815783646402\\
7	974.738790483238\\
8	965.223324339939\\
9	958.338817569146\\
10	955.699428867709\\
11	955.219034194049\\
12	955.190323145843\\
13	955.092431577418\\
14	955.122470842804\\
15	954.765233367831\\
16	954.853361444929\\
17	954.499058929109\\
18	954.62356888353\\
19	954.280591546826\\
20	954.202283551178\\
20	923.364572995825\\
19	923.300019503839\\
18	922.873644169245\\
17	923.126398465392\\
16	922.706215584054\\
15	922.941195230844\\
14	922.601020266737\\
13	922.86540952514\\
12	922.95099303892\\
11	923.171825143061\\
10	923.116139217032\\
9	921.241699316786\\
8	915.601850629368\\
7	907.521502825066\\
6	900.967112887531\\
5	896.861322753453\\
4	897.054086584954\\
3	904.541757883045\\
2	871.453473890369\\
1	810.491510889114\\
}--cycle;

\addplot [color=mycolor2, line width=1.0pt]
  table[row sep=crcr]{%
1	1232.32476052618\\
2	998.353949809246\\
3	954.674297412468\\
4	946.097694101403\\
5	943.4760985721\\
6	941.891448266967\\
7	941.130146654152\\
8	940.412587484654\\
9	939.790258442966\\
10	939.40778404237\\
11	939.195429668555\\
12	939.070658092381\\
13	938.978920551279\\
14	938.861745554771\\
15	938.853214299337\\
16	938.779788514491\\
17	938.812728697251\\
18	938.748606526387\\
19	938.790305525333\\
20	938.783428273501\\
};

\end{axis}
\end{tikzpicture}%
%
%
\definecolor{mycolor2}{rgb}{0 0 0}%
\definecolor{mycolor1}{rgb}{0.7 0.7 0.7}%
\begin{tikzpicture}

\begin{axis}[%
width=0.951\fwidth,
height=0.75\fwidth,
at={(0\fwidth,0\fwidth)},
scale only axis,
xmin=1,
xmax=10,
xlabel style={font=\color{white!13!black}},
xlabel={Iterations},
ymin=400,
ymax=900,
ylabel style={font=\color{white!13!black}},
ylabel={$||\mathbf{P}_3||_\mathrm{F}$},
axis background/.style={fill=white},
axis x line*=bottom,
axis y line*=left,
legend style={legend cell align=left, align=left, draw=white!15!black}
]

\addplot[area legend, draw=none, fill=mycolor1]
table[row sep=crcr] {%
x	y\\
1	17700.168957\\
2	957.0284467497\\
3	645.045102124702\\
4	596.947969514599\\
5	573.450395708042\\
6	558.600961883994\\
7	556.328003794033\\
8	554.701554713727\\
9	553.0364488109\\
10	551.71505603283\\
11	550.87077034611\\
12	550.515009098917\\
13	550.544790632084\\
14	549.95808031886\\
15	549.781082273484\\
16	549.214210643099\\
17	549.48838990772\\
18	549.044681499803\\
19	549.399449297175\\
20	549.278422432715\\
20	526.856938040262\\
19	526.775122296917\\
18	527.313676742571\\
17	526.846976457114\\
16	527.334973143791\\
15	526.834497174163\\
14	527.00350381309\\
13	526.516136834223\\
12	526.588346409571\\
11	526.273263416557\\
10	525.502737481507\\
9	524.304789375108\\
8	522.841065299734\\
7	521.586415240738\\
6	520.663607646951\\
5	511.627572605496\\
4	504.269155465192\\
3	502.597307757744\\
2	466.76948787526\\
1	-6020.7814972\\
}--cycle;

\addplot [color=mycolor2, line width=1.0pt]
  table[row sep=crcr]{%
1.99998325174465	1690.14548817277\\
2	711.89896731248\\
3	573.821204941223\\
4	550.608562489895\\
5	542.538984156769\\
6	539.632284765472\\
7	538.957209517386\\
8	538.771310006731\\
9	538.670619093004\\
10	538.608896757168\\
11	538.572016881334\\
13	538.530463733154\\
14	538.480792065975\\
15	538.307789723824\\
16	538.274591893445\\
17	538.167683182417\\
18	538.179179121187\\
19	538.087285797046\\
20	538.067680236489\\
};

\end{axis}
\end{tikzpicture}%
    \caption{Top: Trace, bottom: Frobenius norm of covariance matrix of $\Gt_1$, $\Gt_2$ and $\Gt_3$.}
    \label{fig:Pn}
\end{figure}

\begin{figure}
    \centering
%
%
\definecolor{mycolor2}{rgb}{0 0 0}%
\definecolor{mycolor1}{rgb}{0.7 0.7 0.7}%
\begin{tikzpicture}

\begin{axis}[%
width=1.2in,
height=1.2in,
at={(0.758in,0.481in)},
scale only axis,
xmin=1,
xmax=5,
xlabel style={font=\color{white!15!black}},
xlabel={Number of iterations},
ymode=log,
ymin=0.01,
ymax=1000000,
yminorticks=true,
ylabel style={font=\color{white!15!black}},
ylabel={$\mathrm{tr}(\mathbf{P}_\mathrm{UT})$},
axis background/.style={fill=white},
legend style={legend cell align=left, align=left, draw=white!15!black}
]

\addplot[area legend, draw=none, fill=mycolor1]
table[row sep=crcr] {%
x	y\\
1	162275.388160692\\
2	11533.1674974475\\
3	10765.8449600642\\
4	10561.8241394912\\
5	10519.8673231031\\
5	10418.6485494752\\
4	10411.1393942641\\
3	10330.301303268\\
2	10241.0662727897\\
1	0.01\\
}--cycle;

\addplot [color=mycolor2, line width=1.0pt]
  table[row sep=crcr]{%
1	63134.0509357772\\
2	10887.1168851186\\
3	10548.0731316661\\
4	10486.4817668776\\
5	10469.2579362892\\
};

\end{axis}

\begin{axis}[%
width=1.2in,
height=1.2in,
at={(3.327in,0.481in)},
scale only axis,
xmin=1,
xmax=5,
xlabel style={font=\color{white!15!black}},
xlabel={Number of iterations},
ymode=log,
ymin=0.01,
ymax=100000,
yminorticks=true,
ylabel style={font=\color{white!15!black}},
ylabel={$||\mathbf{P}_\mathrm{UT}||_\mathrm{F}$},
axis background/.style={fill=white},
legend style={legend cell align=left, align=left, draw=white!15!black}
]

\addplot[area legend, draw=none, fill=mycolor1]
table[row sep=crcr] {%
x	y\\
1	60807.9040626897\\
2	1712.53421784249\\
3	1557.46644088027\\
4	1521.5108444637\\
5	1513.12241680125\\
5	1496.63988531634\\
4	1494.50494518111\\
3	1480.79154538019\\
2	1456.71404336698\\
1	0.01\\
}--cycle;

\addplot [color=mycolor2, line width=1.0pt]
  table[row sep=crcr]{%
1	21168.6542103251\\
2	1584.62413060473\\
3	1519.12899313023\\
4	1508.00789482241\\
5	1504.88115105879\\
};

\end{axis}
\end{tikzpicture}%
    \caption{Left: Trace, right: Frobenius norm of covariance matrix of low-rank tensor estimate.}
    \label{fig:P}
\end{figure}
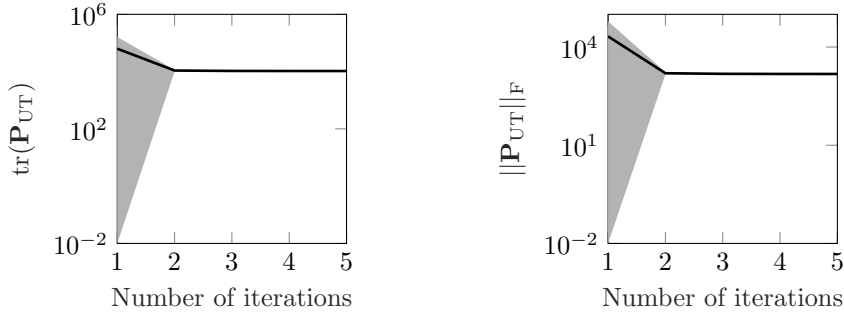


\subsection{Comparison to conventional ALS}
The main benefits of the ALS in a Bayesian framework are the uncertainty quantification of the low-rank tensor estimate, as well as the incorporation of prior knowledge. In the third experiment, we show the benefits by comparing the ALS in a Bayesian framework to the conventional ALS. \\

\Cref{fig:signal} depicts the vectorized low-rank tensor estimate for the ALS and the mean of the ALS in a Bayesian framework's estimate with a $95\%$ confidence interval in comparison with the ground truth. The uncertainty measure is computed from the diagonal elements of $\cov_\mathrm{UT}$. The top figure shows the estimate using one noisy sample $\y$ for the ALS in a Bayesian framework and the bottom using 100 noisy samples. While the ALS does not improve when taking into account multiple noisy samples, the ALS in a Bayesian framework improves in two aspects. The error between the mean and the truth becomes smaller and the estimate becomes more certain. Also, in the top figure, where the uncertainty is relatively large, the ground truth almost always lies inside the confidence interval and therefore the ALS in a Bayesian framework provides more information than the conventional ALS.\\

\begin{figure}
\centering
%
%
\definecolor{mycolor1}{rgb}{0.60784,0.74902,0.73725}%
\definecolor{mycolor2}{rgb}{0.97255,0.23529,0.36471}%
\definecolor{mycolor3}{rgb}{0.96863,0.70196,0.20000}%
\definecolor{mycolor4}{rgb}{0.01176,0.50196,0.58431}%
\begin{tikzpicture}

\begin{axis}[%
width=4in,
height=1.3in,
at={(0.758in,0.481in)},
scale only axis,
xmin=0,
xmax=70,
ymin=-10,
ymax=10,
axis background/.style={fill=white},
axis x line*=bottom,
axis y line*=left,
legend style={at={(0.03,0.03)}, anchor=south west, legend cell align=left, align=left, draw=white!15!black}
]

\addplot[area legend, draw=none, fill=mycolor1, forget plot]
table[row sep=crcr] {%
x	y\\
1	0.9788640380011\\
2	4.81848970570207\\
3	0.750889147677242\\
4	3.00028725545229\\
5	3.30792374983945\\
6	0.562826775999863\\
7	3.82246860736621\\
8	5.33662730180692\\
9	3.19092442244756\\
10	1.10017513533643\\
11	3.75013330652833\\
12	2.82977433747253\\
13	1.40844390403487\\
14	2.1716966256177\\
15	1.49458645813979\\
16	3.18571862809223\\
17	1.25338516647826\\
18	3.86266497391371\\
19	1.14661597813633\\
20	1.46402179645402\\
21	1.19443492234227\\
22	0.751973476043353\\
23	1.33907374259747\\
24	4.97848483030973\\
25	3.73585807585891\\
26	1.14241434707273\\
27	4.32929718599835\\
28	2.1358026369648\\
29	2.85496666933278\\
30	1.42698827277763\\
31	3.15171382460281\\
32	0.882934080340183\\
33	2.79816817346404\\
34	4.16783424239412\\
35	2.97768443855743\\
36	1.97616155978761\\
37	-0.00540675176692718\\
38	3.17651756739171\\
39	-0.241244526830279\\
40	8.77190826003315\\
41	7.68346991409798\\
42	3.16398246130708\\
43	8.62905573997013\\
44	5.38884389369882\\
45	9.00967603176845\\
46	4.15727815792847\\
47	9.87869131722827\\
48	0.138929071393746\\
49	1.66689381309392\\
50	4.45509895641493\\
51	1.55885588535163\\
52	1.39555576274365\\
53	0.248546169551324\\
54	1.74860225083604\\
55	0.140057681863178\\
56	7.56780095905406\\
57	6.36777372829957\\
58	2.03232706170777\\
59	7.20361250359473\\
60	3.89859983186929\\
61	6.47642349917718\\
62	3.06483465964838\\
63	7.07935752128378\\
64	-0.326352502044081\\
64	-8.04007049452089\\
63	0.235357629353636\\
62	-2.28480968063517\\
61	0.0260349317653232\\
60	-2.34761377819763\\
59	0.207101948747641\\
58	-3.82918636350263\\
57	-0.113075859678178\\
56	0.262405479612118\\
55	-5.9127467018523\\
54	-3.67304496014924\\
53	-5.49655185063285\\
52	-4.65854191719878\\
51	-4.80402043523916\\
50	-1.07463218513509\\
49	-4.17324380313482\\
48	-9.88167655315749\\
47	1.00895065426568\\
46	-3.28861303158893\\
45	0.609912265273289\\
44	-2.98827480741026\\
43	0.200950003471073\\
42	-4.00747360607076\\
41	-0.139558367737359\\
40	-0.158378015220241\\
39	-7.5855296415496\\
38	-3.28002775886492\\
37	-6.89197251092226\\
36	-5.16616898242097\\
35	-3.78750994288425\\
34	-1.38646342487556\\
33	-3.2438034483747\\
32	-4.85159847043081\\
31	-1.66540589402107\\
30	-1.15560215003189\\
29	-1.49432282052292\\
28	-1.71267604211958\\
27	-0.936408196891024\\
26	-2.70733937633515\\
25	-0.924871818752896\\
24	-0.899336764237121\\
23	-3.22133348133576\\
22	-3.15211444337872\\
21	-3.06132857381712\\
20	-3.68198950228737\\
19	-4.62368888331346\\
18	-1.14482146966043\\
17	-4.02505485237748\\
16	-4.72097333728071\\
15	-4.90890226642023\\
14	-2.05255352841892\\
13	-4.33299245109834\\
12	-3.59297591707926\\
11	-2.75660912504902\\
10	-3.67529743105817\\
9	-2.5169818428626\\
8	-2.9318453018602\\
7	-2.71857700862563\\
6	-4.96929368499301\\
5	-2.72094293712192\\
4	-4.9486218279776\\
3	-6.87598784926068\\
2	-1.76347395454177\\
1	-6.0198865973878\\
}--cycle;
\addplot [color=mycolor2, line width=1.0pt, mark=o, mark options={solid, mycolor2}]
  table[row sep=crcr]{%
1	-2.23132257355782\\
2	1.19174142356624\\
3	-2.06081266125176\\
4	-1.52429465388157\\
5	1.21564361529743\\
6	-2.0380990079169\\
7	1.64079569226637\\
8	1.36133239616704\\
9	-0.691063994276686\\
10	-1.86070901916833\\
11	0.0226990655866579\\
12	0.769773135769144\\
13	-1.88574094866571\\
14	1.18273129087673\\
15	-2.11989324557536\\
16	-0.641992152799128\\
17	-3.56964680685343\\
18	2.46721834155735\\
19	-3.79861114756737\\
20	-1.98552646851621\\
21	-0.498179150521262\\
22	-3.72965654251128\\
23	0.171210889887878\\
24	3.1464877404155\\
25	1.44546023226686\\
26	-2.84952326489917\\
27	2.26816236577447\\
28	1.42070684918466\\
29	1.35874153716065\\
30	0.367354793119547\\
31	1.93034012900507\\
32	-1.95997779663879\\
33	-0.704316542587256\\
34	3.99266264834337\\
35	-1.75323255149884\\
36	-2.42518609067285\\
37	-3.57688937160771\\
38	-1.31391118878023\\
39	-3.89936284983234\\
40	2.89090044173383\\
41	4.61025771616246\\
42	-0.273540076446117\\
43	3.62123956808239\\
44	1.90286895450925\\
45	4.98308850107258\\
46	-1.59673298636703\\
47	6.42517551113512\\
48	-2.06612707018935\\
49	-2.20567507022799\\
50	3.22750777906316\\
51	-2.91358059538016\\
52	-2.03478404700769\\
53	-2.433561161181\\
54	-2.6723718755664\\
55	-2.22643912419982\\
56	3.18686744792909\\
57	3.42375400235467\\
58	-1.60174802867315\\
59	3.34161446367326\\
60	1.64979275719611\\
61	3.96678801690217\\
62	-0.874511841840128\\
63	5.13522643293798\\
64	-2.18037713348578\\
};

\addplot [color=mycolor3, line width=1.0pt]
  table[row sep=crcr]{%
1	-3.92625442305173\\
2	4.58413549891743\\
3	-3.82387275370818\\
4	-4.25302285787598\\
5	-0.992055695312937\\
6	-2.0330403769656\\
7	1.16198826413729\\
8	2.5598267654344\\
9	-1.19578081491336\\
10	-0.137830762257944\\
11	-2.06383082983868\\
12	2.95558899961063\\
13	-5.81368330237354\\
14	2.51262608774712\\
15	-3.37289951662496\\
16	-0.696917386146849\\
17	-8.58772301514086\\
18	-0.989928413700006\\
19	-0.967272997128159\\
20	3.17876943978\\
21	-1.19034574503656\\
22	-5.08666972950711\\
23	2.78912586033481\\
24	6.56893658602984\\
25	-3.62059342690334\\
26	-4.15147070555584\\
27	2.01917448059508\\
28	5.67506052527377\\
29	-2.14166409182206\\
30	-1.81965394891446\\
31	4.41776818843056\\
32	-2.1188516392601\\
33	1.09947623871688\\
34	1.55648633846973\\
35	-1.56495737607638\\
36	-1.07734456109956\\
37	-4.76834528464546\\
38	1.68028177059894\\
39	-4.21118744464952\\
40	2.0743797078277\\
41	4.88160736040086\\
42	-2.51102657019507\\
43	2.07738697396996\\
44	2.37385569825635\\
45	2.61002805944267\\
46	-2.13450625539552\\
47	4.27163125747154\\
48	-1.25847497689877\\
49	-0.289086720160388\\
50	1.75908186948132\\
51	-2.17136153648917\\
52	-0.705219766916973\\
53	-6.35127512637931\\
54	1.14691127229196\\
55	-4.85126880170612\\
56	3.96630086257654\\
57	5.54412246890372\\
58	-4.05918365623781\\
59	3.09089075943561\\
60	4.16689300003159\\
61	2.96630907778702\\
62	-3.127708818472\\
63	6.40196306720396\\
64	-2.03680237878013\\
};

\addplot [color=mycolor4, line width=1.0pt]
  table[row sep=crcr]{%
1	-2.52051127969335\\
2	1.52750787558015\\
3	-3.06254935079172\\
4	-0.974167286262653\\
5	0.29349040635876\\
6	-2.20323345449657\\
7	0.551945799370288\\
8	1.20239099997336\\
9	0.336971289792481\\
10	-1.28756114786087\\
11	0.496762090739653\\
12	-0.381600789803363\\
13	-1.46227427353173\\
14	0.0595715485993899\\
15	-1.70715790414022\\
16	-0.76762735459424\\
17	-1.38583484294961\\
18	1.35892175212664\\
19	-1.73853645258856\\
20	-1.10898385291667\\
21	-0.933446825737423\\
22	-1.20007048366768\\
23	-0.941129869369145\\
24	2.0395740330363\\
25	1.40549312855301\\
26	-0.782462514631211\\
27	1.69644449455366\\
28	0.211563297422612\\
29	0.680321924404933\\
30	0.135693061372869\\
31	0.74315396529087\\
32	-1.98433219504531\\
33	-0.222817637455333\\
34	1.39068540875928\\
35	-0.404912752163405\\
36	-1.59500371131668\\
37	-3.44868963134459\\
38	-0.0517550957366087\\
39	-3.91338708418994\\
40	4.30676512240646\\
41	3.77195577318031\\
42	-0.421745572381837\\
43	4.4150028717206\\
44	1.20028454314428\\
45	4.80979414852087\\
46	0.43433256316977\\
47	5.44382098574698\\
48	-4.87137374088187\\
49	-1.25317499502045\\
50	1.69023338563992\\
51	-1.62258227494376\\
52	-1.63149307722756\\
53	-2.62400284054076\\
54	-0.9622213546566\\
55	-2.88634450999456\\
56	3.91510321933309\\
57	3.1273489343107\\
58	-0.898429650897427\\
59	3.70535722617118\\
60	0.775493026835829\\
61	3.25122921547125\\
62	0.390012489506603\\
63	3.65735757531871\\
64	-4.18321149828249\\
};

\end{axis}
\end{tikzpicture}%
%
%
\definecolor{mycolor1}{rgb}{0.60784,0.74902,0.73725}%
\definecolor{mycolor2}{rgb}{0.97255,0.23529,0.36471}%
\definecolor{mycolor3}{rgb}{0.96863,0.70196,0.20000}%
\definecolor{mycolor4}{rgb}{0.01176,0.50196,0.58431}%
\begin{tikzpicture}

\begin{axis}[%
width=4in,
height=1.3in,
at={(0.758in,0.481in)},
scale only axis,
xmin=0,
xmax=70,
ymin=-10,
ymax=8,
axis background/.style={fill=white},
axis x line*=bottom,
axis y line*=left,
legend style={at={(.4,-.6)}, anchor=south west, legend cell align=left, align=left, draw=white!15!black}
]

\addplot[area legend, draw=none, fill=mycolor1, forget plot]
table[row sep=crcr] {%
x	y\\
1	-1.11275346072123\\
2	2.7647930714467\\
3	-1.54892904138569\\
4	-0.0127896858063741\\
5	1.66842758132793\\
6	-1.46632042921788\\
7	2.26502243033061\\
8	2.01920747489087\\
9	0.404881806822531\\
10	-0.637411250186932\\
11	0.672029943726255\\
12	1.35182931612801\\
13	-1.0580143291618\\
14	1.20848346936862\\
15	-1.21003023804617\\
16	0.457873223394149\\
17	-1.23329989977932\\
18	3.33586626470459\\
19	-1.74439186228302\\
20	-0.558615838155751\\
21	-0.0561140186420788\\
22	-1.20814155341508\\
23	0.232804704558252\\
24	3.2931108986195\\
25	2.72706174250072\\
26	-0.724665053859724\\
27	3.16957535734444\\
28	2.01787259563934\\
29	2.13179352189213\\
30	1.45659181754138\\
31	2.31371951751727\\
32	-0.807174215481155\\
33	0.527313585071919\\
34	3.93602416309817\\
35	0.083214474663915\\
36	-0.795968122551079\\
37	-2.42465013577272\\
38	0.866037447892547\\
39	-2.83918939913292\\
40	4.0984610040981\\
41	5.93130065280092\\
42	1.06251272114238\\
43	6.2576308783139\\
44	3.11251601856371\\
45	6.62675240031057\\
46	1.68072452820315\\
47	7.28851328185777\\
48	-1.61247283023162\\
49	-1.33017862090764\\
50	4.18995269922197\\
51	-1.98477778507207\\
52	-1.47190202209441\\
53	-2.09535763735886\\
54	-1.07440218440589\\
55	-2.14721804879775\\
56	4.50903545223182\\
57	5.18965832667059\\
58	-0.71294488144403\\
59	5.75984873434707\\
60	2.7806402998868\\
61	5.42587705228283\\
62	1.73567278908299\\
63	5.95498542730421\\
64	-2.12247383877532\\
64	-4.18095073547329\\
63	3.65820125354987\\
62	-0.319084296198586\\
61	3.17797525904611\\
60	0.871405679304817\\
59	3.9589926397549\\
58	-2.91889725472529\\
57	3.31314969867954\\
56	2.44386790853383\\
55	-4.05864707496161\\
54	-3.33820683890014\\
53	-3.89543315490969\\
52	-3.6597266517371\\
51	-3.96238047408464\\
50	1.68970655030764\\
49	-3.40769626661883\\
48	-4.15719550123173\\
47	4.62935733507603\\
46	-0.87783153434038\\
45	4.01474670573149\\
44	0.519984196115302\\
43	3.83343337034529\\
42	-1.77088919081133\\
41	3.45570900974343\\
40	1.7466833817008\\
39	-4.95510726006581\\
38	-1.67888860663655\\
37	-4.43398779006708\\
36	-3.3386185649616\\
35	-2.18548857511011\\
34	1.10483685472365\\
33	-1.82338826089704\\
32	-2.74753187323568\\
31	0.155516177251462\\
30	-0.473407861338891\\
29	0.0491869071075435\\
28	-0.0720367511256276\\
27	1.21494231610803\\
26	-3.08347439109692\\
25	0.726972742943471\\
24	0.974899223076087\\
23	-1.91608168664959\\
22	-3.69997045562073\\
21	-2.08243763347548\\
20	-2.96450791600044\\
19	-3.92787895567782\\
18	0.647338366340515\\
17	-3.48593097976902\\
16	-1.59567412172407\\
15	-3.40682210844437\\
14	-0.858428792995479\\
13	-3.19279538746327\\
12	-0.76575627061469\\
11	-1.23925809591047\\
10	-3.04065651650582\\
9	-1.56610135945272\\
8	-0.171428403147798\\
7	0.326343265314588\\
6	-3.85988474558249\\
5	-0.14781539956223\\
4	-2.37087417857366\\
3	-3.61189915194052\\
2	0.101038817327322\\
1	-3.26090698491581\\
}--cycle;
\addplot [color=mycolor2, line width=1.0pt, mark=o, mark options={solid, mycolor2}]
  table[row sep=crcr]{%
1	-2.23132257355782\\
2	1.19174142356624\\
3	-2.06081266125176\\
4	-1.52429465388157\\
5	1.21564361529743\\
6	-2.0380990079169\\
7	1.64079569226637\\
8	1.36133239616704\\
9	-0.691063994276686\\
10	-1.86070901916833\\
11	0.0226990655866579\\
12	0.769773135769144\\
13	-1.88574094866571\\
14	1.18273129087673\\
15	-2.11989324557536\\
16	-0.641992152799128\\
17	-3.56964680685343\\
18	2.46721834155735\\
19	-3.79861114756737\\
20	-1.98552646851621\\
21	-0.498179150521262\\
22	-3.72965654251128\\
23	0.171210889887878\\
24	3.1464877404155\\
25	1.44546023226686\\
26	-2.84952326489917\\
27	2.26816236577447\\
28	1.42070684918466\\
29	1.35874153716065\\
30	0.367354793119547\\
31	1.93034012900507\\
32	-1.95997779663879\\
33	-0.704316542587256\\
34	3.99266264834337\\
35	-1.75323255149884\\
36	-2.42518609067285\\
37	-3.57688937160771\\
38	-1.31391118878023\\
39	-3.89936284983234\\
40	2.89090044173383\\
41	4.61025771616246\\
42	-0.273540076446117\\
43	3.62123956808239\\
44	1.90286895450925\\
45	4.98308850107258\\
46	-1.59673298636703\\
47	6.42517551113512\\
48	-2.06612707018935\\
49	-2.20567507022799\\
50	3.22750777906316\\
51	-2.91358059538016\\
52	-2.03478404700769\\
53	-2.433561161181\\
54	-2.6723718755664\\
55	-2.22643912419982\\
56	3.18686744792909\\
57	3.42375400235467\\
58	-1.60174802867315\\
59	3.34161446367326\\
60	1.64979275719611\\
61	3.96678801690217\\
62	-0.874511841840128\\
63	5.13522643293798\\
64	-2.18037713348578\\
};
\addlegendentry{Truth}

\addplot [color=mycolor3, line width=1.0pt]
  table[row sep=crcr]{%
1	-0.278110854729711\\
2	-2.24808729662734\\
3	-0.179977403754196\\
4	1.94998751549571\\
5	-0.267220365416024\\
6	-2.91675195201749\\
7	-0.741655415629391\\
8	3.49501787038707\\
9	-0.656706051843484\\
10	-5.49860654341877\\
11	-0.547403684962554\\
12	4.97935904526014\\
13	-3.04320851779034\\
14	0.0361975829455417\\
15	-2.27847346139821\\
16	-1.4782416189673\\
17	-2.18863694641259\\
18	-1.29533742392378\\
19	-0.927459397097054\\
20	-0.945275821966731\\
21	0.0954095053347401\\
22	-5.30950804375984\\
23	-0.00231639440133331\\
24	5.18603103891372\\
25	2.09616864845522\\
26	-3.02342707219307\\
27	3.18289147267474\\
28	1.28626499254591\\
29	1.42001377725187\\
30	1.676343635599\\
31	1.95628407866262\\
32	-2.33413182525371\\
33	-1.05095958283285\\
34	3.78387254156418\\
35	-1.96458359174278\\
36	-2.20354221932175\\
37	-4.25006069160384\\
38	3.00775076262472\\
39	-8.2250052394565\\
40	3.16667108793674\\
41	5.10545519059017\\
42	1.4664599988476\\
43	0.318961206872195\\
44	6.61312718780052\\
45	5.02655975597285\\
46	0.2616817879742\\
47	2.44553799407216\\
48	4.23573691931315\\
49	-3.83677171992167\\
50	0.082840069041881\\
51	-1.98167816396324\\
52	-3.31739561762379\\
53	-0.778300538437508\\
54	-6.5644103707921\\
55	-1.69357582172271\\
56	7.60991569400557\\
57	5.27148869231498\\
58	-1.56070930763718\\
59	5.82340951612942\\
60	0.990129451014009\\
61	5.49291890355289\\
62	2.90173592460899\\
63	5.29371781545209\\
64	-2.01945390167096\\
};
\addlegendentry{ALS}

\addplot [color=mycolor4, line width=1.0pt]
  table[row sep=crcr]{%
1	-2.18683022281852\\
2	1.43291594438701\\
3	-2.5804140966631\\
4	-1.19183193219002\\
5	0.76030609088285\\
6	-2.66310258740019\\
7	1.2956828478226\\
8	0.923889535871538\\
9	-0.580609776315093\\
10	-1.83903388334638\\
11	-0.283614076092106\\
12	0.293036522756662\\
13	-2.12540485831254\\
14	0.175027338186572\\
15	-2.30842617324527\\
16	-0.56890044916496\\
17	-2.35961543977417\\
18	1.99160231552255\\
19	-2.83613540898042\\
20	-1.7615618770781\\
21	-1.06927582605878\\
22	-2.4540560045179\\
23	-0.841638491045666\\
24	2.13400506084779\\
25	1.72701724272209\\
26	-1.90406972247832\\
27	2.19225883672624\\
28	0.972917922256858\\
29	1.09049021449984\\
30	0.491591978101243\\
31	1.23461784738437\\
32	-1.77735304435842\\
33	-0.648037337912561\\
34	2.52043050891091\\
35	-1.0511370502231\\
36	-2.06729334375634\\
37	-3.4293189629199\\
38	-0.406425579372003\\
39	-3.89714832959936\\
40	2.92257219289945\\
41	4.69350483127217\\
42	-0.354188234834471\\
43	5.0455321243296\\
44	1.81625010733951\\
45	5.32074955302103\\
46	0.401446496931386\\
47	5.9589353084669\\
48	-2.88483416573167\\
49	-2.36893744376323\\
50	2.9398296247648\\
51	-2.97357912957836\\
52	-2.56581433691575\\
53	-2.99539539613428\\
54	-2.20630451165301\\
55	-3.10293256187968\\
56	3.47645168038282\\
57	4.25140401267506\\
58	-1.81592106808466\\
59	4.85942068705098\\
60	1.82602298959581\\
61	4.30192615566447\\
62	0.708294246442202\\
63	4.80659334042704\\
64	-3.1517122871243\\
};
\addlegendentry{\cref{alg:noqr}}

\end{axis}
\end{tikzpicture}%
\caption{Ground truth with ALS estimate and mean of estimate from the ALS in a Bayesian framework with confidence region of $95\%$. Top: Estimate using one noisy sample. Bottom: Estimate using 100 noisy samples.}
\label{fig:signal}
\end{figure}
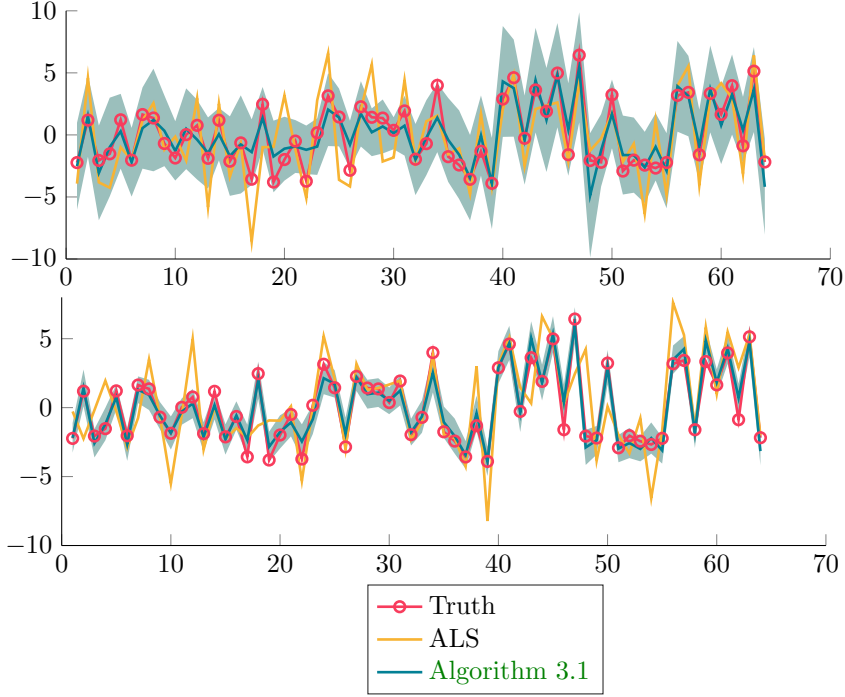

Now, we analyze the influence of the prior quality on the relative error. \Cref{fig:errorcontour} shows the relative error $\varepsilon_\mathrm{truth}$ of the ALS in a Bayesian framework for different priors. The prior mean is computed from
\begin{equation}
    \m_i^0 = \g_{i,\mathrm{truth}}+a\;\mathcal{N}(\mathbf{0},\id), \quad i=1,2,3
    \label{eq:mvar}
\end{equation}
where $\g_{i,\mathrm{truth}}$ denotes the vectorization of $\Gt_{i,\mathrm{truth}}$ and $a$ is a number that is set to values between 0 and 5. It determines how different the prior mean is from the ground truth. The prior covariance is computed from
\begin{equation}
    \cov_i^0 = b^2\;\id, , \quad i=1,2,3 \label{eq:covvar}
\end{equation}
by setting $b$ to values between 0 and 5. A small value means a high certainty and a large value means a low certainty on the prior mean. \Cref{fig:errorcontour} shows that the error is small if the prior mean is close to the ground truth and the covariance is small. For a bad prior and a small covariance, the error is a 100 percent or larger, since a high certainty for a bad prior is assumed. For comparison, the isoline (dashed line) corresponding to the mean relative error of the conventional ALS is shown in the graph, which is almost independent of the prior information. \\

\Cref{fig:errorcomparison} (left) shows the relative error of the reconstructed tensor versus the signal-to-noise ratio for a prior mean from \cref{eq:mvar} with $a=10^{-1}$ and prior covariance from \cref{eq:covvar} with $b = 10^{-1}$, meaning a good prior and a high certainty on the prior. While the ALS performs poorly for high noise, the ALS in a Bayesian framework results in small relative errors. For an increasing SNR, the relative error of the ALS in a Bayesian framework converges to the one of the ALS.\\

Further, \cref{fig:errorcomparison} (right) shows the ALS in comparison with the ALS in a Bayesian framework for multiple noisy samples. While the relative error $\varepsilon_\mathrm{truth}$ decreases for the ALS in a Bayesian framework, the conventional ALS does not improve, when more noisy samples become available.\\

\begin{figure}
    \centering
    \includegraphics[width = \textwidth]{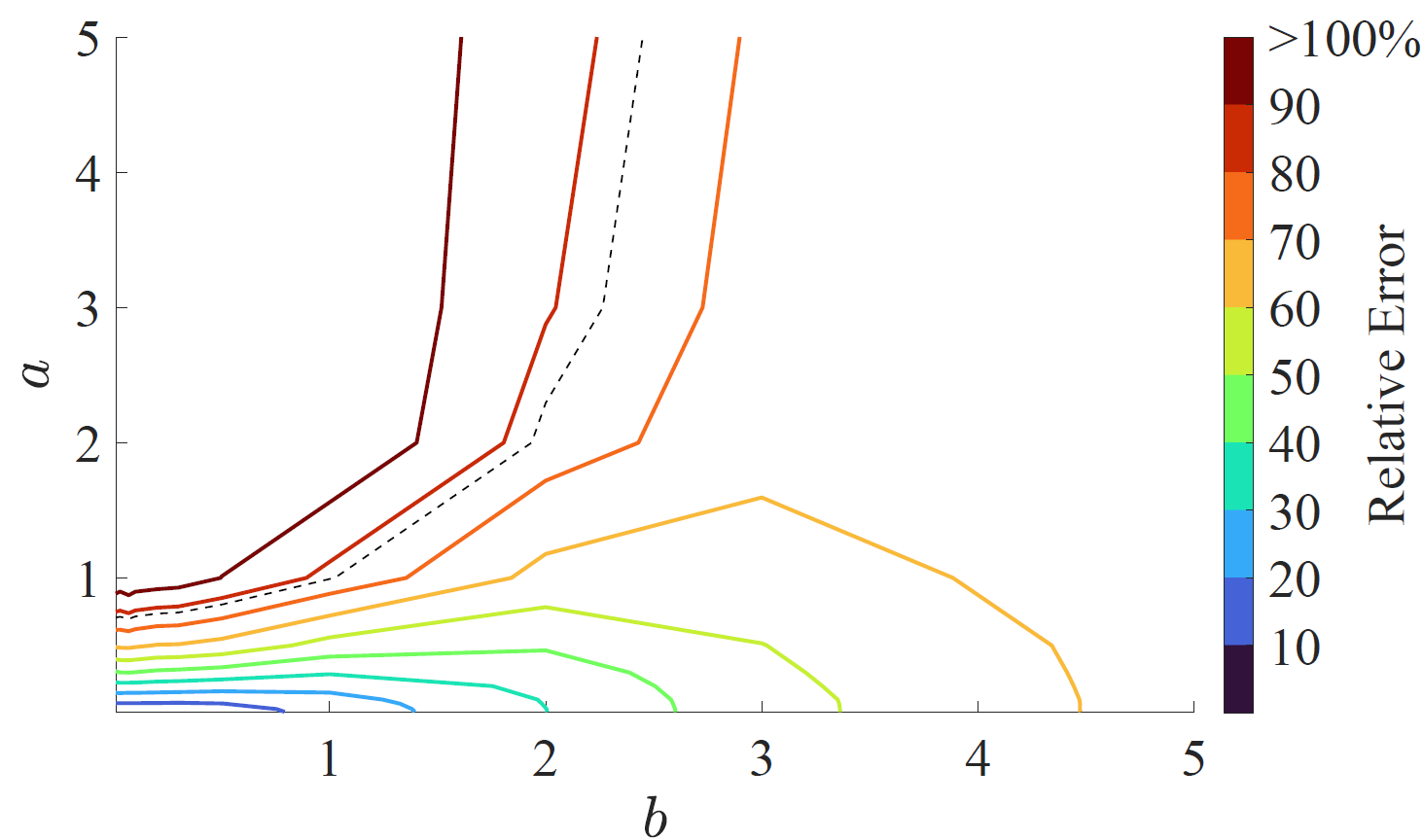}
    \caption{Relative error of the ALS in a Bayesian framework for different priors for $\mathrm{SNR_{dB}} = 0$. The $y$-axis indicates the similarity of the prior mean to the ground truth and the $x$-axis indicates the certainty on the prior mean. The dashed line corresponds to the isoline corresponding to the mean error of the conventional ALS.}
    \label{fig:errorcontour}
\end{figure}

\begin{figure}
    \centering
%
%
\definecolor{mycolor1}{rgb}{0.96471,0.91765,0.77255}%
\definecolor{mycolor2}{rgb}{0.96863,0.70196,0.20000}%
\definecolor{mycolor3}{rgb}{0.60784,0.74902,0.73725}%
\definecolor{mycolor4}{rgb}{0.01176,0.50196,0.58431}%
\begin{tikzpicture}

\begin{axis}[%
width=1.5in,
height=1.5in,
at={(0.758in,0.481in)},
scale only axis,
xmin=0,
xmax=25,
xlabel style={font=\color{white!15!black}},
xlabel={SNR [dB]},
ymode=log,
ymin=0.0362079046811488,
ymax=1,
yminorticks=true,
ylabel style={font=\color{white!15!black}},
ylabel={$\varepsilon_\mathrm{truth}$ [-]},
axis background/.style={fill=white},
axis x line*=bottom,
axis y line*=left,
legend style={at={(0.05,-.6)}, anchor=south west, legend cell align=left, align=left, draw=white!15!black}
]

\addplot[area legend, draw=none, fill=mycolor1, forget plot]
table[row sep=crcr] {%
x	y\\
0	0.876847591924684\\
2	0.697321506630369\\
4	0.53201400996354\\
6	0.434000145098293\\
8	0.305077153922594\\
10	0.249228321208557\\
12	0.20503670542\\
14	0.151406685569126\\
16	0.123383778742612\\
18	0.0968519543011521\\
20	0.0788328855871202\\
22	0.0604260611735893\\
24	0.049624105405521\\
24	0.036752437096316\\
22	0.0479850004601975\\
20	0.0583642826383213\\
18	0.0759275335039192\\
16	0.0949626922408405\\
14	0.120277244081322\\
12	0.158908480188789\\
10	0.200246698391029\\
8	0.260656883188378\\
6	0.321034492669589\\
4	0.429683739050378\\
2	0.52528358160361\\
0	0.665654877925023\\
}--cycle;
\addplot [color=mycolor2, line width=1.0pt]
  table[row sep=crcr]{%
0	0.771251234924854\\
2	0.611302544116989\\
4	0.480848874506959\\
6	0.377517318883941\\
8	0.282867018555486\\
10	0.224737509799793\\
12	0.181972592804394\\
14	0.135841964825224\\
16	0.109173235491726\\
18	0.0863897439025357\\
20	0.0685985841127208\\
22	0.0542055308168934\\
24	0.0431882712509185\\
};
\addlegendentry{ALS}

\addplot[area legend, draw=none, fill=mycolor3, forget plot]
table[row sep=crcr] {%
x	y\\
0	0.220518013842314\\
2	0.209167653306451\\
4	0.201952105636459\\
6	0.183657440429724\\
8	0.166618016912241\\
10	0.145192545093765\\
12	0.140673005660588\\
14	0.116255244387256\\
16	0.100493378050189\\
18	0.0878752041125155\\
20	0.074367535583596\\
22	0.0586689202779857\\
24	0.0486774994506944\\
24	0.0362079046811488\\
22	0.0454847112397888\\
20	0.05422335213618\\
18	0.0667572202581809\\
16	0.0779678223091345\\
14	0.0864481570697237\\
12	0.0977294457128061\\
10	0.105177275392085\\
8	0.102324716757585\\
6	0.123638393662739\\
4	0.106434604003889\\
2	0.132859843166898\\
0	0.131548763157419\\
}--cycle;
\addplot [color=mycolor4, line width=1.0pt]
  table[row sep=crcr]{%
0	0.176033388499867\\
2	0.171013748236675\\
4	0.154193354820174\\
6	0.153647917046232\\
8	0.134471366834913\\
10	0.125184910242925\\
12	0.119201225686697\\
14	0.10135170072849\\
16	0.0892306001796615\\
18	0.0773162121853482\\
20	0.064295443859888\\
22	0.0520768157588872\\
24	0.0424427020659216\\
};
\addlegendentry{\cref{alg:noqr}}

\end{axis}
\end{tikzpicture}%
    \input{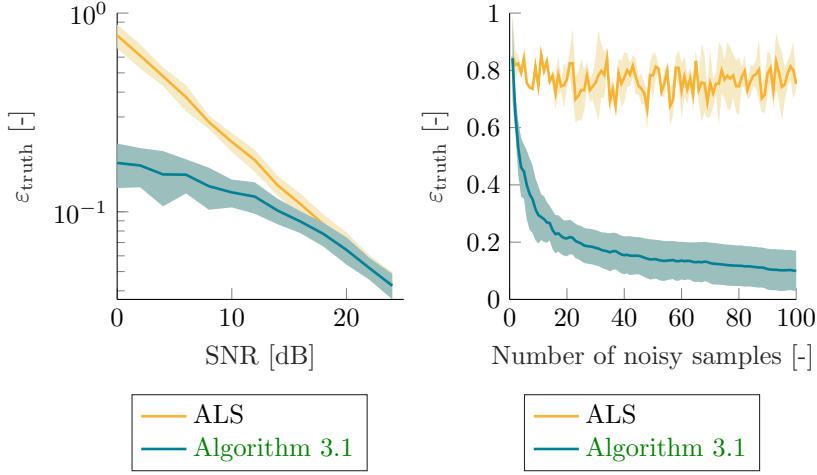}
    \caption{Left: Relative error $\varepsilon_\mathrm{truth}$ vs. signal-to-noise ratio with prior mean from \cref{eq:mvar} with $a=10^{-1}$ and prior covariance from \cref{eq:covvar} with to $b=10^{-1}$. Right: Comparison of the the relative error $\varepsilon_\mathrm{truth}$ between the ALS and the ALS in a Bayesian framework for different numbers of noisy samples.}
    \label{fig:errorcomparison}
\end{figure}

As shown, the ALS in a Bayesian framework gives better results if a good prior is available and it provides a measurement of the uncertainty and therefore additional valuable information. Also, if multiple noisy samples are available, ALS in a Bayesian framework significantly improves the estimate.

\subsection{Reconstruction of noisy image}
\label{sec:cat1}
To test \cref{alg:noqr} on an image processing problem, a cat image is reconstructed from an image corrupted with noise. \cref{fig:image2truth} shows the steps before applying \cref{alg:noqr}. The original image of size $256\times 256$ pixel is reshaped into an 8-way tensor, where each mode is of dimension 4. To obtain the TT-ranks, here we use the TT-SVD algorithm \cite{Oseledets2011}. It finds a TD that approximates the given tensor by setting an upper bound for the relative error. With an upper bound of $0.1$, the TT-ranks, depicted in \cref{fig:image2truth} are obtained. Finally, the ground truth is computed as the vectorized contracted TT. Now, ten noisy samples are formed with \cref{eq:noisyinstances} with a signal-to-noise ratio of $\mathrm{SNR}_\mathrm{dB}=0$.\\

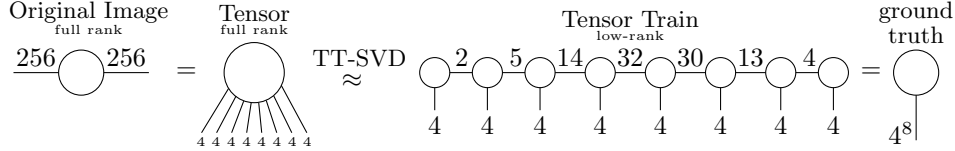
\begin{figure}
    \centering
    \begin{tikzpicture}
\draw  (-0.1,0.7) ellipse (0.3 and 0.3);
\draw (-0.4,0.7) -- (-1,0.7);
\draw (0.2,0.7) -- (0.8,0.7);
\node at (0.5,0.9) {256};
\node at (-0.7,0.9) {256};
\draw  (2.2,0.7) ellipse (0.4 and 0.4);
\draw (1.86,0.49) -- (1.5,-0.1);
\draw (1.96,0.39) -- (1.7,-0.1);
\draw (2.06,0.32) -- (1.9,-0.1);
\draw (2.16,0.3) -- (2.1,-0.1);
\draw (2.26,0.3) -- (2.3,-0.1);
\draw (2.36,0.33) -- (2.5,-0.1);
\draw (2.46,0.39) -- (2.7,-0.1);
\draw (2.56,0.53) -- (2.9,-0.1);
\node at (2.2,-0.22) {\tiny 4 4 4 4 4 4 4 4};
\draw  (4.6,0.7) ellipse (0.2 and 0.2);
\draw  (5.3,0.7) ellipse (0.2 and 0.2);
\draw  (6,0.7) ellipse (0.2 and 0.2);
\draw  (6.8,0.7) ellipse (0.2 and 0.2);
\draw  (7.6,0.7) ellipse (0.2 and 0.2);
\draw  (8.4,0.7) ellipse (0.2 and 0.2);
\draw  (9.2,0.7) ellipse (0.2 and 0.2);
\draw  (9.9,0.7) ellipse (0.2 and 0.2);
\draw (4.8,0.7) -- (5.1,0.7);
\draw (5.5,0.7) -- (5.8,0.7);
\draw (6.2,0.7) -- (6.6,0.7);
\draw (7,0.7) -- (7.4,0.7);
\draw (7.8,0.7) -- (8.2,0.7);
\draw (8.6,0.7) -- (9,0.7);
\draw (9.4,0.7) -- (9.7,0.7);
\draw (4.6,0.5) -- (4.6,0.2);
\draw (5.3,0.5) -- (5.3,0.2);
\draw (6,0.5) -- (6,0.2);
\draw (6.8,0.5) -- (6.8,0.2);
\draw (7.6,0.5) -- (7.6,0.2);
\draw (8.4,0.5) -- (8.4,0.2);
\draw (9.2,0.5) -- (9.2,0.2);
\draw (9.9,0.5) -- (9.9,0.2);

\node at (0,1.5) {\small Original Image};
\node at (0,1.28) {\tiny full rank};
\node at (2.2,1.5) {\small Tensor};
\node at (2.2,1.28) {\tiny full rank};
\node at (7.2,1.42) {\small Tensor Train};
\node at (7.2,1.2) {\tiny low-rank};
\node at (3.6,0.9) {\small TT-SVD};
\node at (4.96,0.9) {2};
\node at (5.68,0.9) {5};
\node at (6.4,0.9) {14};
\node at (7.2,0.9) {32};
\node at (8,0.9) {30};
\node at (8.8,0.9) {13};
\node at (9.58,0.9) {4};
\node at (4.6,0) {4};
\node at (5.3,0) {4};
\node at (6,0) {4};
\node at (6.8,0) {4};
\node at (7.6,0) {4};
\node at (8.4,0) {4};
\node at (9.2,0) {4};
\node at (9.9,0) {4};

\draw  (11,0.7) ellipse (0.3 and 0.3);
\draw (11,0.4) -- (11,-0.2);
\node at (10.8,-0.1) {$4^8$};
\node at (1.3,0.7) {=};
\node at (10.4,0.7) {=};
\node at (3.5,0.6) {$\approx$};
\node at (11,1.5) {\small ground};
\node at (11,1.2) {\small truth};
\end{tikzpicture}
    \caption{Computation of ground truth from original image: The original image of size $256\times 256$ pixel is reshaped into an 8-way tensor, where each mode is of dimension 4. Then, the TT-SVD algorithm \cite{Oseledets2011} with an upper bound for the relative error of $0.1$ is applied, resulting in the depicted TT-ranks. Finally, the ground truth is obtained as the vectorized contracted TT.}
    \label{fig:image2truth}
\end{figure}

\cref{fig:catinput}a) shows the original image and \cref{fig:catinput}b) the low-rank image, which is obtained by reshaping the low-rank TT from the TT-SVD into the size of the original image. \cref{fig:catinput}c) shows one exemplary noisy sample $\y$ reshaped into the dimensions of the original image. As a stopping criterion, we used the maximum number of iterations of 3. \Cref{fig:catCla} left shows the reconstruction of the image with the conventional ALS using one noisy sample and on the right using ten noisy samples. \Cref{fig:catBay} shows the reconstruction of the image inputting a random prior mean and a prior covariance on each core of $1000^2\id$ and using one and ten noisy samples. For the ALS in a Bayesian framework, it is shown that the image gets clearer with a higher number of noisy samples $\y$, confirmed by the decreasing relative error $\varepsilon_\mathrm{truth}$ from 0.3127 to 0.1478. The relative error of the conventional ALS only decreases slightly from 0.3664 to 0.3088.
\begin{figure}
    \centering
    \includegraphics[width=\textwidth]{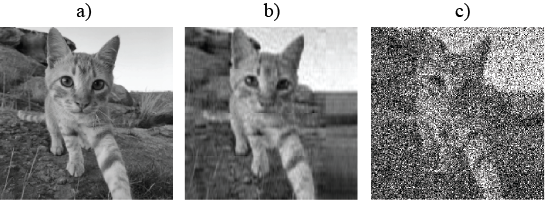}
    \caption{a) Original image, b) Image approximated with the TT-SVD algorithm \cite{Oseledets2011} with an upper bound for the relative error of $0.1$, c) One noisy sample (low-rank image corrupted with random noise).}
    \label{fig:catinput}
\end{figure}

\begin{figure}
    \centering
    \includegraphics[width=.6\textwidth]{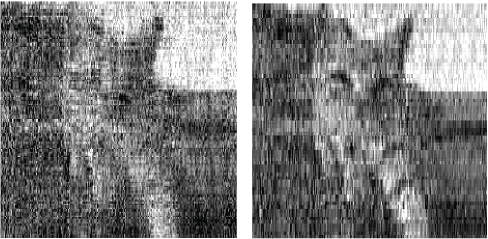}
    \caption{Reconstructed image with conventional ALS algorithm. Left: using one noisy sample. Right: using ten noisy samples.}
    \label{fig:catCla}
\end{figure}

\begin{figure}
    \centering
    \includegraphics[width=.6\textwidth]{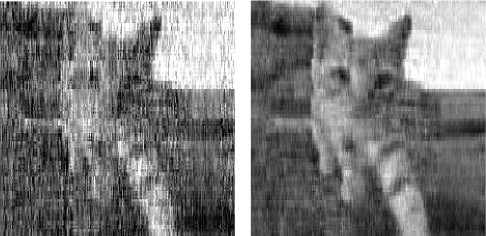}
    \caption{Reconstructed image with ALS in a Bayesian framework (\cref{alg:noqr}). Left: using one noisy sample. Right: using ten noisy samples.}
    \label{fig:catBay}
\end{figure}

\subsection{Large-scale experiment}
In this experiment, we demonstrate that \cref{alg:noqr} also works with larger tensors. The cat image from \cref{sec:cat1} in color is up-scaled via bi-cubic interpolation to obtain a $6000\times4000\times3$ tensor as depicted in \cref{fig:catlarge}a). Next, we find a low-rank approximation of the image by first applying the TKPSVD algorithm~\cite{Batselier2017b}. The TKPSVD decomposes a tensor $\mathbfcal{A}$ into a sum of multiple Kronecker products of $N$ tensors $\mathbfcal{A}^{(n)}_r$
\begin{equation*}
    \mathbfcal{A} = \sum_{r=1}^R \lambda_r\, \mathbfcal{A}^{(N)}_r \otimes \cdots \otimes \mathbfcal{A}^{(1)}_r
\end{equation*}
where $\lambda_r\in\mathbb{R}$. We approximate the image by taking only the term with the largest $\lambda_r$ and $N=5$,
\begin{equation*}
    \mathbfcal{A} \approx \lambda_{\max}\, \mathbfcal{A}^{(5)}_1 \otimes \mathbfcal{A}^{(4)}_1 \otimes \mathbfcal{A}^{(3)}_1 \otimes \mathbfcal{A}^{(2)}_1 \otimes \mathbfcal{A}^{(1)}_1,
\end{equation*}
where $\lambda_{\max}=\lambda_1$. The resulting Kronecker products is of dimensions
\begin{equation*}
    ( 375 \times 250 \times 3)\otimes (2 \times 2 \times 1)\otimes (2 \times 2 \times 1) \otimes (2\times 2 \times 1) \otimes (2 \times 2 \times 1),
\end{equation*}
as depicted in the top part of \cref{fig:catranks}. Secondly, $\mathbfcal{A}_1^{(5)}\in\mathbb{R}^{375\times250\times3}$ is further decomposed with the TT-SVD algorithm with an upper bound of the relative error of $0.08$, where the dimensions are factorized as shown in the lower part of \cref{fig:catranks}. The resulting low-rank approximation of the image is shown in \cref{fig:catlarge}b) and the noisy image, created with an $\mathrm{SNR}=-22$, is shown in \cref{fig:catlarge}c). We use \cref{alg:noqr} with a random prior mean and $\cov_i^0 = 10000^2\id$. \Cref{fig:catlarge}d) shows the reconstructed image after 30 iterations in \cref{alg:noqr}. The main computational bottleneck is the inversion of the covariance matrix of the largest TD component (line 4 of \cref{alg:noqr}). Thus, the number of elements of a TD component, dependent on its ranks, is the limiting factor for the computational complexity. In this case, the largest TT-core has $13\cdot25\cdot18=5850$ elements, see second last TT-core in the lower part of \cref{fig:catranks}.

\begin{figure}
    \centering
    \includegraphics[width=\textwidth]{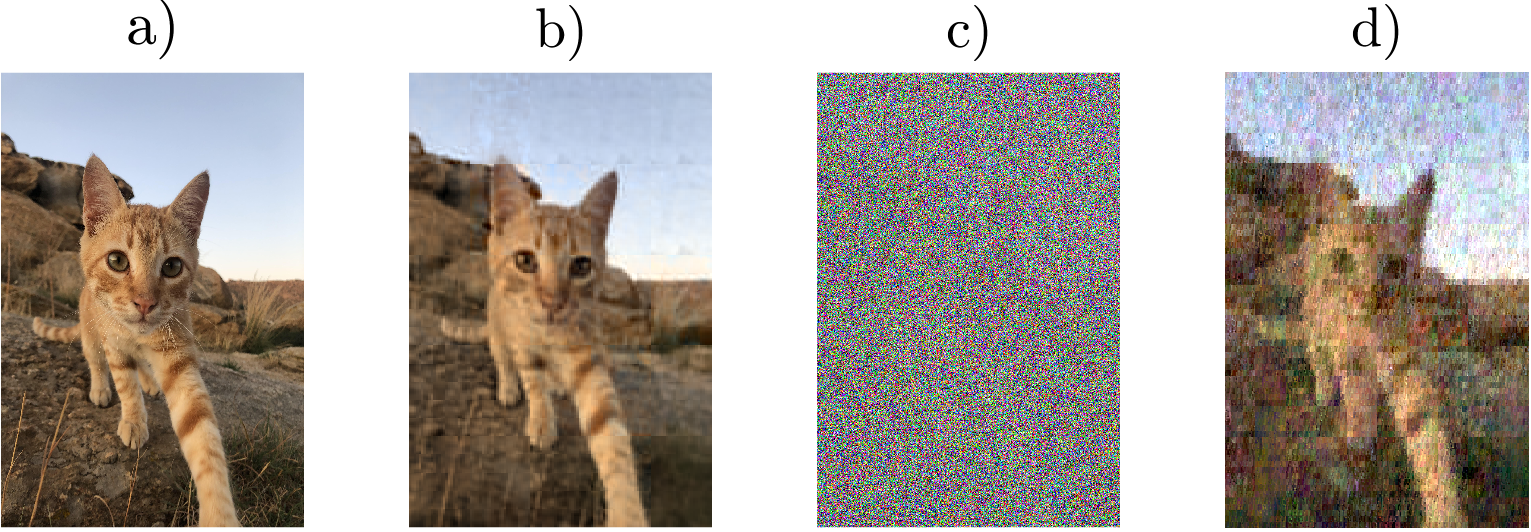}
    \caption{a) Original image, b) Low-rank image, c) Noisy image (low-rank image corrupted with random noise, with an $\mathrm{SNR=-22}$), d) Reconstructed image.}
    \label{fig:catlarge}
\end{figure}

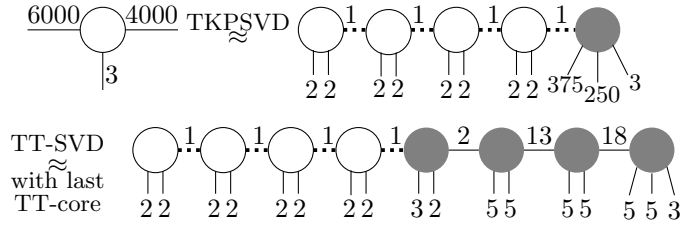
\begin{figure}
    \centering
    \begin{tikzpicture}
\draw  (-0.1,0.7) ellipse (0.3 and 0.3);
\draw (-0.4,0.7) -- (-1.1,0.7);
\draw (0.2,0.7) -- (0.9,0.7);
\node at (0.55,0.9) {4000};
\node at (-0.75,0.9) {6000};

\draw  (0.6,-0.9) ellipse (0.3 and 0.3);
\draw  (1.5,-0.9) ellipse (0.3 and 0.3);
\draw  (2.4,-0.9) ellipse (0.3 and 0.3);
\draw  (3.3,-0.9) ellipse (0.3 and 0.3);
\fill [color=gray]   (4.2,-0.9) ellipse (0.3 and 0.3);
\fill [color=gray]   (5.2,-0.9) ellipse (0.3 and 0.3);
\fill [color=gray]   (6.2,-0.9) ellipse (0.3 and 0.3);
\fill [color=gray]   (7.2,-0.9) ellipse (0.3 and 0.3);
\draw [dotted,line width=0.5mm](0.9,-0.9) -- (1.2,-0.9);
\draw [dotted,line width=0.5mm](1.8,-0.9) -- (2.1,-0.9);
\draw [dotted,line width=0.5mm](2.7,-0.9) -- (3,-0.9);
\draw [dotted,line width=0.5mm](3.6,-0.9) -- (3.9,-0.9);
\draw (4.5,-0.9) -- (4.9,-0.9);
\draw (5.5,-0.9) -- (5.9,-0.9);
\draw (6.5,-0.9) -- (6.9,-0.9);
\draw (0.7,-1.18) -- (0.7,-1.48);
\draw (1.6,-1.18) -- (1.6,-1.48);
\draw (2.5,-1.18) -- (2.5,-1.48);
\draw (3.4,-1.18) -- (3.4,-1.48);
\draw (4.3,-1.18) -- (4.3,-1.48);
\draw (5.3,-1.18) -- (5.3,-1.48);
\draw (6.3,-1.18) -- (6.3,-1.48);
\draw (7,-1.13) -- (6.9,-1.5);

\node at (-0.7,-0.8) {\small TT-SVD};

\node at (1.06,-0.7) {1};
\node at (1.98,-0.7) {1};
\node at (2.9,-0.7) {1};
\node at (3.8,-0.7) {1};
\node at (4.7,-0.7) {2};
\node at (5.7,-0.7) {13};
\node at (6.7,-0.7) {18};
\node at (0.72,-1.68) {2};
\node at (1.62,-1.68) {2};
\node at (2.52,-1.68) {2};
\node at (3.42,-1.68) {2};
\node at (4.32,-1.68) {2};
\node at (5.32,-1.68) {5};
\node at (6.32,-1.68) {5};

\node at (-0.7,-1.1) {$\approx$};
\draw (-0.1,0.4) -- (-0.1,-0.1);
\node at (0.02,0.1) {3};
\draw (7.2,-1.21) -- (7.2,-1.48);
\draw (7.4,-1.12) -- (7.5,-1.5);
\draw (0.5,-1.18) -- (0.5,-1.48);
\draw (1.4,-1.18) -- (1.4,-1.48);
\draw (2.3,-1.18) -- (2.3,-1.48);
\draw (3.2,-1.17) -- (3.2,-1.5);
\draw (4.1,-1.18) -- (4.1,-1.48);
\draw (5.1,-1.18) -- (5.1,-1.48);
\draw (6.1,-1.18) -- (6.1,-1.48);
\node at (0.48,-1.68) {2};
\node at (1.38,-1.68) {2};
\node at (2.28,-1.68) {2};
\node at (3.18,-1.68) {2};
\node at (4.08,-1.68) {3};
\node at (5.08,-1.68) {5};
\node at (6.08,-1.68) {5};

\draw  (2.8,0.7) ellipse (0.3 and 0.3);
\draw  (3.7,0.67) ellipse (0.3 and 0.3);
\draw  (4.6,0.7) ellipse (0.3 and 0.3);
\draw  (5.5,0.7) ellipse (0.3 and 0.3);
\fill  [color=gray](6.48,0.7) ellipse (0.3 and 0.3);
\draw [dotted,line width=0.5mm](3.1,0.7) -- (3.4,0.7);
\draw [dotted,line width=0.5mm](4,0.7) -- (4.3,0.7);
\draw [dotted,line width=0.5mm](4.9,0.7) -- (5.2,0.7);
\draw [dotted,line width=0.5mm](5.8,0.7) -- (6.18,0.7);
\draw (2.7,0.43) -- (2.7,0.1);
\draw (2.9,0.43) -- (2.9,0.1);
\draw (3.6,0.4) -- (3.6,0.08);
\draw (3.8,0.4) -- (3.8,0.08);
\draw (4.5,0.43) -- (4.5,0.1);
\draw (4.7,0.43) -- (4.7,0.1);
\draw (5.4,0.43) -- (5.4,0.1);
\draw (5.6,0.43) -- (5.6,0.1);
\draw (6.28,0.49) -- (6.08,0.1);
\draw (6.48,0.41) -- (6.48,0);
\draw (6.68,0.5) -- (6.88,0.1);
\node at (3.2,0.9) {1};
\node at (4.1,0.9) {1};
\node at (5.1,0.9) {1};
\node at (6,0.9) {1};
\node at (2.68,-0.1) {2};
\node at (2.92,-0.1) {2};
\node at (3.58,-0.08) {2};
\node at (3.82,-0.08) {2};
\node at (4.47,-0.1) {2};
\node at (4.71,-0.1) {2};
\node at (5.37,-0.1) {2};
\node at (5.61,-0.1) {2};
\node at (6.05,-0.03) {\small 375};
\node at (6.54,-0.16) {\small 250};
\node at (6.98,0) {\small 3};
\node at (1.7,0.6) {$\approx$};
\node at (1.7,0.8) {\small TKPSVD};
\node at (-0.7,-1.3) {\small with last };

\node at (6.9,-1.7) {5};
\node at (7.5,-1.7) {3};
\node at (7.2,-1.7) {5};
\node at (-0.7,-1.6) {\small TT-core};
\end{tikzpicture}
    \caption{Determination of the TT-ranks by computing a low-rank decomposition with the TKPSVD and then decomposing the last TT-core (gray) further with the TT-SVD.}
    \label{fig:catranks}
\end{figure}
\section{Conclusions}
We approached the computation of low-rank tensor decomposition from a Bayesian perspective. Assuming Gaussian priors for the TD components and Gaussian measurement noise and by applying a block coordinate descent, we were able to perform a tractable inference and compute the posterior joint distribution of the TD components. This leads to a probabilistic interpretation of the ALS. The distribution of the underlying low-rank tensor was computed with the unscented transform in tensor train format. We found that the relative error of the resulting low-rank tensor approximation depends strongly on the quality of the prior distribution. In addition, our method opens up for a recursive estimation of a tensor from a sequence of noisy measurements of the same underlying tensor. If no useful prior information is available, the method gives the same result as the conventional ALS. Our method will perform worse than the conventional ALS, if a small covariance in assumed for a bad prior mean. Future work could focus on incorporating the inference of the ranks which for the ALS are fixed and therefore need to be decided beforehand. Also, the method could be extended to a non-Gaussian prior and the UT algorithm could be further developed, e.g.\ by parallelizing the code to make it computationally more efficient for large data sets. 

\bibliographystyle{siamplain}
\bibliography{main.bib}

\begin{thebibliography}{10}

\bibitem{Batselier2017}
{\sc K.~Batselier, Z.~Chen, and N.~Wong}, {\em {Tensor Network alternating
  linear scheme for MIMO Volterra system}}, Automatica, 84 (2017), pp.~26--35.

\bibitem{Batselier2017b}
{\sc K.~Batselier and N.~Wong}, {\em {A constructive arbitrary-degree Kronecker
  product}}, Numerical Linear Algebra with Applications, 24 (2017), pp.~1--17.

\bibitem{Caiafa2015}
{\sc C.~F. Caiafa and A.~Cichocki}, {\em {Stable, robust, and super fast
  reconstruction of tensors using multi-way projections}}, IEEE Transactions on
  Signal Processing, 63 (2015), pp.~780--793.

\bibitem{Carroll1970}
{\sc J.~D. Carroll and J.~J. Chang}, {\em {Analysis of individual differences
  in multidimensional scaling via an n-way generalization of "Eckart-Young"
  decomposition}}, Psychometrika, 35 (1970), pp.~283--319.

\bibitem{Chen2019}
{\sc C.~Chen, K.~Batselier, C.-Y. Ko, and N.~Wong}, {\em {A Support Tensor
  Train Machine}}, in Proceedings of the International Joint Conference on
  Neural Networks, no.~July, IEEE, 2019, pp.~1--8.

\bibitem{Cheng2017}
{\sc L.~Cheng, Y.-c. Wu, and H.~V. Poor}, {\em {Probabilistic Tensor Canonical
  Polyadic Decomposition With Orthogonal Factors}}, IEEE Transactions on Signal
  Processing, 65 (2017), pp.~663--676.

\bibitem{Chu2009}
{\sc W.~Chu and Z.~Ghahramani}, {\em {Probabilistic models for incomplete
  multi-dimensional arrays}}, Journal of Machine Learning Research, 5 (2009),
  pp.~89--96.

\bibitem{Cichocki2016}
{\sc A.~Cichocki, N.~Lee, I.~V. Oseledets, A.-H. Phan, Q.~Zhao, and D.~P.
  Mandic}, {\em {Tensor networks for dimensionality reduction and large-scale
  optimization Part 1 low-rank tensor decompositions}}, Foundations and Trends
  in Machine Learning, 9 (2016), pp.~249--429.

\bibitem{Cichocki2017}
{\sc A.~Cichocki, A.-H. Phan, Q.~Zhao, N.~Lee, I.~V. Oseledets, M.~Sugiyama,
  and D.~P. Mandic}, {\em {Tensor networks for dimensionality reduction and
  large-scale optimizations: Part 2 applications and future perspectives}},
  Foundations and Trends in Machine Learning, 9 (2017), pp.~431--673.

\bibitem{Cohen2016}
{\sc N.~Cohen, O.~Sharir, and A.~Shashua}, {\em {On the expressive power of
  deep learning: A tensor analysis}}, Journal of Machine Learning Research, 49
  (2016), pp.~698--728.

\bibitem{Comon2009}
{\sc P.~Comon, X.~Luciani, and A.~de~Almeida}, {\em {Tensor Decompositions,
  Alternating Least Squares and other Tales}}, Journal of Chemometrics, 23
  (2009), pp.~393--405.

\bibitem{Dolgov2014}
{\sc S.~V. Dolgov and D.~Savostyanov}, {\em {Alternating Minimal Energy Methods
  for Linear Systems in Higher Dimensions}}, SIAM Journal on Scientific
  Computing, 36 (2014), pp.~A2248--A2271.

\bibitem{Favier2012}
{\sc G.~Favier, A.~Y. Kibangou, and T.~Bouilloc}, {\em {Nonlinear system
  modeling and identification using Volterra-PARAFAC models}}, International
  Journal of Adaptive Control and Signal Processing, 26 (2012), pp.~30--53.

\bibitem{Grasedyck2015}
{\sc L.~Grasedyck, M.~Kluge, and S.~Kr{\"{a}}mer}, {\em {Variants of
  Alternating Least Squares Tensor Completion in the Tensor Train Format}},
  SIAM Journal on Scientific Computing, 37 (2015), p.~A2424–A2450.

\bibitem{Harshman1970}
{\sc R.~Harshman}, {\em {Foundations of the PARAFAC procedure: Models and
  conditions for an “explanatory” multimodal factor analysis}}, UCLA
  Working Papers in Phonetics, 16 (1970), pp.~1-- 84.

\bibitem{Haykin2001}
{\sc S.~S. Haykin}, {\em {Kalman Filtering and Neural Networks}}, John Wiley
  {\&} Sons, Inc., 2001.

\bibitem{Hinrich2020}
{\sc J.~L. Hinrich, K.~H. Madsen, and M.~M{\o}rup}, {\em {The probabilistic
  tensor decomposition toolbox}}, 2020.

\bibitem{Hinrich2019}
{\sc J.~L. Hinrich and M.~M{\o}rup}, {\em {Probabilistic tensor train
  decomposition}}, in Proceedings of the 27th European Signal Processing
  Conference, 2019.

\bibitem{Hoff2016}
{\sc P.~D. Hoff}, {\em {Equivariant and Scale-Free Tucker Decomposition
  Models}}, International Society for Bayesian Analysis, 11 (2016),
  pp.~627--648.

\bibitem{Rohwedder2012}
{\sc S.~Holtz and R.~Rohwedder, Thorsten~Schneider}, {\em {The Alternating
  Linear Scheme for Tensor Optimization in the Tensor Train Format}}, SIAM
  Journal on Scientific Computing, 34 (2012), pp.~A683--A713.

\bibitem{Izmailov2018a}
{\sc P.~A. Izmailov, A.~V. Novikov, and D.~A. Kropotov}, {\em {Scalable
  Gaussian Processes with Billions of Inducing Inputs via Tensor Train
  Decomposition}}, Proceeding of International Conference on Artificial
  Intelligence and Statistics, 84 (2018), pp.~726--735.

\bibitem{Julier2004}
{\sc S.~J. Julier and J.~K. Uhlmann}, {\em {Unscented filtering and nonlinear
  estimation}}, in Proceedings of the IEEE, vol.~92, 2004, p.~1958.

\bibitem{Kolda2009}
{\sc T.~G. Kolda and B.~W. Bader}, {\em {Tensor decompositions and
  applications}}, SIAM Review, 51 (2009), pp.~455--500.

\bibitem{Nocedal2006}
{\sc J.~Nocedal and S.~J. Wright}, {\em {Numerical optimization}}, Springer,
  New York, NY, USA, second~ed., 2006.

\bibitem{Novikov2015}
{\sc A.~Novikov, D.~Podoprikhin, A.~Osokin, and D.~Vetrov}, {\em {Tensorizing
  neural networks}}, in Advances in Neural Information Processing Systems,
  2015, pp.~442--450.

\bibitem{Oseledets2010}
{\sc I.~V. Oseledets}, {\em {Approximation of 2d x 2d Matrices using Tensor
  Decomposition}}, SIAM Journal on Matrix Analysis and Applications, 31 (2010),
  pp.~2130--2145.

\bibitem{Oseledets2011}
{\sc I.~V. Oseledets}, {\em {Tensor-train decomposition}}, SIAM Journal on
  Scientific Computing, 33 (2011), pp.~2295--2317.

\bibitem{Oseledets2012}
{\sc I.~V. Oseledets and S.~Dolgov}, {\em {Solution of Linear Systems and
  Matrix Inversion in the TT-format}}, SIAM Journal on Scientific Computing, 34
  (2012), pp.~A2718--A2739.

\bibitem{Rai2015}
{\sc P.~Rai, Y.~Wang, and L.~Carin}, {\em {Leveraging features and networks for
  probabilistic tensor decomposition}}, in Proceedings of the National
  Conference on Artificial Intelligence, vol.~4, 2015, pp.~2942--2948.

\bibitem{Rai2014}
{\sc P.~Rai, Y.~Wang, S.~Guo, G.~Chen, D.~Dunson, and L.~Carin}, {\em {Scalable
  Bayesian low-rank decomposition of incomplete multiway tensors}}, in
  Proceedings of the 31st International Conference on Machine Learning, vol.~5,
  2014, pp.~3810--3820.

\bibitem{Rohwedder2013}
{\sc T.~Rohwedder and A.~Uschmajew}, {\em {On local convergence of alternating
  schemes for optimization of convex problems in the tensor train format}},
  SIAM Journal on Numerical Analysis, 51 (2013), pp.~1134--1162.

\bibitem{Sarkka2010}
{\sc S.~S{\"{a}}rkk{\"{a}}}, {\em {Bayesian filtering and smoothing}},
  Cambridge University Press, 2013.

\bibitem{Schollwock2011}
{\sc U.~Schollw{\"{o}}ck}, {\em {The density-matrix renormalization group in
  the age of matrix product states}}, Annals of Physics, 326 (2011),
  pp.~96--192.

\bibitem{Sidiropoulos2017}
{\sc N.~D. Sidiropoulos, L.~De~Lathauwer, X.~Fu, K.~Huang, E.~E. Papalexakis,
  and C.~Faloutsos}, {\em {Tensor Decomposition for Signal Processing and
  Machine Learning}}, IEEE Transactions on Signal Processing, 65 (2017),
  pp.~3551--3582.

\bibitem{Signoretto2011a}
{\sc M.~Signoretto, L.~De~Lathauwer, and J.~A. Suykens}, {\em {A kernel-based
  framework to tensorial data analysis}}, Neural Networks, 24 (2011),
  pp.~861--874.

\bibitem{Song2019}
{\sc Q.~Song, H.~Ge, J.~Caverlee, and X.~Hu}, {\em {Tensor completion
  algorithms in big data analytics}}, ACM Transactions on Knowledge Discovery
  from Data, 13 (2019).

\bibitem{Tucker1966}
{\sc L.~R. Tucker}, {\em {Some Mathematical Notes on Three-Mode Factor
  Analysis}}, Psychometrika, 31 (1966), pp.~279--311.

\bibitem{Xiong2010}
{\sc L.~Xiong, X.~Chen, T.-K. Huang, J.~Schneider, and J.~G. Carbonell}, {\em
  {Temporal collaborative filtering with Bayesian probabilistic tensor
  factorization}}, in Proceedings of the 10th SIAM International Conference on
  Data Mining, 2010, pp.~211--222.

\bibitem{Xu2015}
{\sc Z.~Xu, F.~Yan, and Y.~Qi}, {\em {Bayesian nonparametric models for
  multiway data analysis}}, IEEE Transactions on Pattern Analysis and Machine
  Intelligence, 37 (2015), pp.~475--487.

\bibitem{Zhao2013}
{\sc Q.~Zhao, L.~Zhang, and A.~Cichocki}, {\em {A tensor-variate gaussian
  process for classification of multidimensional structured data}}, Proceedings
  of the 27th AAAI Conference on Artificial Intelligence, AAAI 2013,  (2013),
  pp.~1041--1047.

\bibitem{Zhao2015}
{\sc Q.~Zhao, L.~Zhang, and A.~Cichocki}, {\em {Bayesian CP factorization of
  incomplete tensors with automatic rank determination}}, IEEE Transactions on
  Pattern Analysis and Machine Intelligence, 37 (2015), pp.~1751--1763.

\bibitem{Zhao2015a}
{\sc Q.~Zhao, L.~Zhang, and A.~Cichocki}, {\em {Bayesian Sparse Tucker Models
  for Dimension Reduction and Tensor Completion}}, ArXiv ID: 1505.02343,
  (2015), pp.~1--13.

\bibitem{Zhao2016}
{\sc Q.~Zhao, G.~Zhou, L.~Zhang, A.~Cichocki, and S.~I. Amari}, {\em {Bayesian
  Robust Tensor Factorization for Incomplete Multiway Data}}, IEEE Transactions
  on Neural Networks and Learning Systems, 27 (2016), pp.~736--748.

\end{thebibliography}
\end{document}